\definecolor{codegreen}{rgb}{0,0.6,0}
\definecolor{codegray}{rgb}{0.5,0.5,0.5}
\definecolor{codepurple}{rgb}{0.58,0,0.82}
\definecolor{backcolour}{rgb}{0.95,0.95,0.92}
\lstdefinestyle{mystyle}{
    backgroundcolor=\color{backcolour},
    commentstyle=\color{codegreen},
    keywordstyle=\color{magenta},
    numberstyle=\tiny\color{codegray},
    stringstyle=\color{codepurple},
    basicstyle=\ttfamily\footnotesize,
    breakatwhitespace=false,
    breaklines=true,
    captionpos=b,
    keepspaces=true,
    numbers=left,
    numbersep=5pt,
    showspaces=false,
    showstringspaces=false,
    showtabs=false,
    tabsize=2
}
\def\bX{\mathbf{X}}
\def\bZ{\mathbf{Z}}
\def\bG{\mathbf{G}}
\def\bM{\mathbf{M}}
\def\bN{\mathbf{N}}
\def\bP{\mathbf{P}}
\def\bI{\mathbf{I}}
\def\by{\mathbf{y}}
\def\ba{\mathbf{a}}
\def\bbeta{\bm{\beta}}
\def\bvarepsilon{\bm{\varepsilon}}
\def\bSigma{\bm{\Sigma}}
\def\Etest{\mathcal{E}_{\mathtt{test}}}
\def\Etrain{\mathcal{E}_{\mathtt{train}}}
\def\MNNI{\underline{\smash{\bbeta}}}
\def\RREG{\hat{\bbeta}}
\theoremstyle{plain}
\theoremstyle{definition}
\newtheorem{theorem}{Theorem}[section]
\newtheorem{proposition}[theorem]{Proposition}
\newtheorem{lemma}[theorem]{Lemma}
\newtheorem{definition}[theorem]{Definition}
\newtheorem{assumption}[theorem]{Assumption}
\newtheorem{example}[theorem]{Example}
\theoremstyle{remark}
\newtheorem{remark}[theorem]{Remark}
\crefname{theorem}{theorem}{theorems}
\Crefname{theorem}{Theorem}{Theorems}
\crefname{proposition}{Prop.}{pProp.}
\Crefname{proposition}{Proposition}{Propositions}
\crefname{lemma}{lemma}{lemmas}
\Crefname{lemma}{Lemma}{Lemmas}
\crefname{corollary}{corollary}{corollaries}
\Crefname{corollary}{Corollary}{Corollaries}
\crefname{definition}{definition}{definitions}
\Crefname{definition}{Definition}{Definitions}
\crefname{assumption}{assumption}{assumptions}
\Crefname{assumption}{Assumption}{Assumptions}
\crefname{example}{example}{examples}
\Crefname{example}{Example}{Examples}
\crefname{remark}{remark}{remarks}
\Crefname{remark}{Remark}{Remarks}
\begin{document}

\iftoggle{conferenceVersion}{
    
\runningtitle{Near-Interpolators}
\runningauthor{Wang, Sonthalia, Hu}

\twocolumn[
\aistatstitle{Near-Interpolators: Rapid Norm Growth and\\the Trade-Off between Interpolation and Generalization}
\aistatsauthor{ Yutong Wang \And Rishi Sonthalia \And Wei Hu }
\aistatsaddress{ MIDAS, University of Michigan \And UCLA \And University of Michigan }
]

}{
    \maketitle
}

\begin{abstract}
  We study the generalization capability of nearly-interpolating linear regressors:  $\bm{\beta}$'s whose training error $\tau$  is  positive but small, i.e., below the noise floor.
Under a random matrix theoretic assumption on the data distribution
and an eigendecay assumption on
the data covariance matrix $\bm{\Sigma}$,
we demonstrate that any near-interpolator exhibits rapid norm growth: for $\tau$ fixed, $\bm{\beta}$ has squared $\ell_2$-norm $\mathbb{E}[\|{\bm{\beta}}\|_{2}^{2}] = \Omega(n^{\alpha})$ where $n$ is the number of samples
and $\alpha >1$ is the exponent of the eigendecay, i.e.,
$\lambda_i(\bm{\Sigma}) \sim i^{-\alpha}$.
This implies that existing data-independent norm-based bounds are necessarily loose.
On the other hand, in the same regime we precisely characterize  the asymptotic trade-off between interpolation and generalization.
Our characterization reveals that
larger norm scaling exponents $\alpha$ correspond to  worse trade-offs between interpolation and generalization.
We verify empirically that a similar phenomenon holds for nearly-interpolating shallow neural networks.

\end{abstract}

\section{INTRODUCTION}\label{section:introduction}
 Regularization \citep{nakkiran2020optimal} and early stopping \citep{ji2021early} are techniques to mitigate the effect of harmful overfitting
by training models to \emph{nearly}, rather than perfectly, interpolate the training data.
A key question is: \emph{how do near-interpolators generalize?}

A long line of work has investigated this question for \emph{perfect}-interpolators.
\cite{zhang2017understanding} noted the surprising phenomon that, even with noise,
perfect-interpolators do not necessarily overfit catastrophically, and can still generalize to some extent.
The phenomenon is later formalized as ``benign overfitting'' and proven to hold in linear regresion in \cite{bartlett2020benign}.
\cite{mallinar2022benign} introduced the more nuanced notion of \emph{tempered} overfitting
which is closer to the empirical observation in \cite{zhang2017understanding} that the test error of perfect-interpolators do degrade somewhat.
\cite{koehler2021uniform} establish a setting under which benign overfitting \emph{can} be explained by uniform bounds.
However, to the best of our knowledge,
no work have studied generalization of near-interpolating linear regression
\footnote{
\cite{ghosh2022universal} establishes
  \emph{lower bound} for the interpolation-genearlization trade-off, which is related to but distinct from our contributions.
}.

\textbf{Our results}. Under random matrix-theoretic and power-law spectra assumptions, we
prove that
nearly-interpolating ridge regressors have norms that grows rapidly
(\Cref{theorem:polynomial-lower-bound-specialized}),
implying that existing (non-asymptotic) generalization bounds are loose
(\Cref{section:looseness}).
Moreover, we
derive the exact formula relating the large-sample limit training and the testing error
(\Cref{theorem:trade-off}), using the eigenlearning framework of \cite{simon2021neural}.
Finally, we show that our theoretical results on near-interpolating ridge regressors are relevant empirically and can give insight into the behavior of early-stopped near-interpolating shallow neural networks.


\textbf{Implications}.
Our result on the norm growth 
implies that
existing data-independent bounds and possible extension are necessarily loose. See \Cref{section:looseness}.
Thus, in order to explain the learning capability of near-interpolators,
there is a need to develop data-dependent generalization bounds.

On the other hand, our result allows the analysis of the trade-off between nearness of interpolation and generalization.
Our result reveals delicate interplay between the overparametrization ratio and the power-law spectra exponent.
In particular, for larger  power-law spectra exponent implies larger asymptotic excess test error \emph{ratio} of  \(5\%\)- over \(50\%\)-noise floor interpolation, for instance. Put more simply, the harmfulness of overfitting depends on the data distribution. Moreover, this effect is stronger at high level of overparametrization (large \(p/n\)). (See \Cref{figure:steepness} and \Cref{fig:experiment-norms}-left panel.)
Experimentally, this effect appears in shallow neural networks as well (\Cref{figure:experiment-NN}).

\subsection{Related works}

\textbf{Near interpolation}. Learning algorithms that (nearly) interpolate the training data, such as deep neural networks, have been surprisingly effective in practice despite conventional statistical wisdom suggesting otherwise  \citep{zhang2021understanding}.
Many practices in modern machine learning e.g., early stopped neural network and high-dimensional ridge regression, result in
\emph{near-} rather than \emph{perfect-}interpolators \citep{ji2021early,kuzborskij2022learning}. In terms of theory work, \citet{ghosh2022universal} provides a \emph{lower bounds} on the {test error} for near-interpolators.



\textbf{Power law spectra}.
Empirically, power law spectra  arise in neural tangent kernels computed from practical networks for common datasets, e.g.,
\textsc{MNIST} \citep{velikanov2021explicit}
On the theory side, the power law spectra assumption has been used previously to analyze \emph{benign}  \citep[Theorem 2]{bartlett2020benign} and
\emph{tempered overfitting} phenomena \citep[Theorem 3.1]{mallinar2022benign}.

\textbf{Looseness of existing generalization bounds}. Our work is motivated by the empirical evidence found by \citet{wei2022more} suggests that norms of kernel ridge regressors grow rapidly potentially beyond the purview of norm-based bound. We confirm that bounds similar to
ones in \citet[Corollary 1]{koehler2021uniform} grow to infinity. Even more refined bounds such as \citet[Theorem 1]{koehler2021uniform} grow as $n$ goes to infinity.
Therefore, our work suggests that explaining the generalization capability of near-interpolators will require new tools.



\subsection{Notations}\label{section:notations}
Throughout this work, we assume the setting of high-dimensional linear regression as described below.
Let
\(n\) denote the number of samples and \(p\) denote the feature dimension.
Consider the setting where \(n,p \to \infty\) at the same time.
The \emph{sample-to-feature ratio} is denoted
\(\gamma := n/p \in \mathbb{R}_{>0}\) and the asymptotic sample-to-feature ratio is denoted \(\gamma_{\ast} := \lim_{n\to\infty} \gamma \in \mathbb{R}_{\ge 0}\).
Here, \(n\) is the fundamental parameter which \(p\) depends on implicitly.

Let \(X \in \mathbb{R}^p\) and \(Y \in \mathbb{R}\) denote a random vector (resp.\ variable), referred to as the \emph{sample} (resp.\ \emph{label}).
Suppose that \(\bbeta^\star \in \mathbb{R}^p\)  is such that
\(Y = \varepsilon + X^\top \bbeta^\star\)
where
\(\varepsilon \in \mathbb{R}\) is a random variable denoting independent, zero mean noise, i.e., \(\mathbb{E}[\varepsilon] = 0\) and \(\varepsilon \perp X\). Here \(\perp\) denotes independence between random variables.
The noise variance is denoted
$\sigma^2 := \mathbb{E}[\varepsilon ^2]$.

The training data is denoted \(\{(x_i,y_i)\}_{i=1}^n\) where \(x_i \in \mathbb{R}^p\) and \(\varepsilon_i \in \mathbb{R}\) are i.i.d realizations of \(X\) and of the noise, and
\(y_i = \varepsilon_i + x_i^\top \bbeta^\star\).
Let
\(\bX = [x_{1},\dots, x_{n}] \in \mathbb{R}^{p \times n}\) be the data matrix obtained by horizontally stacking the \(x_i\)'s,
and
let $\by = (y_1,\dots, y_n)^\top \in \mathbb{R}^n$ be the (column vector) by concatenating the \(y_i\)'s.
Likewise, define \(\bvarepsilon = (\varepsilon_1,\dots, \varepsilon_n)^\top \in \mathbb{R}^n\).
For a positive integer \(p\), let
$\bI_{p}$ denote the \(p\times p\) identity matrix.
Let \(\bbeta \in \mathbb{R}^p\) be arbitrary. The \emph{empirical} training  mean squared  error (MSE) of \(\bbeta\) is denoted
\[
\Etrain^{n}(\bbeta) = \frac{1}{n} \| \bX^{\top} \bbeta - \by \|_{2}^{2}.
\]
Likewise, the \emph{expected} test error of \(\bbeta\) is denoted
\[
\Etest^{n}(\bbeta) = \mathbb{E}[ \| X^{\top} \bbeta - Y \|_{2}^{2}].
\]

Let \(\hat{\bSigma} := n^{-1}\bX\bX^\top\) denote the sample covariance matrix and
\(\check{\bG} := n^{-1}\bX^\top \bX\) the (scaled) gram matrix.
Let
\(\bSigma := \mathbb{E} [ \hat{\bSigma}]\) denote the population covariance.

We note that all quantities defined on the training data implicitly depend on \(n\).
When the dependencies need to be made explicit, we shall write
\(\bbeta^{\star}_{n}\),
\(\hat{\bSigma}_{n}\) and so on.

\subsection{Our contributions} \label{section:setup}

Recall the minimum norm near-interpolator:

\begin{definition}\label{definition:constrained-RR}
  Let \(\tau \in (0, \sigma^{2})\).
  The \emph{minimum norm \(\tau\)-near-interpolator} is defined as
\begin{equation}
\MNNI_{\tau}:=
\mathrm{argmin}_{\bbeta \in \mathbb{R}^p}  \| \bbeta \|_2^2
\,\, s.t. \,\, \tfrac{1}{n} \| \bX^{\top} \bbeta - \by\|_{2}^{2} \le \tau.
\label{equation:constrained-formulation}
\end{equation}
A \emph{\(\tau\)-near-interpolator} (not necessarily of the minimum norm) is any \(\bbeta \in \mathbb{R}^p\) satisfying the inequality in \eqref{equation:constrained-formulation}.
\end{definition}

In the overparameterized (\(p > n\)) regime, near-interpolators can often be realized by ridge regression:

\begin{definition}\label{definition:RR}
Let \(\varrho >0\). The \emph{ridge regressor with
regularizer}
$\varrho$
is
 defined as
\begin{equation}
\hat{\bbeta}_\varrho :=\mathrm{argmin}_{\bbeta \in \mathbb{R}^p} \tfrac{1}{n} \| \bX^\top \bbeta - \by\|_2^2 + \varrho \| \bbeta \|_2^2.
\label{equation:KRR}
\end{equation}
\end{definition}

\textbf{Main problems}: For any \(\tau \in (0,\sigma^{2})\), \underline{\emph{1}}.\ find
a sequence of regularizers \(\{\varrho_{n}\}_{n}\) so that
\begin{equation}
\Etrain^{\ast}:=
\lim_{n\to \infty} 
\mathbb{E}[\Etrain^{n}(\RREG_{\varrho_{n}})] = \tau
\label{equation:asymptotic-training-identity}
\end{equation}
where the expectation is over all sources of randomness
and \underline{\emph{2}}.\ compute the associated asymptotic test error:
\begin{equation}
\Etest^{\ast}:=
\lim_{n\to \infty} \Etest^{n}(\RREG_{\varrho_{n}}).
\label{equation:asymptotic-testing-identity}
\end{equation}
\begin{definition}
Let \(\tau \in (0,\sigma^{2})\) be arbitrary and \(\{\varrho_{n}\}_{n}\) be any sequence of regularizers.
If \Cref{equation:asymptotic-training-identity} holds, then we say the sequence of regressors
\(\{\RREG_{\varrho_{n}}\}_{n}\) is an \emph{asymptotic \(\tau\)-near interpolator}
with asymptotic test error given by \Cref{equation:asymptotic-testing-identity}.
\end{definition}
\begin{assumption}[Power-law spectra\footnote{Also referred to as the {eigenvalue decay} condition \citep{goel2017eigenvalue}.}]\label{assumption:exact-EVD}
  Suppose there exists
   \(\alpha > 1\) such that the population data covariance matrix
  \(    \bSigma
    =
    \mathrm{diag}(\lambda_{1},\cdots, \lambda_{p})
\)
  where \(\lambda_{i} = i^{-\alpha}\).
\end{assumption}
There are many examples of random matrix ensembles exhibiting power-law spectra in a broader sense than that of  \Cref{assumption:exact-EVD}.
For instance, see \citep{arous2008spectrum,mahoney2019traditional,wang2024spectral}. For simplicity, we do not pursue a general setting and will consider the setting of  \Cref{assumption:exact-EVD}.

The asymptotic test error of an asymptotic \(\tau\)-near interpolator can be calculated as follows.
  Let \(F = {}_{2}F_{1}\) denote the Gaussian Hypergeometric function \cite[Eqn.\ (27)]{dutka1984early}.

\begin{figure*}
  \centering
\includegraphics[width = 0.44\textwidth]{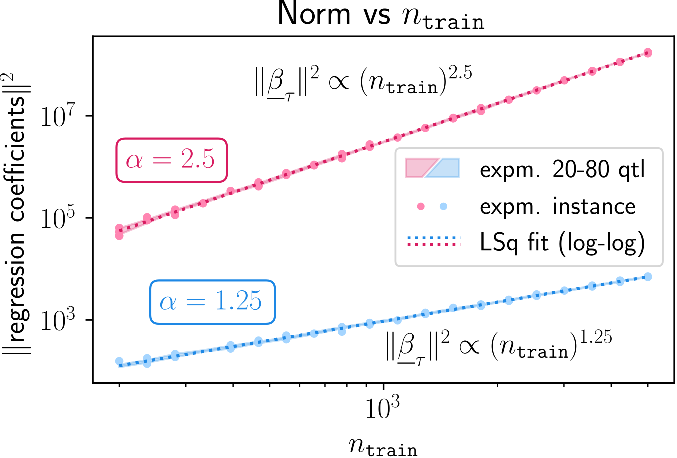}
~
~
~
\includegraphics[width = 0.42\textwidth]{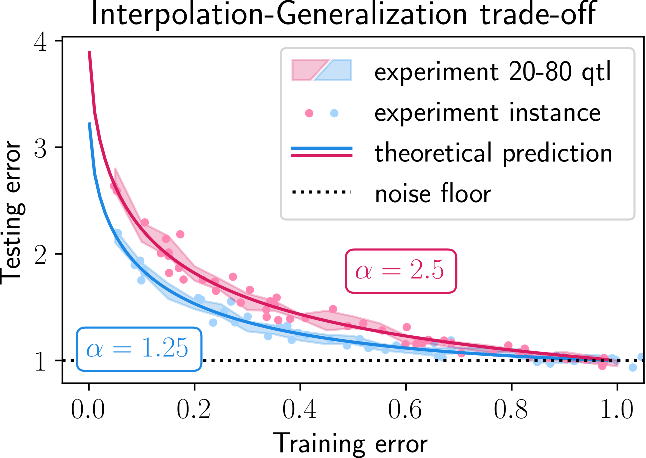}
\caption{
\label{fig:experiment-norms}
\textbf{Left}: \emph{Synthetic experiments validating the
    norm lower bound of norms of \(0.2\)-near-interpolators given by \Cref{theorem:polynomial-lower-bound}}.
The squared norms are fitted by least squares (in log-log space)
to estimate the norm-growth exponent \(\alpha\) using only data points.
See \Cref{section:experiment} for additional experiment details.
\textbf{Right}:
    \emph{Trade-off between the testing and training errors from \Cref{theorem:trade-off}.}
    The solid curves are the parametrized curves whose \((x,y)\)-coordinates are
    \((
\mathcal{E}_{\mathtt{train}}^{\ast},
    \mathcal{E}_{\mathtt{test}}^{\ast}
      )\) and parametrized by \(k\) (which is in 1-to-1 correspondence with \(r\) see Theorem \ref{theorem:trade-off}).
    The scatter points, subsampled for visualization, denote   ridge regression run   results on the HDA model (\Cref{example:HDA}).
    The colored ribbons denote the 20-80 quantiles for the scatter points.
    The horizontal dotted line denotes the noise \(\sigma^2\) which is set to \(1\) without the loss of generality.
}
\end{figure*}
\begin{theorem}[Exact trade-off formula]\label{theorem:trade-off}
  Let \(\tau \in (0,\sigma^{2})\) be arbitrary.
  Suppose that \(\sup_{n=1,2\dots}\|\bbeta^{\star}\|_{2} < +\infty\),
\Cref{assumption:exact-EVD} holds, and
  \(X = \bSigma^{1/2} Z\) where \(Z \sim \mathcal{N}(0,\bI_{p})\).
  There exists unique
number
\(k \in \mathbb{R}_{>0}\) such that
that the following hold.
Define the \emph{regularizer-factor} \[r :=
      1 -
\gamma_{\ast}^{-1} F(1, \tfrac{1}{\alpha}; 1 + \tfrac{1}{\alpha}; -k \gamma_{\ast}^{-\alpha})
\]
and let
\(
\varrho_{n} := r n^{-\alpha}\).
  Then
\(\{\RREG_{\varrho_{n}}\}_{n}\) is an {asymptotic \(\tau\)-near interpolator}
whose
asymptotic test error is
\begin{equation}
\Etest^{\ast}
  =
  \sigma^{2}
\frac{1}
{
  1 -
\gamma_{\ast}^{-1} F(2, \tfrac{1}{\alpha}; 1 + \tfrac{1}{\alpha}; -k \gamma_{\ast}^{-\alpha})
  }.
\label{equation:exact-trade-off-formula}
\end{equation}
Moreover, \(\mathcal{E}^{\ast}_{\mathtt{test}}\) is a decreasing function w.r.t \(\alpha\), fixing all other quantities.
\end{theorem}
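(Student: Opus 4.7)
My plan is to apply the eigenlearning asymptotics of \citet{simon2021neural}, which, for the Gaussian design $X = \bSigma^{1/2}Z$, express both the asymptotic training and testing MSE of $\RREG_\varrho$ as explicit functions of the spectrum $\{\lambda_i\}$ of $\bSigma$ and of a scalar effective-regularization parameter $\kappa$ defined implicitly through a self-consistency equation of the schematic form $n - \varrho/\kappa = \sum_i \lambda_i/(\lambda_i + \kappa)$. Under \Cref{assumption:exact-EVD}, I would substitute $\lambda_i = i^{-\alpha}$ and introduce a rescaled parameter $k$ via $\kappa = k\, n^{-\alpha}$. The identity $\lambda_i/(\lambda_i+\kappa) = 1/(1+k(i/n)^\alpha)$ then turns the power sums $\tfrac{1}{n}\sum_i \bigl(\lambda_i/(\lambda_i+\kappa)\bigr)^j$ into Riemann sums for $\int_0^{\gamma^{-1}} (1 + k(\gamma x)^\alpha)^{-j}\, dx$; after the substitution $s = \gamma x$, both $j=1$ and $j=2$ are evaluated by the Euler integral representation of ${}_2F_1$ (using $c - b = 1$ to kill the $(1-t)^{c-b-1}$ factor) to give $\gamma_*^{-1} F(j, 1/\alpha; 1+1/\alpha; -k\gamma_*^{-\alpha})$.

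Next I would pin down the scaling. Because $\lambda_n = n^{-\alpha}$ and the crossover between learned and unlearned coordinates occurs near $\lambda_i \sim \kappa$, consistency of the limit forces $\kappa \asymp n^{-\alpha}$, hence $\varrho \asymp n^{-\alpha}$. Writing $\varrho_n = r n^{-\alpha}$ and solving the limiting self-consistency equation for $r$ yields exactly $r = 1 - \gamma_*^{-1}F(1, 1/\alpha; 1+1/\alpha; -k\gamma_*^{-\alpha})$, with $k$ remaining as a single scalar degree of freedom. Substituting $\kappa_n = k n^{-\alpha}$ into the eigenlearning training-error formula---whose bias-type contribution vanishes in the limit thanks to $\sup_n \|\bbeta^\star\|_2 < \infty$ combined with the power-law decay of $\bSigma$---produces a single scalar equation equating the asymptotic training error to $\tau$. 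Strict monotonicity in $k>0$ of the relevant hypergeometric expression, immediate from the integral representation, gives existence and uniqueness of $k$. Substituting this $k$ into the analogous test-error formula, whose structure differs only by replacing $F(1,\cdot)$ with $F(2,\cdot)$, produces \eqref{equation:exact-trade-off-formula}.

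The step I expect to be the main obstacle is the monotonicity-in-$\alpha$ claim. Since $\tau$ is held fixed, $k$ itself depends on $\alpha$ through the implicit equation, so naive partial differentiation does not suffice; implicit differentiation is required. I would reduce the claim to a sign condition on a combination of $F(j, 1/\alpha; 1+1/\alpha; -k\gamma_*^{-\alpha})$ for $j\in\{1,2\}$ and their $\alpha$-partials. The cleanest route I would try is to re-express both values as the integrals $I_j(\alpha, k) := \int_0^1 (1 + k\gamma_*^{-\alpha}s^\alpha)^{-j}\, ds$, substitute $u = k\gamma_*^{-\alpha} s^\alpha$ to disentangle $k$ and $\alpha$, and then interpret the resulting combination as a variance or covariance under the tilted probability measure $(1+u)^{-2}\, du$ on the appropriate truncated half-line; such a representation automatically fixes the sign. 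A fallback is a direct Cauchy--Schwarz comparison of $I_1$ and $I_2$ viewed as functions of $\alpha$ at fixed constraint.
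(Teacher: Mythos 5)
Your approach for the main part of the theorem --- invoking the eigenlearning asymptotics, introducing $\kappa = k n^{-\alpha}$, passing to Riemann sums, and evaluating $\tfrac{1}{n}\sum_i (1+k(i/n)^\alpha)^{-j}$ via the Euler integral representation of ${}_2F_1$ (where $c-b=1$ trivializes the $(1-t)$ factor) --- is essentially the paper's own route (\Cref{definition:IJ-functions}, \Cref{proposition:I-and-J}, and the bridging argument between $(\delta,\kappa)$ and $(r,k)$ in the proof of \Cref{proposition:trade-off}). One flag: solving the limiting self-consistency equation $1 = r/k + \mathcal{I}(k)$ for $r$ yields $r = k\bigl(1 - \mathcal{I}(k)\bigr)$, not $r = 1 - \mathcal{I}(k)$; the missing factor of $k$ appears to be a typo in the theorem statement (the paper's own \Cref{proposition:trade-off} defines $\mathcal{R}(k) := k(1-\mathcal{I}(k))$), so you have reproduced the statement rather than the derivation. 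Existence and uniqueness of $k$ follow from items 1--3 of the ``Moreover'' part of \Cref{proposition:trade-off} together with the monotonicity of $\Etrain^{\ast}(k)$ on $(k_{\mathtt{crit}},\infty)$, which matches your plan.

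You are also right to single out the monotonicity-in-$\alpha$ claim as the genuinely hard step, and here I should point out that the paper itself never actually establishes it: the ``Moreover'' list in \Cref{proposition:trade-off} contains only statements about monotonicity in $k$ (items 1--4), and the appendix proof of \Cref{proposition:I-and-J} never touches the $\alpha$-dependence. Since $\tau$ is held fixed, $k$ depends on $\alpha$ through the implicit constraint $\Etrain^{\ast}(k,\alpha)=\tau$, so implicit differentiation is indeed unavoidable, and the sign of the resulting expression is not obvious --- for $\gamma_{\ast}<1$ the integration range $(0,1/\gamma_{\ast})$ straddles $x=1$, so increasing $\alpha$ simultaneously increases the integrand for $x<1$ and decreases it for $x>1$. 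Your proposed strategy (substitute to decouple $k$ and $\alpha$, then read off a sign via a variance/covariance representation under the tilted measure $(1+u)^{-2}\,du$, with a Cauchy--Schwarz fallback) is a sensible direction, but as written it remains a sketch; to complete the proof you would need to carry through the implicit-function computation for $\tfrac{dk}{d\alpha}\big|_{\tau}$, plug it into the total derivative of $\Etest^{\ast}$, and verify the final sign. This is not a defect of your proposal relative to the paper --- it is a gap the paper shares.
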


The reason we call \Cref{theorem:trade-off} an ``exact trade-off formula'' is that
\Cref{equation:exact-trade-off-formula} allows the calculate of the trade-off curve
between train and test error
(\Cref{fig:experiment-norms}-right).
 The fundamental parameter is \(k\). The asymptotic testing 
error and training error, i.e., \(\tau\), all depend on \(k\) via monotonic 1-1 correspondences on the domain \(k \in (k_{\mathtt{crit}}, \infty)\).
See \Cref{figure:R-func} below. Thus, the asymptotic testing error depends on the training error implicitly through \(k\).

\Cref{fig:experiment-norms}-right panel demonstrates that, empirically, training and test MSEs concentrate closely around \Cref{equation:exact-trade-off-formula}.
We further discuss in detail the implications of 
 \Cref{theorem:trade-off} after  stating \Cref{proposition:trade-off}.

\Cref{fig:experiment-norms}-right shows that for near-interpolators, the test error does not degrade much when training below the noise floor.
 A natural question is if this can be explained by data-independent norm-based generalization bound such as the one found in \cite{koehler2021uniform}.
Our next result shows that the growth rate of an
asymptotic \(\tau\)-near interpolator is superlinear:
\begin{theorem}[Rapid norm growth]\label{theorem:polynomial-lower-bound-specialized}
  In the situation of \Cref{theorem:trade-off}, for any \(\tau \in (0,\sigma^{2})\), suppose \(\varrho_{n}>0\) is a sequence of regularizers
such that
\(\{\RREG_{\varrho_{n}}\}_{n}\) is an {asymptotic \(\tau\)-near interpolator}.
Then
\(\mathbb{E}[\|\hat{\bbeta}_{\varrho_{n}}\|_{2}^{2}] = \Omega(n^{\alpha})\).
\end{theorem}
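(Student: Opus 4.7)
My approach is to express $\|\hat{\bbeta}_{\varrho_n}\|_2^2$ as an explicit spectral sum, extract a deterministic lower bound by isolating the independent-noise contribution, and then lower-bound the resulting spectral quantity via the power-law eigendecay of $\bSigma$. Diagonalizing $\bG = \bX^\top\bX/n = \bV\operatorname{diag}(\lambda_1,\ldots,\lambda_n)\bV^\top$ and writing $\tilde{\by}=\bV^\top\by$, the Woodbury identity gives the closed form
\[
\|\hat{\bbeta}_{\varrho}\|_2^2 \;=\; \frac{1}{n}\,\by^\top \bG(\bG+\varrho\bI)^{-2}\by \;=\; \frac{1}{n}\sum_{i=1}^{n}\frac{\lambda_i\,\tilde{y}_i^2}{(\lambda_i+\varrho)^2}.
\]
Since $\tilde{\by} = \sqrt{n}\,\bS\bU^\top\bbeta^\star + \bV^\top\bvarepsilon$ with $\bvarepsilon\perp\bX$, conditioning on $\bX$ and using $\mathbb{E}[\|\bV^\top\bvarepsilon\|_2^2 \mid \bX] = n\sigma^2$ yields $\mathbb{E}[\tilde{y}_i^2\mid\bX]\ge\sigma^2$, so
\[
\mathbb{E}\|\hat{\bbeta}_{\varrho_n}\|_2^2 \;\ge\; \frac{\sigma^2}{n}\,\mathbb{E}\!\left[\sum_{i=1}^{n}\frac{\lambda_i}{(\lambda_i+\varrho_n)^2}\right].
\]

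Next I argue $\varrho_n = O(n^{-\alpha})$. A direct calculation shows that at the data level $\varrho\mapsto \Etrain^n(\RREG_\varrho)$ is monotonically increasing (since $\frac{d}{d\varrho}\big[\varrho^2/(\lambda_i+\varrho)^2\big] = 2\varrho\lambda_i/(\lambda_i+\varrho)^3 \ge 0$ in each eigendirection). Hence if some subsequence satisfied $\varrho_{n_k}\ge r' n_k^{-\alpha}$ for some $r'>r:=r(\tau,\alpha,\gamma_\ast)$ from \Cref{theorem:trade-off}, then
$\mathbb{E}[\Etrain^{n_k}(\RREG_{\varrho_{n_k}})]\ge\mathbb{E}[\Etrain^{n_k}(\RREG_{r'n_k^{-\alpha}})]\to \tau(r')>\tau$
by applying \Cref{theorem:trade-off} at $r'$ together with the strict monotonicity of the trade-off curve, contradicting $\tau$-near interpolation. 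Therefore $\limsup_n \varrho_n n^\alpha \le r$, and in particular $\varrho_n \le C n^{-\alpha}$ for some constant $C = C(\tau,\alpha,\gamma_\ast)$.

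Finally, I lower-bound the spectral sum via eigenvalue concentration for the anisotropic Gaussian sample covariance. Since $X=\bSigma^{1/2}Z$ with $Z$ standard Gaussian and $\lambda_i(\bSigma)=i^{-\alpha}$, a non-asymptotic argument (e.g.\ applying a Bai--Yin-type bound to the top $\lfloor \delta n\rfloor$ coordinates of $\bSigma^{1/2}$ and using Weyl interlacing for principal submatrices) yields a constant $\delta\in(0,1)$ and an event $\mathcal{A}_n$ with $\mathbb{P}(\mathcal{A}_n)\to 1$ on which $\lambda_i(\bG)\ge \tfrac12 i^{-\alpha}$ for every $i\le\delta n$. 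Shrinking $\delta$ so that $\tfrac12(\delta n)^{-\alpha}\ge Cn^{-\alpha}\ge \varrho_n$, on $\mathcal{A}_n$ every $\lambda_i$ with $i\le\delta n$ satisfies $\lambda_i\ge\varrho_n$, whence
\[
\sum_{i=1}^{n}\frac{\lambda_i}{(\lambda_i+\varrho_n)^2}\;\ge\;\sum_{i\le\delta n}\frac{1}{4\lambda_i}\;\gtrsim\;\sum_{i\le\delta n} i^\alpha \;=\; \Omega(n^{\alpha+1}).
\]
Since the complementary event contributes nonnegatively and $\mathbb{P}(\mathcal{A}_n)\to 1$, the expectation is $\Omega(n^{\alpha+1})$, so combining with the first display gives $\mathbb{E}\|\hat{\bbeta}_{\varrho_n}\|_2^2=\Omega(n^\alpha)$.

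\textbf{Main obstacle.} The crux is the spectral lower bound $\lambda_i(\bG)\gtrsim i^{-\alpha}$ for $i\le\delta n$: a sharp non-asymptotic handle on the top $\Theta(n)$ eigenvalues of an anisotropic Wishart with a heavy-tailed population spectrum is needed, and the usual bulk Marchenko--Pastur statements are not immediately sufficient. The reduction $\varrho_n = O(n^{-\alpha})$ in the second step is simpler, but still relies on the strict monotonicity and continuity of the trade-off curve $r\mapsto \tau(r)$ implicit in \Cref{theorem:trade-off}; care is required to rule out sequences $\varrho_n$ whose scaling oscillates.
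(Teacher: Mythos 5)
Your overall strategy---isolate the independent-noise contribution, show $\varrho_n = O(n^{-\alpha})$, and then lower-bound the resulting spectral sum by controlling the top eigenvalues of $\check{\bG}$---is genuinely different from the paper's, and the first two steps are reasonable. The noise-isolation inequality $\mathbb{E}\|\hat\bbeta_{\varrho}\|_2^2 \ge n^{-1}\sigma^2\,\mathbb{E}[\operatorname{tr}((\hat\bSigma+\varrho\bI_p)^{-2}\hat\bSigma)]$ coincides with the paper's Proposition~\ref{proposition:generic-lower-bound-helper}. The paper then handles arbitrary $\{\varrho_n\}$ not by establishing $\varrho_n=O(n^{-\alpha})$, but by a comparison trick: set $\tau' := (\tau+\sigma^2)/2$, pick the reference sequence $\varrho_n' := r'n^{-\alpha}$ achieving $\tau'$, observe that $\Etrain(\hat\bbeta_{\varrho_n}) \le \Etrain(\hat\bbeta_{\varrho_n'})$ with probability bounded below, and use that each ridge solution is the \emph{minimum-norm} near-interpolator at its own training-error level to conclude $\|\hat\bbeta_{\varrho_n}\| \ge \|\hat\bbeta_{\varrho_n'}\|$ on that event. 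This transfers the fixed-scaling lower bound from Proposition~\ref{theorem:polynomial-lower-bound} without any sequence-oscillation issues.

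The genuine gap is in your third step, and it is more than the ``obstacle'' you flagged. You invoke only the one-sided event $\mathcal{A}_n$ on which $\lambda_i(\check\bG) \ge \tfrac12 i^{-\alpha}$ for $i\le\delta n$, yet your chain
\[
\sum_{i\le\delta n}\frac{\lambda_i}{(\lambda_i+\varrho_n)^2}\;\ge\;\sum_{i\le\delta n}\frac{1}{4\lambda_i}\;\gtrsim\;\sum_{i\le\delta n} i^{\alpha}
\]
uses the lower bound for the first inequality (ensuring $\lambda_i+\varrho_n\le 2\lambda_i$) and then needs an \emph{upper} bound $\lambda_i\lesssim i^{-\alpha}$ for the second. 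Since $\lambda\mapsto\lambda/(\lambda+\varrho)^2$ is \emph{decreasing} for $\lambda>\varrho$, a lower bound on $\lambda_i$ alone cannot produce a lower bound on this sum; the inference from $\lambda_i\ge\tfrac12 i^{-\alpha}$ is actually $1/(4\lambda_i)\le\tfrac12 i^\alpha$, which runs the wrong way. So what you actually need is two-sided concentration $c_1 i^{-\alpha}\le\lambda_i(\check\bG)\le c_2 i^{-\alpha}$ for all $i\le\delta n$ simultaneously, with high probability --- a stronger, unstated requirement on top of a result you already regarded as nontrivial. The paper's proof of Proposition~\ref{theorem:polynomial-lower-bound} avoids pointwise eigenvalue control entirely: it rewrites $n^{-1}\operatorname{tr}((\hat\bSigma+\varrho\bI_p)^{-2}\hat\bSigma)$ as $n^{\alpha}\tfrac{d}{dr}\bigl(r\,\mathcal{S}_{\mathtt{esd}(n^{\alpha}\check\bG)}(-r)\bigr)$ (Proposition~\ref{proposition:lower-bound-with-deriv-of-MP-law}) and then appeals to the positivity condition (Assumption~\ref{assumption:positivity-condition}, verified for HDA in Lemma~\ref{lemma:positivity}), which only asks for the limit in expectation of this single scalar to be positive --- a much softer statement than per-index spectral matching.
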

As a consequence, data-independent norm-based generalization bound for near-interpolators, similar to the one in \cite{koehler2021uniform}, are necessarily loose.
See \Cref{section:looseness}.

The key technical result that enables the proof of
\Cref{theorem:polynomial-lower-bound-specialized}
is the following

\begin{proposition}[Rapid norm growth - generic]\label{theorem:polynomial-lower-bound}
  Suppose \Cref{assumption:exact-EVD} holds and the random matrix-theoretic Assumptions \ref{assumption:self-consistent-kr},
  \ref{assumption:MP-law}
  and
  \ref{assumption:positivity-condition} all hold.
  For any \(r > 0\), let \(\varrho_{n} := r n^{-\alpha}\) be the regularizer for the ridge regression.
Then
\(\mathbb{E}[\|\hat{\bbeta}_{\varrho_{n}}\|_{2}^{2}] = \Omega(n^{\alpha})\).
\end{proposition}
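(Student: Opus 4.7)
The plan is to lower-bound $\mathbb{E}[\|\hat{\bbeta}_{\varrho_n}\|_2^2]$ using only the noise contribution. Substituting $\by = \bX^\top\bbeta^\star + \bvarepsilon$ into the dual form of ridge regression gives
\[
\hat{\bbeta}_{\varrho} = (\hat{\bSigma}+\varrho\bI_p)^{-1}\hat{\bSigma}\bbeta^\star + n^{-1}(\hat{\bSigma}+\varrho\bI_p)^{-1}\bX\bvarepsilon.
\]
Expanding the squared norm and taking expectations first over $\bvarepsilon$ (zero-mean and independent of $\bX$) kills the cross term and reduces the noise piece to a trace, whence
\[
\mathbb{E}[\|\hat{\bbeta}_{\varrho_n}\|_2^2] \;\geq\; \frac{\sigma^2}{n}\,\mathbb{E}_{\bX}\!\bigl[\mathrm{tr}\bigl(\hat{\bSigma}(\hat{\bSigma}+\varrho_n\bI_p)^{-2}\bigr)\bigr].
\]
Hence it suffices to show this expected trace is $\Omega(n^{\alpha+1})$.

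I would then replace the random trace by its deterministic equivalent. Writing
\[
\mathrm{tr}\bigl(\hat{\bSigma}(\hat{\bSigma}+\varrho\bI)^{-2}\bigr) = \mathrm{tr}\bigl((\hat{\bSigma}+\varrho\bI)^{-1}\bigr) - \varrho\,\mathrm{tr}\bigl((\hat{\bSigma}+\varrho\bI)^{-2}\bigr)
\]
expresses both summands as standard Stieltjes-transform quantities for $\hat{\bSigma}$. By \Cref{assumption:self-consistent-kr} and \Cref{assumption:MP-law} there exists an effective regularizer $\kappa_n > 0$ solving a fixed-point equation of the shape $\varrho_n = \kappa_n\bigl(1 - n^{-1}\sum_i \lambda_i/(\lambda_i+\kappa_n)\bigr)$, together with a deterministic equivalent that equates the expected trace, up to a positive multiplicative factor bounded away from zero by the positivity condition (\Cref{assumption:positivity-condition}), to the population-eigenvalue sum $\sum_{i=1}^p \lambda_i/(\lambda_i+\kappa_n)^2$.

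The final step is the asymptotic analysis of this deterministic sum when $\lambda_i = i^{-\alpha}$ and $\varrho_n = rn^{-\alpha}$. Substituting $u=i/n$ in the fixed-point equation and approximating the sum by its Riemann integral reduces it to an $n$-independent relation for $\kappa_n n^\alpha$, forcing $\kappa_n = \Theta(n^{-\alpha})$. Writing $\kappa_n = c n^{-\alpha}$, the same change of variable yields
\[
\sum_{i=1}^p \frac{\lambda_i}{(\lambda_i+\kappa_n)^2} = \sum_{i=1}^p \frac{i^\alpha}{(1+c(i/n)^\alpha)^2} \;\sim\; n^{\alpha+1}\int_0^{1/\gamma_\ast}\frac{u^\alpha\,du}{(1+c u^\alpha)^2},
\]
whose limiting integral is positive and finite (using $\alpha>1$ for integrability at the tail, if needed). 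Combined with the $\sigma^2/n$ prefactor, this produces $\mathbb{E}[\|\hat{\bbeta}_{\varrho_n}\|_2^2] = \Omega(n^\alpha)$.

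The main obstacle is the middle step: the random matrix assumptions must control the \emph{expectation} of the trace from below \emph{uniformly} as $\varrho_n\to 0$ along the critical scale $n^{-\alpha}$, where the regularization is comparable to the smallest population eigenvalues that actually influence the sum. The positivity condition (\Cref{assumption:positivity-condition}) is precisely what prevents the multiplicative prefactor in the deterministic equivalent from collapsing to zero in this borderline regime, and is the hypothesis doing the essential work here.
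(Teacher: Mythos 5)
Your opening move is exactly the paper's: decompose $\hat{\bbeta}_\varrho$ into signal and noise contributions, drop the nonnegative signal term, and reduce via independence of $\bX$ and $\bvarepsilon$ to
\[
\mathbb{E}\|\hat{\bbeta}_{\varrho_n}\|_2^2 \;\ge\; \frac{\sigma^2}{n}\,\mathbb{E}\bigl[\mathrm{tr}\bigl(\hat{\bSigma}(\hat{\bSigma}+\varrho_n\bI_p)^{-2}\bigr)\bigr],
\]
which is precisely \Cref{proposition:generic-lower-bound-helper}. From there, however, you and the paper diverge in a way that matters for whether the stated assumptions actually close the argument.

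The paper does not pass to a deterministic equivalent in terms of the population eigenvalues at all. Instead, it proves an \emph{exact, per-$n$ algebraic identity}: using the elementary derivative formula $\tfrac{d}{dz}\mathrm{tr}(z(\bM+z\bI)^{-1}) = \mathrm{tr}(\bM(\bM+z\bI)^{-2})$ together with the relation between $\mathtt{esd}(\hat{\bSigma})$ and $\mathtt{esd}(\check{\bG})$ (the $p-n$ zero eigenvalues of $\hat{\bSigma}$ contribute a term $\tfrac{1-\gamma}{z}$ to the Stieltjes transform which differentiates away), one gets
\[
\mathrm{tr}\bigl(\hat{\bSigma}(\hat{\bSigma}+\varrho_n\bI_p)^{-2}\bigr) \;=\; n^{\alpha+1}\,\tfrac{d}{dr}\Bigl(r\,\mathcal{S}_{\mathtt{esd}(n^\alpha\check{\bG})}(-r)\Bigr),
\]
deterministically, for every $n$. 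After taking expectations, \Cref{assumption:positivity-condition} applies \emph{directly}: its statement is that $\lim_n\mathbb{E}\bigl[\tfrac{d}{dr}(r\,\mathcal{S}_{\mathtt{esd}(n^\alpha\check{\bG})}(-r))\bigr] = L > 0$, so the right-hand side is $\ge n^{\alpha+1}L/2$ for all large $n$, and you're done.

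Your middle step — invoking \Cref{assumption:self-consistent-kr} and \Cref{assumption:MP-law} to replace $\mathbb{E}[\mathrm{tr}(\hat{\bSigma}(\hat{\bSigma}+\varrho_n\bI)^{-2})]$ by a product of a "positive factor" and the population sum $\sum_i \lambda_i/(\lambda_i+\kappa_n)^2$, and then evaluating that sum by a Riemann integral — is not supported by those assumptions as the paper states them. Assumptions~\ref{assumption:self-consistent-kr}--\ref{assumption:positivity-condition} only speak about the Stieltjes transform of $\mathtt{esd}(n^\alpha\check{\bG})$ and its derivative, not about a deterministic equivalent for $\mathrm{tr}(\hat{\bSigma}(\hat{\bSigma}+\varrho\bI)^{-2})$ in terms of population eigenvalues. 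You could derive such an equivalent with more RMT machinery, but you do not need to: you are effectively re-deriving (for the specific power-law spectrum) the very content that \Cref{assumption:positivity-condition} hands you as an assumption. Your characterization of the positivity condition as controlling "the multiplicative prefactor in the deterministic equivalent" is also off: the positivity condition is a statement about the limit of the expected derivative of the Gram-matrix Stieltjes transform, which by the exact identity above \emph{is} the normalized expected trace; it is not a bound on some separate prefactor in a deterministic-equivalent decomposition. (That prefactor decomposition is what the paper uses to \emph{verify} the positivity condition for the HDA model in \Cref{section:positivity}, but verification of the assumption is a separate task from using it here.)

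So: the first step matches the paper, the asymptotic Riemann-sum computations you sketch are correct, but the middle of your argument invokes machinery the hypotheses don't directly provide and overlooks the exact identity that makes the positivity condition immediately applicable. The missing ingredient is the observation that the trace in the lower bound equals, exactly, $n^{\alpha+1}$ times the quantity whose expected derivative the positivity condition controls.
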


\begin{remark}
  \Cref{theorem:polynomial-lower-bound}
  still holds when the stronger 
  \Cref{assumption:exact-EVD}
  is replaced by the
   weaker \Cref{assumption:EVD}. See \Cref{theorem:polynomial-lower-bound-2}.
\end{remark}

\begin{remark}[Effective-factor]\label{remark:effective-factor}
  The quantity k in \Cref{theorem:trade-off} has the following interpretation.
Let \(\kappa := k n^{-\alpha}\), which is
known as the \emph{effective regularizer} in \citet{wei2022more}.
The connection between the effective regularizer and the  statistical learning theoretic-literature's notion of \emph{effective dimension} is explained in \citep[\S 4.1]{jacot2020implicit}.
For this reason, we refer to \(k\) with the shortened name \emph{eff-reg-factor}.

\end{remark}

\subsection{Organization}
In \Cref{section:RMT-primer}, we present the necessary background as well as new technical on random matrix theory (RMT). 
In \Cref{section:contrib-errors}
and
\ref{section:rapid-norm-growth}, we sketch the proof of \Cref{theorem:trade-off}
and
\Cref{theorem:polynomial-lower-bound}, respectively.
In
\Cref{section:looseness}, we discuss the implication of our results on the looseness of norm-based generalization bounds.
In \Cref{section:experiment}, we discuss our experiments.
We discuss related works and the context of our work in greater details in \Cref{section:more-related-works}. Finally, we conclude with discussion of future works and limitations.




\section{PRIMER ON RANDOM MATRIX THEORY}\label{section:RMT-primer}
We start with a fundamental concept in random matrix theory (RMT), followed by a review of RMT adapted to the power-law spectra setting and a new result (\Cref{theorem:MP}) to prepare for our main results.


\begin{definition}[Empirical spectral measure]
For $c \in \mathbb{R}$, let $\delta_c$ denote the {Dirac measure} on $\mathbb{R}$ at $c$.
  Let $\bM \in \mathbb{R}^{p \times p}$ be a matrix with real eigenvalues $\lambda_1,\dots, \lambda_p$.
  The \emph{empirical spectral measure} of $\bM$, denoted by $\mathtt{esd}(\bM)$, is the measure on $\mathbb{R}$ given by
$\mathtt{esd}(\bM) = \frac{1}{p}\sum_{i=1}^p \delta_{\lambda_i}$.
\end{definition}
Our random matrix theoretic-assumptions differs from the standard RMT ones in order to accomodate for power-law spectra.
The following is a random matrix-theoretic extension of
the earlier \Cref{assumption:exact-EVD}:

\begin{assumption}[Power-law spectra, RMT version]\label{assumption:EVD}
In the situation of \Cref{section:notations},
  let \(\alpha >1\) and \(H\) be some probability measure on \(\mathbb{R}_{\ge 0}\).
Assume that \(\mathtt{esd}(n^{\alpha}\bSigma)\) converges to \(H\) (in the sense of convergence in distribution).
We refer to \(H\)  as the \emph{\(\alpha\)-scaled limiting spectral distribution (\(\alpha\)-scaled 
 LSD)}.
\end{assumption}
Morally, we can think of the above \(\alpha\) as the same as that of \Cref{assumption:exact-EVD}.

\begin{remark}[Comparison with standard LSD]\label{remark:standard-LSD}
In RMT, the condition that  ``\(\mathtt{esd}(\bSigma)\) converges to \(H\)'' is standard, where \(H\) is simply referred to as the \emph{limiting spectral distribution} (LSD) \citep{bai2010spectral}.
For power-law spectra covariance, i.e., \(\bSigma\) satisfying \Cref{assumption:exact-EVD},  \(\mathtt{esd}(\bSigma)\)  may not have a measure-theoretic limit while
 \(\mathtt{esd}(n^{\alpha}\bSigma)\) does, as we will show in \Cref{section:limitingd-istribution}.
 \end{remark}
 
\begin{definition}[Stieltjes transform]
  Let $\mu$ be a measure on $\mathbb{R}$ with support $S$. The \emph{Stieltjes transform}
of $\mu$ is the (complex-valued) function with input \(z \in \mathbb{C}\setminus S\) given by
$
\mathcal{S}_\mu(z) := \int \frac{ \mu(t) dt}{t - z}$.
\end{definition}

Next, we recall the so-called the \emph{self-consistent equation} \citep{tao2011stieltjes} which relates the regularizer-factor \(r\) with the eff-reg-factor \(k\):
\begin{assumption}[Self-consistent equation]\label{assumption:self-consistent-kr}
In the situation of \Cref{section:notations}, denote by \(\lambda_i\) the \(i\)-th largest eigenvalue of \(\bSigma\).  For each \(r > 0\), there exists a unique \(k \equiv k(r) \in \mathbb{R}\) such that
  the tuple \((r,k)\) satisfies
  \begin{equation}
    1 = \frac{r}{k}  +
    \lim_{n\to\infty} \frac{1}{n} \sum_{i=1}^{p}
    \frac{
      1
    }{1 + kn^{-\alpha}\lambda_{i}^{-1}}.
\label{equation:self-consistent-kr}
\end{equation}
\end{assumption}
Similar to 
\Cref{remark:standard-LSD}, \Cref{equation:self-consistent-kr} includes a \(n^{-\alpha}\) scaling term that does not show up in the standard self-consistent equation.
Again, our \Cref{assumption:self-consistent-kr} differs from this standard one in order to deal with the power-law spectra.

Next, we state a version of the classical Marchenko-Pastur law for a random matrix \(\bX\) (and its associated Gram matrix \(n^{\alpha} \check{\bG}_{n}\)):

\begin{assumption}[Marchenko-Pastur law]\label{assumption:MP-law}
  In the setting of \Cref{assumption:self-consistent-kr},
    further assume that 
  \[
    \lim_{n\to\infty}  r\mathcal{S}_{\mathtt{esd}(n^{\alpha} \check{\bG}_{n})}(-r) =
    k
   \mathcal{S}_{H} (-k)
   , \quad \mbox{almost surely}
  \]
  and
  \(    \lim_{n\to\infty} \tfrac{d}{dr}\left(r \mathcal{S}_{\mathtt{esd}(n^{\alpha} \check{\bG}_{n})}(-r)\right) =
\tfrac{d}{dr}   \left(k \mathcal{S}_{H} (-k)\right)
\). We note that the \(k\) on the RHS
  depends on \(r\).
\end{assumption}

\begin{remark}\label{remark:MP}
\Cref{assumption:self-consistent-kr}
  and
  \Cref{assumption:MP-law}
  are standard assumptions in random matrix theory. Both of them are satisfied by
     the well-studied \emph{high-dimensional asymptotic} (HDA)  model (\Cref{example:HDA}).
  For instance, see \citet{dobriban2018high} under ``Marchenko-Pastur theorem''.
\end{remark}

The HDA model serves as an exemplary model in  random matrix theory possessing  many properties  that are particularly amenable to analysis. 
It is defined  as:

\begin{example}
  \label{example:HDA}
  Let \(\gamma_\ast \in (0,\infty)\).
  The \emph{high-dimensional asymptotic (HDA)\footnote{See \cite{bai2010spectral,dobriban2018high}.}} model:
  \begin{enumerate}
    \item
          \(\bX = \bSigma^{1/2} \bZ\) where the entries of \(\bZ = \{Z_{ij}\} \in \mathbb{R}^{p\times n}\) are i.i.d,
          have zero mean \(\mathbb{E}[Z_{ij}] = 0\) and
          unit variance \(\mathbb{E}[Z_{ij}^{2}]= 1\). The matrix \(\bSigma\) is positive semidefinite.
    \item
          \(n/p \to \gamma_{\ast} \)

    \item
          Spectral distribution of \(n^{\alpha}\bSigma\) converges to a distribution \(H\) supported on \(\mathbb{R}_{\ge 0}\).
  \end{enumerate}

\end{example}

Note that our   \Cref{example:HDA} is somewhat different compared to the \emph{conventional} HDA model, wherein the third item is
``\(\bSigma\) converges to a distribution \(H\)''.
Since we are working with power-law spectra in the covariance matrix, we require the \(n^\alpha\) coefficient in our \Cref{example:HDA}.

We now state a new random matrix-theoretic assumption that is one of the key steps for proving  rapid norm growth under the RMT setting (\Cref{theorem:polynomial-lower-bound-2}):
\begin{assumption}[Positivity condition]\label{assumption:positivity-condition}
  In the setting of \Cref{assumption:self-consistent-kr},
    further assume that 
  for every \(r > 0\),
  we have
  \[  \lim_{n\to\infty}
  \mathbb{E}
\left[
\frac{d }{d r}(
r \mathcal{S}_{\mathtt{esd}(n^{\alpha} \check{\bG}_{n})}(-r)
)
\right]
>0.\]
\end{assumption}

We show that the HDA model satisfies
\Cref{assumption:positivity-condition}, a fact that
appears to be new:
\begin{proposition}\label{theorem:MP}
\Cref{assumption:positivity-condition} holds for the HDA model.
\end{proposition}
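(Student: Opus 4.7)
The plan is to combine a closed-form expression for the quantity inside the expectation with a uniform bound, apply dominated convergence together with \Cref{assumption:MP-law} to identify the limit as $\frac{d}{dr}\bigl(k\mathcal{S}_H(-k)\bigr)$, and then show this limit is strictly positive by implicit differentiation of the self-consistent equation.

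First, I would write the ESD Stieltjes transform at $-r$ as $\mathcal{S}_{\mathtt{esd}(n^\alpha \check{\bG}_n)}(-r) = \tfrac{1}{n}\mathrm{tr}(A_n^{-1})$ with $A_n := n^\alpha \check{\bG}_n + rI$, and differentiate to get
\begin{equation*}
\frac{d}{dr}\Bigl(r\,\mathcal{S}_{\mathtt{esd}(n^\alpha \check{\bG}_n)}(-r)\Bigr)
= \tfrac{1}{n}\mathrm{tr}(A_n^{-1}) - \tfrac{r}{n}\mathrm{tr}(A_n^{-2})
= \tfrac{1}{n}\sum_{i=1}^{n} \tfrac{n^\alpha \mu_i}{(n^\alpha \mu_i + r)^2},
\end{equation*}
where $\mu_i \ge 0$ are the eigenvalues of $\check{\bG}_n$. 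Each summand is non-negative, and separately each of $\tfrac{1}{n}\mathrm{tr}(A_n^{-1})$ and $\tfrac{r}{n}\mathrm{tr}(A_n^{-2})$ is at most $\tfrac{1}{r}$ since the eigenvalues of $A_n^{-1}$ are bounded by $\tfrac{1}{r}$. This gives a deterministic uniform bound of $\tfrac{2}{r}$.

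Next I would invoke \Cref{assumption:MP-law} (which is standard for HDA, per \Cref{remark:MP}) to conclude the almost-sure convergence of the above derivative to $\frac{d}{dr}(k\mathcal{S}_H(-k))$. Combined with the deterministic bound, the bounded convergence theorem transfers the convergence into expectation, yielding $\lim_{n\to\infty}\mathbb{E}[\,\cdot\,] = \frac{d}{dr}(k\mathcal{S}_H(-k))$.

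The remaining step, and the substantive one, is to show this limit is strictly positive. Writing $\mathcal{S}_H(-k) = \int \tfrac{dH(t)}{t+k}$, a short algebraic manipulation gives $k\mathcal{S}_H(-k) = 1 - \gamma_\ast f(k)$ where $f(k) := \gamma_\ast^{-1}\int \tfrac{t}{t+k}\,dH(t)$; meanwhile the self-consistent equation of \Cref{assumption:self-consistent-kr} rearranges (in the HDA limit, using $\mathtt{esd}(n^\alpha\bSigma)\to H$ and $p/n\to\gamma_\ast^{-1}$) to $r = k(1-f(k))$. Then
\begin{equation*}
\frac{dr}{dk} = (1-f(k)) - k f'(k) = \tfrac{r}{k} - k f'(k) > 0,
\end{equation*}
since both terms are positive ($f'(k) = -\gamma_\ast^{-1}\int \tfrac{t}{(t+k)^2} dH(t) < 0$, strictly, because for the HDA model built from the power-law $\bSigma$, $H$ has mass away from $0$). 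Hence $\tfrac{dk}{dr} > 0$, and
\begin{equation*}
\tfrac{d}{dr}\bigl(k\mathcal{S}_H(-k)\bigr) = -\gamma_\ast f'(k)\cdot \tfrac{dk}{dr} > 0.
\end{equation*}
The main obstacle I anticipate is the strict inequality $f'(k)<0$, which fails only in the degenerate case $H=\delta_0$; this is ruled out by identifying $H$ explicitly from \Cref{assumption:exact-EVD} (a Pareto-type distribution supported on $[\gamma_\ast^\alpha,\infty)$), which I would verify separately in a short lemma.
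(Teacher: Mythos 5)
Your proposal is correct and follows the same overall architecture as the paper's proof: reduce to showing that $\tfrac{d}{dr}\bigl(k\mathcal{S}_H(-k)\bigr) = \tfrac{dk}{dr}\cdot\tfrac{d}{dk}\bigl(k\mathcal{S}_H(-k)\bigr)$ is strictly positive, and then verify positivity of each factor. The two places you differ are worth noting, and both are modest improvements in rigor or economy. First, the paper passes from almost-sure convergence (its \Cref{assumption:MP-law}) to convergence in expectation without explicit justification; your deterministic bound $\tfrac{1}{n}\sum_i \tfrac{n^\alpha\mu_i}{(n^\alpha\mu_i+r)^2}\le \tfrac{1}{r}$ (even tighter than the $2/r$ you state) combined with bounded convergence supplies a clean justification for that interchange. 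Second, for $\tfrac{dk}{dr}>0$ the paper writes $k=1/v(-r)$ and cites Silverstein's analysis for $v'(-r)>0$; you instead differentiate the self-consistent relation $r=k\bigl(1-f(k)\bigr)$ directly to get $\tfrac{dr}{dk}=\tfrac{r}{k}-kf'(k)$, both terms manifestly positive, which is more elementary and self-contained. The third factor $\tfrac{d}{dk}\bigl(k\mathcal{S}_H(-k)\bigr)=\int\tfrac{t\,dH(t)}{(t+k)^2}>0$ is derived identically in both. One small point to make airtight in your version: you implicitly use $k>0$ (so that $r/k>0$); this follows because $k=1/v(-r)$ and $v(-r)=\int\tfrac{d\mu(t)}{t+r}>0$ for the nonnegatively-supported companion measure, or equivalently because $H$ is nondegenerate so $f(k)<1$ forces $k>0$ in the self-consistent equation — worth a sentence, as you already flag at the end with the need to rule out $H=\delta_0$.
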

We prove the proposition in \Cref{section:positivity}.
Now, having introduced the necessary RMT background, we now turn to proving \Cref{theorem:trade-off} on the interpolation-generalization trade-off.

\section{INTERPOLATION-GENERALIZATION TRADE-OFF}\label{section:contrib-errors}

\citet{simon2021neural} derived ``estimates'' of the testing and training errors of kernel ridge regression.
These estimates, dubbed the \emph{eigenlearning framework}, are non-rigorous\footnote{See \citet{mallinar2022benign} for a thorough discussion. Works in similar vein 
include
\citet{bordelon2020spectrum,canatar2021spectral}} due to invoking a \emph{Gaussian universality} condition.
However, when the kernel is linear and the data {is} Gaussian (as is the case in \Cref{theorem:trade-off}), the framework \emph{is} rigorous. See \citet{jacot2020kernel}.

Given this, we use the eigenlearning framework to rigorously calculate the asymptotic training and testing error of the estimators in \Cref{theorem:trade-off}. To this end, we first define two key functions of the {eff-reg-factor} \(k\) (See \Cref{remark:effective-factor} for the terminology):

\begin{definition}\label{definition:IJ-functions}
  Let \(\alpha > 1\) and \(\gamma_{\ast} \in [0,\infty)\). Define
  functions\footnote{Throughout this work, \(\alpha\) and \(\gamma_{\ast}\) are fixed constants.
For brevity, we often simply write \(\mathcal{I}\) or \(\mathcal{J}\).}
  \(\mathcal{I}(\cdot)
  \equiv
\mathcal{I}_{\alpha,\gamma_{\ast}}(\cdot)
  \)
  and
  \(\mathcal{J}(\cdot)
  \equiv
\mathcal{J}_{\alpha,\gamma_{\ast}}(\cdot)
  \)
  as
\[
  \mathcal{I}(k)
:=
    \int_{0}^{\tfrac{1}{\gamma_{\ast}}}
    \frac{dx}{1+ k {x}^{\alpha} },
    \quad
       \mathcal{J}(k)
:=
    \int_{0}^{\tfrac{1}{\gamma_{\ast}}}
    \frac{dx}{(1+ k {x}^{\alpha} )^{2}}.
\]
When \(\gamma_{\ast} = 0\), we take \(1/0 := +\infty\).
\end{definition}
These integrals arise in explicit calculations of  the eigenlearning equations for the train and test MSEs applied to our setting. They can be computed via the integral representation of the Gaussian hypergeometric function given in  \citep[Eqn.\ (27)]{dutka1984early}.
The calculations are in \Cref{section:closed-form-appendix}, where we show that \(\mathcal{I}\) and \(\mathcal{J}\) are, respectively, equal to
\[
\int_{0}^{\tfrac{1}{\gamma_{\ast}}}
    \frac{dx}{1+ k {x}^{\alpha} }=
\gamma_{\ast}^{-1} F(1, \tfrac{1}{\alpha}; 1 + \tfrac{1}{\alpha}; -k \gamma_{\ast}^{-\alpha}),\,\, \mbox{and}
\]
\[
    \int_{0}^{\tfrac{1}{\gamma_{\ast}}}
    \frac{dx}{(1+ k {x}^{\alpha} )^{2}}=
\gamma_{\ast}^{-1} F(2, \tfrac{1}{\alpha}; 1 + \tfrac{1}{\alpha}; -k \gamma_{\ast}^{-\alpha}).
\]
To relate the above to
\(\mathcal{E}^{\ast}_{\mathtt{test}},
  \Etrain^{\ast}\)
  estimates of the testing and training errors in the eigenlearning framework, we prove
\begin{proposition}\label{proposition:trade-off}
In the situation of \Cref{theorem:trade-off},
\[
  \mathcal{E}^{\ast}_{\mathtt{test}} \equiv
  \lim_{n\to\infty} \Etest^n(\hat{\bbeta_{\varrho}}) =
\sigma^{2}\cdot
\tfrac{1}
{
  1 - \mathcal{J}(k)
  }, \quad \mbox{and}
 \] \[
  \Etrain^{\ast}
  \equiv
  \lim_{n\to\infty}
  \mathbb{E}[\Etrain^n(\hat{\bbeta_{\varrho}})]
  =
  \sigma^{2}\cdot
  \tfrac{(1-\mathcal{I}(k))^{2}}
  {
  1 - \mathcal{J}(k)
  }.
\]
Moreover, there exists  \(k_{\mathtt{crit}} \in \mathbb{R}_{\ge 0}\) such that
\begin{enumerate}
  \item
For each \(r >0\), there exists a unique \(k \in (k_{\mathtt{crit}},+\infty)\) such that
\(r = \mathcal{R}(k):= k (1- \mathcal{I}(k))\),
\item \(\mathcal{R} \) is monotonically increasing on \((k_{\mathtt{crit}}, +\infty)\),
  \item \(\Etest^{\ast} > \sigma^{2}\) for all \(k \in (k_{\mathtt{crit}},+\infty)\),
  \item
\(\lim_{k\to+\infty}\Etest^{\ast} = \sigma^{2}\).
\end{enumerate}
  \end{proposition}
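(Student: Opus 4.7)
The plan is to compute the asymptotic training and test errors of ridge regression from the bias-variance decomposition, identify the resulting limits with the integrals $\mathcal{I}(k)$ and $\mathcal{J}(k)$ via the Marchenko-Pastur law, and then verify the monotonicity and limit claims by elementary calculus on those integrals.

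First, I would write $\hat{\bbeta}_{\varrho} = (\hat{\bSigma} + \varrho\bI)^{-1}\bigl(\hat{\bSigma}\bbeta^\star + \tfrac{1}{n}\bX\bvarepsilon\bigr)$ and expand $\Etest^n(\hat{\bbeta}_\varrho)$ and $\mathbb{E}[\Etrain^n(\hat{\bbeta}_\varrho)]$ using the Gaussian structure $X = \bSigma^{1/2}Z$. Standard trace identities reduce these to expressions in $\mathrm{tr}[(\hat{\bSigma}+\varrho\bI)^{-1}\bSigma]$ and $\mathrm{tr}[(\hat{\bSigma}+\varrho\bI)^{-2}\bSigma]$; these are precisely the quantities covered by \Cref{assumption:MP-law}, which relates them to $\mathcal{S}_H(-k)$ and $\tfrac{d}{dr}\bigl(r\mathcal{S}_H(-r)\bigr)$. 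Since the data is Gaussian and the kernel is linear, the eigenlearning computation of \citet{simon2021neural} is rigorous here (cf.\ \citet{jacot2020kernel}); carrying it out yields $\mathcal{E}_{\mathtt{test}}^\ast = \sigma^2/(1-\mathcal{J}(k))$ and $\Etrain^\ast = \sigma^2(1-\mathcal{I}(k))^2/(1-\mathcal{J}(k))$, where $k$ is the eff-reg-factor from \Cref{remark:effective-factor}. The key identification cleaning up the formulas is that, under \Cref{assumption:exact-EVD} with $\lambda_i = i^{-\alpha}$ and $p/n \to 1/\gamma_\ast$, the sums $\tfrac{1}{n}\sum_{i=1}^p (1 + k(i/n)^\alpha)^{-1}$ and its squared analogue are Riemann sums for $\mathcal{I}(k)$ and $\mathcal{J}(k)$. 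The self-consistent equation (\Cref{assumption:self-consistent-kr}) then reads $1 = r/k + \mathcal{I}(k)$, which rearranges to $r = \mathcal{R}(k) := k(1-\mathcal{I}(k))$.

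For the structural claims, set $k_{\mathtt{crit}} := \inf\{k \ge 0 : \mathcal{I}(k) < 1\}$, which equals $0$ when $\gamma_\ast \ge 1$ and is strictly positive when $\gamma_\ast < 1$. Differentiating under the integral gives $\mathcal{I}'(k) < 0$, whence $\mathcal{R}'(k) = (1-\mathcal{I}(k)) - k\mathcal{I}'(k) > 0$ on $(k_{\mathtt{crit}},\infty)$, proving claim 2. Combined with continuity and the boundary behavior $\mathcal{R}(k_{\mathtt{crit}}^+)=0$, $\mathcal{R}(\infty)=\infty$, this yields the existence and uniqueness of $k(r)$ in claim 1. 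Claim 3 follows because $\mathcal{J}(k) > 0$ (the integrand is strictly positive) and $\mathcal{J}(k) \le \mathcal{I}(k) < 1$ for $k > k_{\mathtt{crit}}$, so $\mathcal{E}_{\mathtt{test}}^\ast > \sigma^2$. Claim 4 follows by dominated convergence: the integrand of $\mathcal{J}$ vanishes pointwise for $x > 0$ as $k \to \infty$ and is bounded by $1$.

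The main obstacle is the first step, namely rigorously passing from random traces like $\mathrm{tr}[(\hat{\bSigma}+\varrho\bI)^{-2}\bSigma]$ to the deterministic Stieltjes-transform limits supplied by \Cref{assumption:MP-law} under the power-law scaling $\varrho_n = r n^{-\alpha}$, and in particular upgrading the a.s.\ convergence to convergence in expectation for $\Etrain^\ast$ (this is where \Cref{assumption:positivity-condition} will be useful, since the derivative term controls the variance contribution). Once those limits are in hand, rewriting them as $\mathcal{I},\mathcal{J}$ and verifying the monotonicity and limit claims is routine calculus.
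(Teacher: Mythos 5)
Your proposal matches the paper's own route: invoke the eigenlearning framework (rigorous here by \citet{jacot2020kernel} since the data is Gaussian and the kernel linear), identify the self-consistent sum as a Riemann sum converging to $\mathcal{I}(k)$, differentiate under the integral to get $\tfrac{dr}{dk}=1-\mathcal{J}(k)$, and then finish the structural claims by elementary calculus on $\mathcal{I},\mathcal{J}$; your alternative definition $k_{\mathtt{crit}}=\inf\{k\ge 0:\mathcal{I}(k)<1\}$ and the decomposition $\mathcal{R}'(k)=(1-\mathcal{I}(k))-k\mathcal{I}'(k)>0$ are equivalent to the paper's (which instead takes $k_{\mathtt{crit}}$ to be the largest zero of $\mathcal{R}$ and shows $\mathcal{R}'=1-\mathcal{J}>0$ via $\mathcal{J}\le\mathcal{I}<1$). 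One small misdirection in your closing remark: \Cref{assumption:positivity-condition} plays no role in this proposition (it is used only for the rapid-norm-growth result, \Cref{theorem:polynomial-lower-bound-2}); the paper relies on the eigenlearning equations and the Riemann-sum limit directly, without any additional almost-sure-to-expectation upgrade mediated by that assumption.
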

For the proof of \Cref{proposition:trade-off}, see \Cref{section:exact-evd}. At a high level, to prove the first part we apply the eigenlearning framework while accounting for the additional layer of complexity due to the power-law spectra.
For the ``Moreover'' part, we directly analyze \(\mathcal{R}\), \(\Etrain^{\ast}\) and \(\Etest^{\ast}\) as functions of \(r\), \(k\) and \(\alpha\).
Now, note that \Cref{proposition:trade-off} immediately implies \Cref{theorem:trade-off}.  

We now discuss some of the consequences of \Cref{proposition:trade-off}
  First,  \(\mathcal{R}\) is a bijection that relates the eff-reg-factor \(k\) and the regularizer-factor \(r\). The plot of \(\mathcal{R}\) is visualized in \Cref{figure:R-func}.
  Furthermore, note that \(\lim_{k\to+\infty}\Etest^{\ast} = \sigma^{2}\) precisely states that the test error can be made arbitrarily close to the noise floor as \(k\) (equivalently, \(r\)) goes to infinity.


\begin{figure}
  \centering
\includegraphics[width=0.42\textwidth]{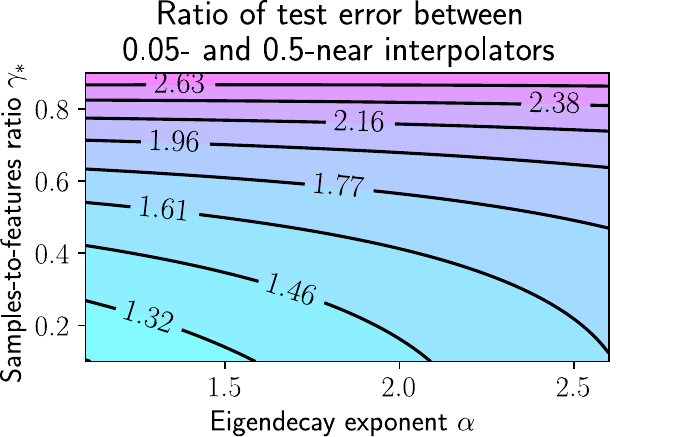}
\caption{\label{figure:steepness}\emph{Larger  power-law spectra exponent implies larger asymptotic excess test error when interpolating to \(5\%\) of the noise compared to \(50\%\) of the noise floor}. Let
\(\mathcal{E}^{\ast}_{\mathtt{test}}=\mathcal{E}^{\ast}_{\mathtt{test}}(\alpha, \gamma_\ast, \tau)
\) be as in \Cref{theorem:trade-off} where we make the dependency on parameters \(\alpha, \gamma_\ast, \tau\) explicit.
The color and contour line of  plot shows the ratio of test errors at two levels of nearness of interpolation \(
\mathcal{E}^{\ast}_{\mathtt{test}}(\alpha, \gamma_\ast, 0.05)
/\mathcal{E}^{\ast}_{\mathtt{test}}(\alpha, \gamma_\ast, 0.5)
\) over an \((\alpha,\gamma_\ast)\)-grid.
}
\end{figure}

Using \Cref{proposition:trade-off} with the implemenation of \({}_2F_1\) in \textsc{SciPy}, we illustrate the trade-off between the training error versus the test error in \Cref{fig:experiment-norms}-Right and the test error ratio in \Cref{figure:steepness}.

\begin{remark}[Data-independent regularizer-selection]\label{remark:tuning}
Let \(\tau  \in (0,\sigma^2)\) be a desired level of nearness of interpolation.
To select a regularizer \(\varrho_n\) that achieves \(\tau'\)-near-interpolation for \(\tau' \approx \tau\), we use the following method: \emph{Step 1}.\ Find \(k \in (k_{\mathtt{crit}}, +\infty)\) such that \(\Etrain^* = \Etrain^*(k) = \tau\), using the expression for \(\Etrain^*(k)\) in \Cref{proposition:trade-off}. Let \(k_\tau\) be such a \(k\).
\emph{Step 2}.\ Next, set \(r := \mathcal{R}(k_{\tau})\), where \(\mathcal{R}\) is also as in  \Cref{proposition:trade-off}. 
\emph{Step 3}.\ Set the regularizer 
 as \(\varrho_n := rn^{\alpha}\). 
\end{remark}

\section{RAPID NORM GROWTH}\label{section:rapid-norm-growth}
\Cref{theorem:polynomial-lower-bound} can be proven in even greater generality under random matrix-theoretic assumptions.

\begin{proposition}[Rapid norm growth under RMT]\label{theorem:polynomial-lower-bound-2}
  Suppose 
  Assumptions 
  \ref{assumption:EVD},
  \ref{assumption:self-consistent-kr}, 
  \ref{assumption:MP-law},
  and 
\ref{assumption:positivity-condition} all hold.
  For any \(r > 0\), let \(\varrho_{n} := r n^{-\alpha}\) be the regularizer for the ridge regression.
Then
\(\mathbb{E}[\|\hat{\bbeta}_{\varrho_{n}}\|_{2}^{2}] = \Omega(n^{\alpha})\).
\end{proposition}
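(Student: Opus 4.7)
The plan is to lower bound $\mathbb{E}[\|\hat{\bbeta}_{\varrho_n}\|_2^2]$ by its noise-only contribution and then recognize the resulting trace as exactly the quantity controlled in \Cref{assumption:positivity-condition}. First I would apply the push-through identity $(\hat{\bSigma}+\varrho\bI_p)^{-1}\bX=\bX(\check{\bG}+\varrho\bI_n)^{-1}$ to rewrite the ridge regressor on the sample side as $\hat{\bbeta}_\varrho = n^{-1}\bX(\check{\bG}+\varrho\bI_n)^{-1}\by$, so that
\[
\|\hat{\bbeta}_\varrho\|_2^2 \;=\; \frac{1}{n}\,\by^{\top}(\check{\bG}+\varrho\bI_n)^{-1}\check{\bG}(\check{\bG}+\varrho\bI_n)^{-1}\by.
\]
Conditioning on $\bX$ and using $\by=\bX^{\top}\bbeta^{\star}+\bvarepsilon$ with $\mathbb{E}[\bvarepsilon\bvarepsilon^{\top}]=\sigma^{2}\bI_n$, the signal contribution is a nonnegative quadratic form in $\bbeta^{\star}$, so discarding it produces the noise-only lower bound
\[
\mathbb{E}_{\bvarepsilon}[\|\hat{\bbeta}_\varrho\|_2^2] \;\ge\; \frac{\sigma^{2}}{n}\,\operatorname{tr}\!\bigl((\check{\bG}+\varrho\bI_n)^{-1}\check{\bG}(\check{\bG}+\varrho\bI_n)^{-1}\bigr).
\]

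Next I would rescale the Gram matrix so that it lines up with the form appearing in \Cref{assumption:positivity-condition}. Let $\mu_i$ denote the eigenvalues of $\check{\bG}_n$ and $\tilde\mu_i:=n^{\alpha}\mu_i$ those of $n^{\alpha}\check{\bG}_n$. The choice $\varrho_n=rn^{-\alpha}$ yields $\mu_i+\varrho_n=n^{-\alpha}(\tilde\mu_i+r)$, so the trace above equals $n^{\alpha}\sum_{i=1}^{n}\tilde\mu_i/(\tilde\mu_i+r)^{2}$. The elementary identity $\mu/(\mu+r)^{2}=\frac{d}{dr}[r/(\mu+r)]$ together with the eigenvalue expansion $\mathcal{S}_{\mathtt{esd}(n^{\alpha}\check{\bG}_n)}(-r)=\frac{1}{n}\sum_i (\tilde\mu_i+r)^{-1}$ then gives
\[
\sum_{i=1}^{n}\frac{\tilde\mu_i}{(\tilde\mu_i+r)^{2}} \;=\; n\,\frac{d}{dr}\!\bigl[r\,\mathcal{S}_{\mathtt{esd}(n^{\alpha}\check{\bG}_n)}(-r)\bigr].
\]
Assembling these pieces and taking the full expectation yields the key inequality
\[
\mathbb{E}[\|\hat{\bbeta}_{\varrho_n}\|_2^2] \;\ge\; \sigma^{2}\, n^{\alpha}\, \mathbb{E}\!\left[\frac{d}{dr}\!\bigl(r\,\mathcal{S}_{\mathtt{esd}(n^{\alpha}\check{\bG}_n)}(-r)\bigr)\right].
\]
By \Cref{assumption:positivity-condition}, the rightmost expectation converges to a strictly positive constant $C_r>0$, so for all sufficiently large $n$ it is bounded below by $C_r/2$. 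Therefore $\mathbb{E}[\|\hat{\bbeta}_{\varrho_n}\|_2^2]=\Omega(n^{\alpha})$, as claimed.

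The main obstacle I anticipate is the rescaling step: one must track the $n^{\alpha}$ factor faithfully through both the matrix inverses and the Stieltjes transform, and then use the derivative identity to convert the trace into \emph{exactly} the quantity whose positivity is hypothesized in \Cref{assumption:positivity-condition}. Once this alignment is achieved, the conclusion is immediate. Note that Assumptions \ref{assumption:EVD}, \ref{assumption:self-consistent-kr}, and \ref{assumption:MP-law} are not used directly in this lower bound; they are needed only to guarantee that \Cref{assumption:positivity-condition} is actually verifiable in concrete settings such as the HDA model (cf.\ \Cref{theorem:MP}).
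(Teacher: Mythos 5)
Your proof is correct and follows essentially the same route as the paper's: lower bound $\mathbb{E}\|\hat{\bbeta}_{\varrho_n}\|_2^2$ by the noise-only contribution, recognize the resulting trace as $n^{\alpha+1}$ times $\tfrac{d}{dr}\bigl(r\,\mathcal{S}_{\mathtt{esd}(n^{\alpha}\check{\bG})}(-r)\bigr)$, and invoke \Cref{assumption:positivity-condition}. The only difference is cosmetic: you pass to the Gram side immediately via the push-through identity, whereas the paper first bounds $\mathbb{E}\|\hat{\bbeta}_{\varrho_n}\|_2^2$ in terms of $\hat{\bSigma}$ (\Cref{proposition:generic-lower-bound-helper}) and then transfers to $\check{\bG}$ through \Cref{lemma:gram-to-covariance}; your route fuses those two steps and so avoids the separate gram-to-covariance conversion.
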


The goal of this section is to sketch the proof for \Cref{theorem:polynomial-lower-bound-2}.
Complete proofs of all results are included in the Appendix. Throughout, we assume the setting of \Cref{section:notations}.

The next step is the following:
\begin{proposition}\label{proposition:lower-bound-with-deriv-of-MP-law}
Let \(\varrho := r n^{-\alpha}\). Then we have
\[\mathbb{E}
\|
\hat{\bbeta}_{\varrho}\|_2^2
\ge
n^{\alpha}
\sigma^2
\cdot
\mathbb{E}
\big[
\frac{d }{d r}(
r \mathcal{S}_{\mathtt{esd}(n^{\alpha} \check{\bG})}(-r)
)
\big].\]
\end{proposition}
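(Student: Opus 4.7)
The plan is to start from the closed form $\RREG_{\varrho} = (\hat{\bSigma} + \varrho\bI_p)^{-1}\bX\by/n$ arising from the normal equations of \eqref{equation:KRR}, substitute the data model $\by = \bX^\top \bbeta^\star + \bvarepsilon$, and integrate out the noise. Since $\bvarepsilon$ is zero-mean with covariance $\sigma^2 \bI_n$ and independent of $\bX$, the conditional expectation $\mathbb{E}_{\bvarepsilon}[\|\RREG_{\varrho}\|_2^2]$ splits into a non-negative ``signal'' quadratic form in $\bbeta^\star$ plus a ``noise'' trace term; discarding the signal piece yields
\[
\mathbb{E}_{\bvarepsilon}\|\RREG_{\varrho}\|_2^2 \;\ge\; \tfrac{\sigma^{2}}{n^{2}}\,\mathrm{tr}\bigl(\bX^\top(\hat{\bSigma}+\varrho\bI_p)^{-2}\bX\bigr).
\]

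Next I would rewrite the trace in terms of the Gram matrix $\check{\bG} = \bX^\top\bX/n$ using the push-through identity $\bX^\top(\hat{\bSigma}+\varrho\bI_p)^{-1} = (\check{\bG}+\varrho\bI_n)^{-1}\bX^\top$, which follows from $\bX^\top\bX\bX^\top = \bX^\top\bX\bX^\top$. Applying it twice together with $\bX^\top\bX = n\check{\bG}$ gives $\bX^\top(\hat{\bSigma}+\varrho\bI_p)^{-2}\bX = n\,\check{\bG}(\check{\bG}+\varrho\bI_n)^{-2}$, whence the lower bound becomes $(\sigma^2/n)\sum_{i=1}^n \lambda_i/(\lambda_i+\varrho)^2$ with $\lambda_1,\dots,\lambda_n$ the eigenvalues of $\check{\bG}$.

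The final step is to recognise this eigenvalue sum as a scaled derivative of the Stieltjes transform appearing in the statement. Writing $\mu_i := n^\alpha \lambda_i$ (so the $\mu_i$ are the eigenvalues of $n^\alpha\check{\bG}$), a direct differentiation of $r\,\mathcal{S}_{\mathtt{esd}(n^\alpha\check{\bG})}(-r) = \tfrac{1}{n}\sum_i r/(\mu_i+r)$ gives $\tfrac{d}{dr}[r\,\mathcal{S}_{\mathtt{esd}(n^\alpha\check{\bG})}(-r)] = \tfrac{1}{n}\sum_i \mu_i/(\mu_i+r)^2$. Because $\varrho = rn^{-\alpha}$ forces $\mu_i + r = n^\alpha(\lambda_i+\varrho)$, substitution shows $\sum_i \lambda_i/(\lambda_i+\varrho)^2 = n^{1+\alpha}\tfrac{d}{dr}[r\,\mathcal{S}_{\mathtt{esd}(n^\alpha\check{\bG})}(-r)]$; plugging this back into the bound above and taking the outer expectation over $\bX$ produces exactly the claimed inequality.

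The analytic content is minimal, so I do not anticipate a serious obstacle. The one place that demands care is bookkeeping the $n^\alpha$ rescaling between the eigenvalues of $\check{\bG}$ and those of $n^\alpha\check{\bG}$ — specifically, confirming that the factor $1/n$ coming from the trace and the factor $n^{1+\alpha}$ produced by the Stieltjes derivative combine to leave precisely the $n^\alpha$ that appears in the statement.
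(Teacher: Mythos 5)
Your proof is correct and follows essentially the same strategy as the paper: decompose $\by=\bX^\top\bbeta^\star+\bvarepsilon$, drop the nonnegative signal term, reduce the noise contribution to a trace, and identify that trace with the derivative of $r\,\mathcal{S}_{\mathtt{esd}(n^{\alpha}\check{\bG})}(-r)$. The one place you take a slightly shorter path is in converting the trace to the Gram matrix: you apply the push-through identity $\bX^\top(\hat{\bSigma}+\varrho\bI_p)^{-1}=(\check{\bG}+\varrho\bI_n)^{-1}\bX^\top$ directly to land on $n\,\check{\bG}(\check{\bG}+\varrho\bI_n)^{-2}$, whereas the paper first expresses the trace via the Stieltjes transform of $n^{\alpha}\hat{\bSigma}$ and then invokes a separate Gram-to-covariance lemma (\Cref{lemma:gram-to-covariance}, the identity $\mathcal{S}_{\mathtt{esd}(c\hat{\bSigma})}(z)=\gamma\,\mathcal{S}_{\mathtt{esd}(c\check{\bG})}(z)-(1-\gamma)/z$) to transfer to $\check{\bG}$; the extra $p(1-\gamma)$ term drops out after differentiation in $r$. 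Your route avoids that auxiliary lemma entirely, which is a mild simplification, and your scaling bookkeeping ($1/n$ from the trace times $n^{1+\alpha}$ from the derivative identity giving $n^{\alpha}$) is correct.
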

The proof of \Cref{proposition:lower-bound-with-deriv-of-MP-law}
and other omitted proofs in this section can be found in 
\Cref{appendix:section:proof-lower-bound}.

Given \Cref{proposition:lower-bound-with-deriv-of-MP-law}, the proof of \Cref{theorem:polynomial-lower-bound-2} is straightforward:

\begin{proof}[Proof of \Cref{theorem:polynomial-lower-bound-2}]

Let \[L :=
    \lim_{n\to\infty}
\mathbb{E}
\big[
\tfrac{d}{dr}( r \mathcal{S}_{\mathtt{esd}(n^{\alpha} \check{\bG})}(-r))
\big]
>0\]
be as in
\Cref{assumption:positivity-condition}.
Thus, for all \(n \gg 0\) sufficiently large, we have
\(\mathbb{E}
\big[
\tfrac{d}{dr}( r \mathcal{S}_{\mathtt{esd}(n^{\alpha} \check{\bG})}(-r))
\big]
> L/2 > 0
\).
  By
  \Cref{proposition:lower-bound-with-deriv-of-MP-law}, we get that
\(\mathbb{E}
\|
\hat{\bbeta}_{\varrho}\|_2^2
\ge
n^{\alpha}
\sigma^2
\cdot
\frac{L}{2}
\) for all \(n \gg 0\), as desired.
\end{proof}

\subsection{\(\alpha\)-scaled limiting spectral distribution}\label{section:limitingd-istribution}
 In this section, we check that power-law spectra covariance matrices has an \(\alpha\)-scaled limiting spectral distribution.
In other words, \Cref{assumption:exact-EVD} implies \Cref{assumption:EVD}. Note that this is necessary because we have proved 
\Cref{theorem:polynomial-lower-bound-2}
rather than
\Cref{theorem:polynomial-lower-bound}. So we need to make sure that the special case, i.e., \Cref{theorem:polynomial-lower-bound}, is indeed the ``special case''.
\begin{definition}
  Given a measure \(\mu\) on \(\mathbb{R}\), we let \(\mathtt{cdf}[ \mu ]\) denote the cumulative distribution function (CDF) of \(\mu\).
\end{definition}

\begin{proposition}\label{proposition:non-negativity-of-the-ST}
Under \Cref{assumption:exact-EVD}, we have that \Cref{assumption:EVD} holds. In other words,
 \[\lim_{n\to\infty}
\mathtt{cdf}[ \mathtt{esd}(n^{\alpha} \bSigma) ](t)
=
\begin{cases}
1- \gamma_{\ast} t^{-1/\alpha} &: t \ge \gamma_{\ast}^{\alpha} \\
0 & \mbox{otherwise.}
\end{cases}\]




\end{proposition}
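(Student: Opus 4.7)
The plan is a direct computation of the CDF followed by taking its pointwise limit. Under \Cref{assumption:exact-EVD}, $\bSigma = \mathrm{diag}(1^{-\alpha},\dots,p^{-\alpha})$, so the eigenvalues of $n^\alpha \bSigma$ are exactly $(n/i)^\alpha$ for $i=1,\dots,p$. For any $t>0$ I would therefore write
\[
\mathtt{cdf}[\mathtt{esd}(n^\alpha \bSigma)](t) = \frac{1}{p}\,\#\bigl\{i \in \{1,\dots,p\}:\ (n/i)^\alpha \le t\bigr\} = \frac{1}{p}\,\#\bigl\{i:\ i \ge n t^{-1/\alpha}\bigr\},
\]
where the second equality uses the monotonicity of $x \mapsto x^\alpha$ on $\mathbb{R}_{>0}$. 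For $t \le 0$ the CDF is trivially $0$, matching the claim.

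Next I would split into three cases based on how $n t^{-1/\alpha}$ compares to $p$, using $n/p \to \gamma_{\ast}$. Case 1, $t < \gamma_{\ast}^\alpha$: then $n t^{-1/\alpha} > n \gamma_{\ast}^{-1}$, which for $n$ large exceeds $p$, so the count is $0$ and the CDF equals $0$. Case 2, $t > \gamma_{\ast}^\alpha$: then eventually $1 \le n t^{-1/\alpha} < p$, so the count equals $p - \lceil n t^{-1/\alpha}\rceil + 1$, and
\[
\lim_{n\to\infty} \frac{p - \lceil n t^{-1/\alpha}\rceil + 1}{p} = 1 - \lim_{n\to\infty} \frac{n}{p}\, t^{-1/\alpha} = 1 - \gamma_{\ast} t^{-1/\alpha},
\]
since the ceiling error and additive constant are $O(1)$ while $p \to \infty$. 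Case 3, $t = \gamma_{\ast}^\alpha$: the formula gives $1 - \gamma_{\ast}\cdot \gamma_{\ast}^{-1} = 0$, which matches the value from case 1 by continuity, so the pointwise limit holds there as well.

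There is no real obstacle here — the argument is elementary counting combined with $n/p \to \gamma_{\ast}$. The one bookkeeping point worth noting is that the limiting CDF I obtain is continuous on all of $\mathbb{R}$ (the value at $t = \gamma_{\ast}^\alpha$ agrees from both sides), so pointwise convergence of CDFs is equivalent to convergence in distribution in the sense required by \Cref{assumption:EVD}. This confirms that \Cref{assumption:exact-EVD} is indeed a special case of \Cref{assumption:EVD} with $\alpha$-scaled LSD given by the stated CDF, legitimizing the reduction from \Cref{theorem:polynomial-lower-bound} to \Cref{theorem:polynomial-lower-bound-2}.
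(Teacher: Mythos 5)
Your argument is correct and follows essentially the same route as the paper's proof: compute the eigenvalues $(n/i)^\alpha$ of $n^\alpha\bSigma$, express the CDF at $t$ as a count of indices $i$ with $i \ge n t^{-1/\alpha}$, identify that count as $p - \lceil n t^{-1/\alpha}\rceil + 1$ in the nontrivial range, and pass to the limit via $n/p \to \gamma_\ast$. One small slip in Case~1: the claim that ``$n\gamma_\ast^{-1}$ for $n$ large exceeds $p$'' is not guaranteed, since that is equivalent to $n/p > \gamma_\ast$, which may fail if $\gamma_n \uparrow \gamma_\ast$; the correct justification is that $t^{-1/\alpha} > \gamma_\ast^{-1}$ \emph{strictly} while $p/n \to \gamma_\ast^{-1}$, so eventually $t^{-1/\alpha} > p/n$ and hence $n t^{-1/\alpha} > p$, giving count zero. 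Your explicit check of continuity at $t = \gamma_\ast^\alpha$, which justifies reading pointwise CDF convergence as convergence in distribution, is a nice touch the paper leaves implicit.
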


\begin{figure}
  \centering
\includegraphics[width=0.47\textwidth]{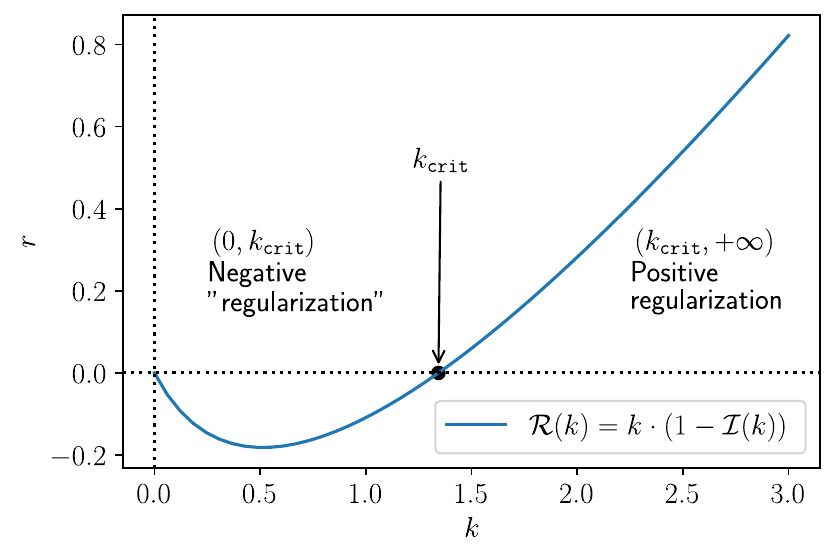}
\caption{\label{figure:R-func}\emph{The \(\mathcal{R}(k)\) function from \Cref{proposition:trade-off}}.
The \(x\)-axis is the input \(k\). Note that for \(k < k_{\mathtt{crit}}\) the regularizer \(r\) is negative.
Although we are only interested in the \((k_{\mathtt{crit}}, +\infty)\) portion, negative regularizers have been studied by \citet{tsigler2020benign,wu2020optimal}.
}
\end{figure}

\subsection{Looseness of norm-based generalization bounds}\label{section:looseness}
Conjecturally, a norm-based generalization bound for near-interpolators should have the following form: under suitable assumptions, with high probability
\[
    \sup_{\bbeta:\|\bbeta\| \le B, \Etrain(\bbeta) \le \tau} \Etest(\bbeta) =O\left(\frac{B^2 \text{Tr}(\bSigma)}{n}\right).
\]
where \(\tau \in \mathbb{R}_{\ge 0}\).
For \(\tau =0\), the best known bound for (perfect) interpolators is given by \citet[Corollary 1]{koehler2021uniform} under  Gaussian assumption on the data $X \sim \mathcal{N}(0,\bSigma)$
and $B \ge \|\bbeta^\star\|$.

To the best of our knowledge, there is no known extension to the case of near-interpolators, i.e., where \(\tau >0\).
However, such bound is not informative for our scenario, since by \Cref{theorem:trade-off} and \Cref{theorem:polynomial-lower-bound}, it is possible to choose \(\varrho_n\) such that 
\underline{\emph{1}}.\
\(\mathbb{E}[\Etrain(\hat{\bbeta}_{\varrho_n})] \to \tau\),
\underline{\emph{2}}.\
\(\Etest(\hat{\bbeta}_{\varrho_n}) \to c \in \mathbb{R}_{\ge 0}\), and 
\underline{\emph{3}}.\
$\|\bbeta_{\varrho_n}\|^2 = \Omega(n^{\alpha})$ for any \(\alpha >0\). Thus, the bound goes to infinity while \(\Etest(\hat{\bbeta}_{\varrho_n})\) is finite.


\section{EXPERIMENTS}\label{section:experiment}
We run two types of synthetic experiments. The first type, plotted in \Cref{fig:experiment-norms}, employs (linear) ridge regression.
The second type, plotted in \Cref{figure:experiment-NN}
employs neural networks. The data for both types of experiments are drawn from the HDA model (\Cref{example:HDA}).
Moreover, we have conducted experiments on several real world regression datasets from
the UCI regression collection. 

\subsection{Experiments on synthetic data}
To generate \Cref{fig:experiment-norms}-left,
we run experiments with \(\alpha \in \{1.25, 2.5\}\) and \(\tfrac{n}{p} = \gamma_\ast = \tfrac{2}{3}\).
We sweep over the train MSE parameter \(\Etrain^*\) to explore the trade-off between the train and testing MSE in linear ridge regression as described in \Cref{theorem:trade-off}.
The value of \(\Etrain^*\) are sweeped on a linearly-spaced grid of size \(16\) from \(0.05\sigma^2\) to \(0.8\sigma^2\).
The parameters are \(n_{\mathtt{train}} = 5000\), \(n_{\mathtt{test}} = 1000\), \(\gamma_{\ast} = 0.5\), \(\alpha = 1.75\) and \(\sigma^{2}=1\).

The regularizer achieving a desired training MSE \(\tau \in (0,\sigma^2)\) is chosen according to the method described in \Cref{remark:tuning}.
We sample \(\bbeta^{\star} \in \mathbb{R}^{p}\) such that \(\bbeta^{\star}_{i}\) are i.i.d Gaussian with zero mean and variance \(= 10/p\).

The same set up is used for \Cref{fig:experiment-norms}-right, except we sweep over \(n_{\mathtt{train}}\) rather than over \(\Etrain^*\).
The value of \(n_{\mathtt{train}}\) are sweeped on a logarithmically-spaced grid of size \(20\) from \(200\) to \(5000\).

For \Cref{figure:experiment-NN}, the identical set-up is used as in the \Cref{fig:experiment-norms}, except ridge regression is replaced with neural networks\footnote{We use the default settings in sklearn, except with early stopping for near-interpolation.} \emph{during training}.
We emphasize that the ground truth data (i.e., the teacher) is still generated via the same linear function \(\bbeta^\star\).
All code for the experiments are included in \Cref{section:code-appendix}.

\begin{remark}\label{remark:NN-experiments}
As discussed in the introduction, near-interpolating neural network and its interpolation-generalization trade-offs exhibit similar phenomenon as in the case of ridge regression.
Namely, larger  power-law spectra exponent implies larger asymptotic excess test error when interpolating to \(5\%\) of the noise compared to \(50\%\) of the noise floor, for instance.
\end{remark}
\begin{remark}
    Intriguely, in both the ridge regression (\Cref{fig:experiment-norms}) and neural network (\Cref{figure:experiment-NN}) experiments, the  setting with to larger value of the norm-growth exponent \(\alpha\) (right panels)
    results in poorer trade-offs (left panels). For ridge regression, this is explained by the ``Moreover'' part of \Cref{theorem:trade-off}. We believe it is an interesting future direction whether this behavior can be proved in the neural network setting.
\end{remark}

\iftoggle{conferenceVersion}{
    
\begin{figure*}
  \centering
\includegraphics[width = 0.41\textwidth]{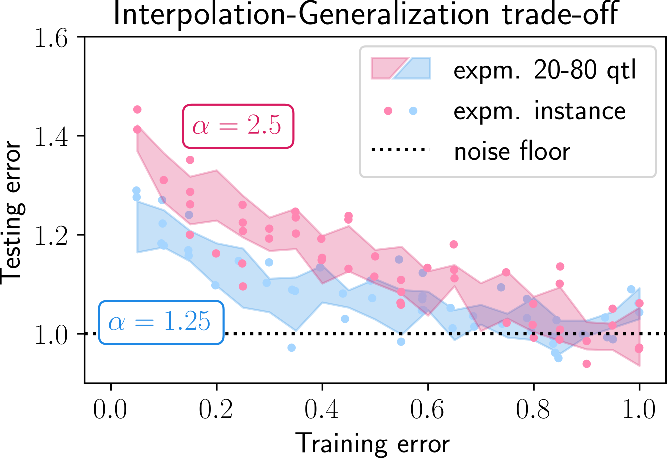}
~
~
~
\includegraphics[width = 0.43\textwidth]{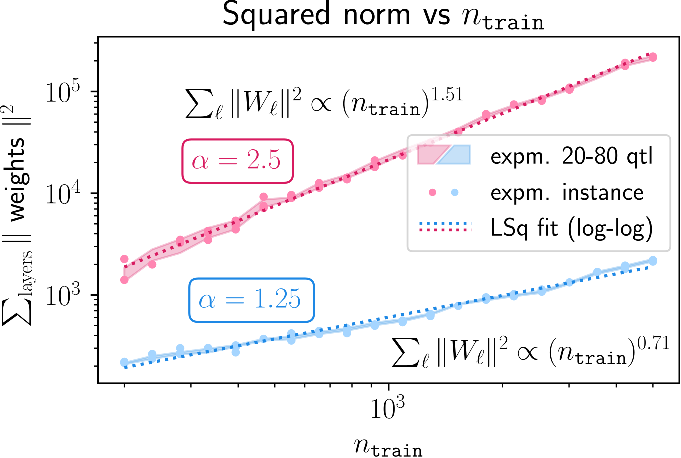}

\vspace{1em}

\includegraphics[width = 0.41\textwidth]{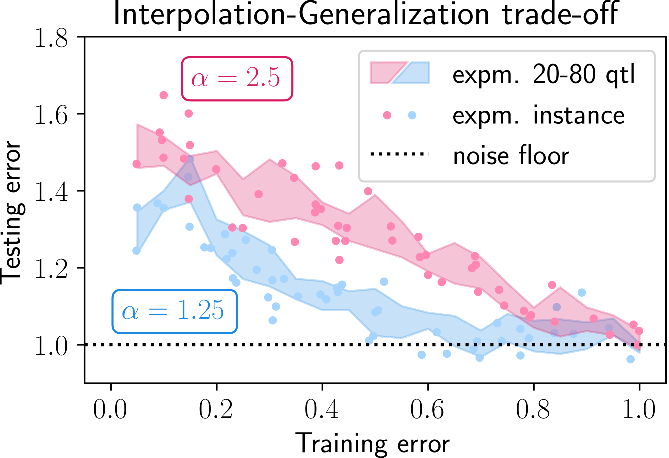}
~
~
\includegraphics[width = 0.43\textwidth]{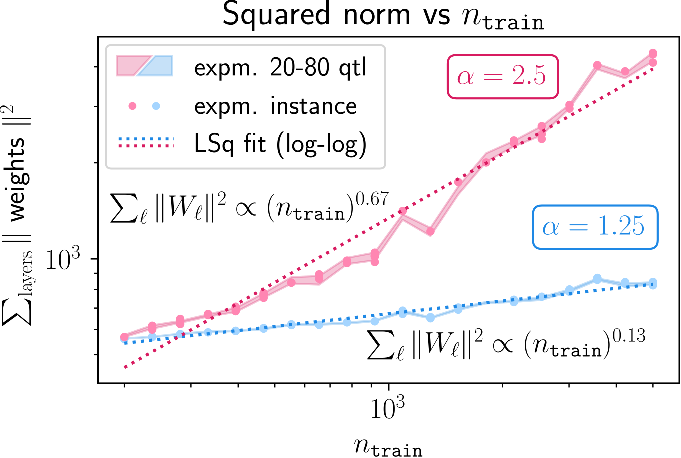}
\caption{\label{figure:experiment-NN}Experiments with neural networks (top row: 1-hidden layer, bottom row: 5-hidden layers). Analogous to \Cref{fig:experiment-norms}.
See \Cref{section:experiment} and \Cref{remark:NN-experiments} for details.}
\end{figure*}
}{
    
\begin{figure}[h]
  \centering
\includegraphics[width = 0.43\textwidth]{figures/nn1_norm_growth.eps}
~
~
~
\includegraphics[width = 0.41\textwidth]{figures/nn1_trade_off.eps}

\vspace{1em}

\includegraphics[width = 0.43\textwidth]{figures/nn5_norm_growth.eps}
~
~
\includegraphics[width = 0.41\textwidth]{figures/nn5_trade_off.eps}
\caption{\label{figure:experiment-NN}Experiments with neural networks (top row: 1-hidden layer, bottom row: 5-hidden layers). Analogous to \Cref{fig:experiment-norms}.
See \Cref{section:experiment} and \Cref{remark:NN-experiments} for details.}
\end{figure}
}
\subsection{Experiments on UCI datasets}

 We conduct experiments on the \texttt{forest} and the \texttt{stock} datasets from
the UCI regression collection \citep{kellyuciml}. Using neural tangent kernels \citep{arora2019harnessing}, we observe power-law spectra in the kernel matrices for both of these datasets (\Cref{figure:forest}-right and \Cref{figure:stock}-right).
In this subsection, we discuss the experiments on the \texttt{forest} dataset in relation to our theoretical results.
Due to space constraints, we refer the reader to \Cref{sec:real-world-experiment} for details of the experimental setup.

In \Cref{figure:forest}-left, note that the curve corresponding to \texttt{forest.2-1} has the fastest spectra decay and simultaneously the worst trade-off.
Evidently, larger decay exponent corresponds to a poorer trade-off, especially for near-interpolators, i.e., as the training error approaches \(0\). This is in agreement with our theoretical results under random matrix theory assumptions illustrated in \Cref{figure:steepness}.
Similar phenomenon occurs for the \texttt{stock} dataset.
See \Cref{sec:real-world-experiment}.


\section{ADDITIONAL RELATED WORKS AND NOVELTY OF OUR WORK}\label{section:more-related-works}

\emph{Technical Novelty of Theorem \ref{theorem:polynomial-lower-bound}.}  Prior works of \cite{Hastie2019SurprisesIH, Derezinski2019ExactEF} require a lower bound on the smallest eigenvalue of the covariance matrix. Hence, the scenario studied in this paper is not amenable to their results. Further, Theorem \ref{theorem:trade-off} shows that when we have a sharper decay (larger $\alpha$) of the eigenvalues, we have a worse tradeoff between the test and training error. Hence, we show that the scenario from \cite{Hastie2019SurprisesIH} is the most benign one. This well conditioning assumption is relaxed in \cite{cheng2022dimension}, however, they require that $\|\Sigma^{-1/2}\beta\|$ is finite. We do not require this assumption. Finally, \cite{dobriban2018high} does not need the well-conditioning assumption but instead needs to assume  an isotropic distribution on $\beta$. Since we work with fixed $\beta$, their results do not apply.

\emph{Trade-offs in interpolation-based learning}. Prior works \citep{ghosh2022universal, belkin2018understand,sonthalia2023under} have studied the tradeoff between near interpolation and generalization. 
For regression,  previous works have also studied the fundamental trade-off in learning algorithms between
overparametrization and (Lipschitz) smoothness
\citep{bubeck2021universal}, and
robustness and smoothness
\citep{zhang2022many}.

\emph{Loosenss of existing generalization bounds}.
\citet[Theorem 1]{belkin2018understand} establishes for \emph{classification} that the RKHS norm of a ``near-interpolating'' classifier
grows at rate  \(\Omega(\exp(n^{1/p}))\).
The growth is  unbounded if \(n = \Omega(\exp(p))\). If  the number of samples \(n = \Theta(\mathrm{poly}(p))\), then the lower bound does not grow to infinity. While our results are for regression and thus not directly comparable, our lower bound  is meaningful in the more practical \(n \propto p\) regime. 

\noindent\emph{Power-law spectra datasets}.
Synthetic data with artificial power law EVD covariance have been used frequently as toy examples
\citep{berthier2020tight,mallinar2022benign}.
On real datasets, power law EVD is often observed to describe neural tangent kernels (NTK) well in practice, including on
\textsc{MNIST}
(\citep[Fig,\ 4]{bahri2021explaining}
and
\citep[Fig.\ 2]{velikanov2022tight}),
\textsc{Fashion-MNIST}
\citep[Fig.\ 7]{cui2021generalization}
\textsc{Caltech 101}
\citep[Fig.\ 1]{murray2022characterizing},
\textsc{CIFAR-100} \citep[Fig.\ 3]{wei2022more}.

\begin{figure*}
  \centering
\includegraphics[width = 0.44\textwidth]{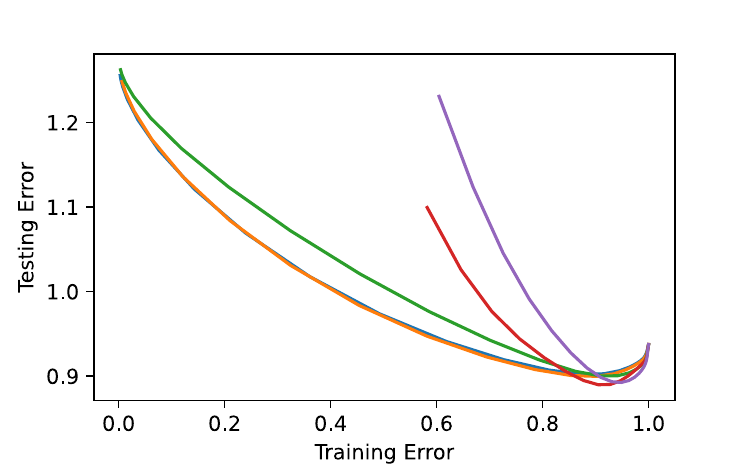}
~
~
~
\includegraphics[width = 0.42\textwidth]{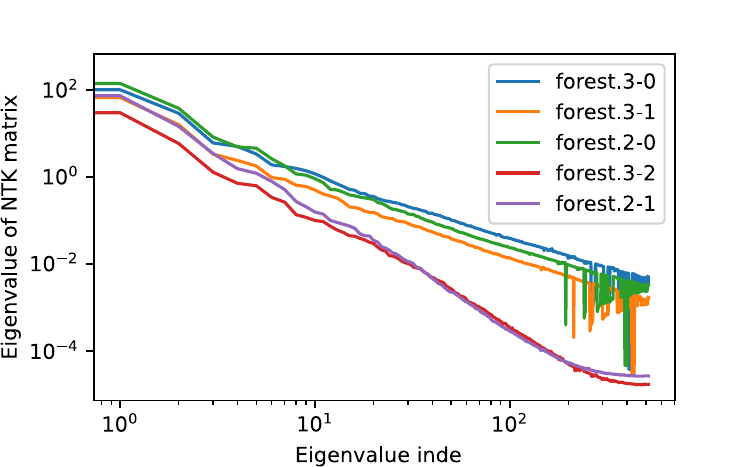}
\caption{\emph{Left}. Training/testing error trade-off on the ``\texttt{forest}'' dataset from the UCI regression dataset collection using kernel ridge regression with the neural tangent kernel.
Each curve is labeled by ``\texttt{DatasetName.d-f}''
where ``\texttt{d}'' and ``\texttt{f}'' represents the number
of layers and the number of \emph{fixed} layers 
in the NTK corresponding to ReLU networks.
\emph{Right}. The eigenvalue index vs eigenvalue plot of the NTK matrix exhibits power-law spectra. A tiny value is added to the eigenvalues  for better visualization on the log-scale.
}\label{figure:forest}
\end{figure*}

\emph{Theoretical machine learning works using power-law spectra}.
\citet{bordelon2020spectrum} shows that power law EVD implies power law learning curve.
\citet[\S 6.2]{velikanov2021explicit} computes the power law EVD exponent for certain NTKs with ReLU to be \(\alpha = 1 + \frac{1}{d}\).
\citet{murray2022characterizing} computes the EVD for NTKs with several different activations.
\citet[Theorem 6]{bartlett2020benign}
shows that benign overfitting occurs when the covariance matrix eigenvalues \(\lambda_i = i^{-1} \log^{-b}(i+1)\) for \(b > 1\).
\citet{mallinar2022benign} studies power law decay for \(\alpha \ge 1\) and proposes a taxonomy of overfitting into three categories: catastropic, tempered and benign. The EVD condition is also known as
the \emph{capacity condition} in the kernel ridge regression literature.
See
\citet{bietti2021sample}
and the references there-in.

\emph{Random matrix theory (RMT)}. The signal processing research community have long been using RMT for theoretical analysis \citep{couillet2012signal}.
Increasingly RMT has been applied to machine learning as well as a key tool for analysis.

In particular, \citet{dobriban2018high,Hastie2019SurprisesIH,jacot2020kernel,liang2020just} have applied RMT for (kernel) ridge regression, \citet{sonthalia2023training, kausik2023generalization} use it to understand generalization of linear denoisers, 
\citet{paquette2022halting,paquette2021sgd} uses the so-called local Marchenko-Pastur law  \citep{knowles2017anisotropic} to analyze gradient-based algorithms.
Finally, \citet{wei2022more} also applies such local law to analyze the so-called \emph{generalized cross-
validation (GCV) estimator}.

\section{DISCUSSION AND LIMITATIONS}

We conclude with several future research directions that we believe will be fruitful:

\noindent \emph{Connection to early stopping}.
Typically, early stopping prevents the trained algorithm from perfectly interpolating the data. Can early stopped learning theory results,
e.g., \citet{ji2021early,kuzborskij2022learning}, be applied to analyze near-interpolators?

\noindent \emph{Near-interpolators and uniform convergence generalization bound}.
Is possible to use uniform convergence-based approach to give non-vacuous generalization bound under the setting studied in this work?
This question has already been raised by  \citet{dobriban2018high} in the context of classification in a similar setting.
An interesting question is if classical learning theory can be used to obtain results that are currently only obtained via random matrix theoretic or similar techniques.
Another approach is to extend the results of \citet{koehler2021uniform} to the near-interpolation setting.


\textbf{Limitations}. Our work is restricted to analyzing a random matrix model. Understanding the phenomenon uncovered in this paper in more general models and additional real world settings will be needed. Moreover, our work does not rule out the existence of uniform convergence generalization bound.

\iftoggle{conferenceVersion}{
\textbf{Code availability}. Code used to run and plot the experiments shown in all figures is available at \url{https://github.com/YutongWangUMich/Near-Interpolators-Figures}.

\subsubsection*{Acknowledgements}
YW acknowledges support from the Eric and Wendy Schmidt AI in Science Postdoctoral
Fellowship, a Schmidt Futures program.
WH acknowledges support from the Google Research Scholar program.
}{}

\bibliography{references}

\begin{thebibliography}{}

\bibitem[Arora et~al., 2019]{arora2019harnessing}
Arora, S., Du, S.~S., Li, Z., Salakhutdinov, R., Wang, R., and Yu, D. (2019).
\newblock Harnessing the power of infinitely wide deep nets on small-data
  tasks.
\newblock In {\em International Conference on Learning Representations}.

\bibitem[Arous and Guionnet, 2008]{arous2008spectrum}
Arous, G.~B. and Guionnet, A. (2008).
\newblock The spectrum of heavy tailed random matrices.
\newblock {\em Communications in Mathematical Physics}, 278(3):715--751.

\bibitem[Bahri et~al., 2021]{bahri2021explaining}
Bahri, Y., Dyer, E., Kaplan, J., Lee, J., and Sharma, U. (2021).
\newblock Explaining neural scaling laws.
\newblock {\em arXiv preprint arXiv:2102.06701}.

\bibitem[Bai and Silverstein, 2010]{bai2010spectral}
Bai, Z. and Silverstein, J.~W. (2010).
\newblock {\em Spectral analysis of large dimensional random matrices},
  volume~20.
\newblock Springer.

\bibitem[Bartlett et~al., 2020]{bartlett2020benign}
Bartlett, P.~L., Long, P.~M., Lugosi, G., and Tsigler, A. (2020).
\newblock Benign overfitting in linear regression.
\newblock {\em Proceedings of the National Academy of Sciences},
  117(48):30063--30070.

\bibitem[Belkin et~al., 2018]{belkin2018understand}
Belkin, M., Ma, S., and Mandal, S. (2018).
\newblock To understand deep learning we need to understand kernel learning.
\newblock In {\em International Conference on Machine Learning}, pages
  541--549. PMLR.

\bibitem[Berthier et~al., 2020]{berthier2020tight}
Berthier, R., Bach, F., and Gaillard, P. (2020).
\newblock Tight nonparametric convergence rates for stochastic gradient descent
  under the noiseless linear model.
\newblock {\em Advances in Neural Information Processing Systems},
  33:2576--2586.

\bibitem[Bietti et~al., 2021]{bietti2021sample}
Bietti, A., Venturi, L., and Bruna, J. (2021).
\newblock On the sample complexity of learning with geometric stability.
\newblock In {\em Advances in Neural Information Processing Systems}.

\bibitem[Bordelon et~al., 2020]{bordelon2020spectrum}
Bordelon, B., Canatar, A., and Pehlevan, C. (2020).
\newblock Spectrum dependent learning curves in kernel regression and wide
  neural networks.
\newblock In {\em International Conference on Machine Learning}, pages
  1024--1034. PMLR.

\bibitem[Bubeck and Sellke, 2021]{bubeck2021universal}
Bubeck, S. and Sellke, M. (2021).
\newblock A universal law of robustness via isoperimetry.
\newblock {\em Advances in Neural Information Processing Systems},
  34:28811--28822.

\bibitem[Canatar et~al., 2021]{canatar2021spectral}
Canatar, A., Bordelon, B., and Pehlevan, C. (2021).
\newblock Spectral bias and task-model alignment explain generalization in
  kernel regression and infinitely wide neural networks.
\newblock {\em Nature communications}, 12(1):1--12.

\bibitem[Cheng and Montanari, 2022]{cheng2022dimension}
Cheng, C. and Montanari, A. (2022).
\newblock Dimension free ridge regression.
\newblock {\em arXiv preprint arXiv:2210.08571}.

\bibitem[Couillet and Debbah, 2012]{couillet2012signal}
Couillet, R. and Debbah, M. (2012).
\newblock Signal processing in large systems: A new paradigm.
\newblock {\em IEEE Signal Processing Magazine}, 30(1):24--39.

\bibitem[Cui et~al., 2021]{cui2021generalization}
Cui, H., Loureiro, B., Krzakala, F., and Zdeborov{\'{a}}, L. (2021).
\newblock Generalization error rates in kernel regression: The crossover from
  the noiseless to noisy regime.
\newblock In {\em Advances in Neural Information Processing Systems}, pages
  10131--10143.

\bibitem[Derezinski et~al., 2019]{Derezinski2019ExactEF}
Derezinski, M., Liang, F.~T., and Mahoney, M.~W. (2019).
\newblock Exact expressions for double descent and implicit regularization via
  surrogate random design.
\newblock {\em ArXiv}, abs/1912.04533.

\bibitem[Dobriban and Wager, 2018]{dobriban2018high}
Dobriban, E. and Wager, S. (2018).
\newblock High-dimensional asymptotics of prediction: Ridge regression and
  classification.
\newblock {\em The Annals of Statistics}, 46(1):247--279.

\bibitem[Dutka, 1984]{dutka1984early}
Dutka, J. (1984).
\newblock The early history of the hypergeometric function.
\newblock {\em Archive for History of Exact Sciences}, pages 15--34.

\bibitem[Ghosh and Belkin, 2022]{ghosh2022universal}
Ghosh, N. and Belkin, M. (2022).
\newblock A universal trade-off between the model size, test loss, and training
  loss of linear predictors.
\newblock {\em arXiv preprint arXiv:2207.11621}.

\bibitem[Goel and Klivans, 2017]{goel2017eigenvalue}
Goel, S. and Klivans, A. (2017).
\newblock Eigenvalue decay implies polynomial-time learnability for neural
  networks.
\newblock {\em Advances in Neural Information Processing Systems}, 30.

\bibitem[Hastie et~al., 2022]{Hastie2019SurprisesIH}
Hastie, T., Montanari, A., Rosset, S., and Tibshirani, R.~J. (2022).
\newblock {Surprises in {H}igh-{D}imensional {R}idgeless {L}east {S}quares
  {I}nterpolation}.
\newblock {\em The Annals of Statistics}.

\bibitem[Jacot et~al., 2020a]{jacot2020implicit}
Jacot, A., Simsek, B., Spadaro, F., Hongler, C., and Gabriel, F. (2020a).
\newblock Implicit regularization of random feature models.
\newblock In {\em International Conference on Machine Learning}, pages
  4631--4640. PMLR.

\bibitem[Jacot et~al., 2020b]{jacot2020kernel}
Jacot, A., Simsek, B., Spadaro, F., Hongler, C., and Gabriel, F. (2020b).
\newblock Kernel alignment risk estimator: Risk prediction from training data.
\newblock {\em Advances in Neural Information Processing Systems},
  33:15568--15578.

\bibitem[Ji et~al., 2021]{ji2021early}
Ji, Z., Li, J., and Telgarsky, M. (2021).
\newblock Early-stopped neural networks are consistent.
\newblock {\em Advances in Neural Information Processing Systems},
  34:1805--1817.

\bibitem[Karp and L{\'o}pez, 2017]{karp2017representations}
Karp, D.~B. and L{\'o}pez, J.~L. (2017).
\newblock Representations of hypergeometric functions for arbitrary parameter
  values and their use.
\newblock {\em Journal of Approximation Theory}, 218:42--70.

\bibitem[Kausik et~al., 2023]{kausik2023generalization}
Kausik, C., Srivastava, K., and Sonthalia, R. (2023).
\newblock Generalization error without independence: Denoising, linear
  regression, and transfer learning.
\newblock {\em arXiv preprint arXiv:2305.17297}.

\bibitem[Kelly et~al., 2023]{kellyuciml}
Kelly, M., Longjohn, R., and Nottingham, K. (2023).
\newblock The uci machine learning repository.
\newblock \url{https://archive.ics.uci.edu}.

\bibitem[Knowles and Yin, 2017]{knowles2017anisotropic}
Knowles, A. and Yin, J. (2017).
\newblock Anisotropic local laws for random matrices.
\newblock {\em Probability Theory and Related Fields}, 169(1):257--352.

\bibitem[Koehler et~al., 2021]{koehler2021uniform}
Koehler, F., Zhou, L., Sutherland, D.~J., and Srebro, N. (2021).
\newblock Uniform convergence of interpolators: Gaussian width, norm bounds and
  benign overfitting.
\newblock {\em Advances in Neural Information Processing Systems},
  34:20657--20668.

\bibitem[Kuzborskij and Szepesv{\'a}ri, 2022]{kuzborskij2022learning}
Kuzborskij, I. and Szepesv{\'a}ri, C. (2022).
\newblock Learning lipschitz functions by gd-trained shallow overparameterized
  relu neural networks.
\newblock {\em arXiv preprint arXiv:2212.13848}.

\bibitem[Liang and Rakhlin, 2020]{liang2020just}
Liang, T. and Rakhlin, A. (2020).
\newblock Just interpolate: Kernel “ridgeless” regression can generalize.
\newblock {\em The Annals of Statistics}, 48(3):1329--1347.

\bibitem[Mahoney and Martin, 2019]{mahoney2019traditional}
Mahoney, M. and Martin, C. (2019).
\newblock Traditional and heavy tailed self regularization in neural network
  models.
\newblock In {\em International Conference on Machine Learning}, pages
  4284--4293. PMLR.

\bibitem[Mallinar et~al., 2022]{mallinar2022benign}
Mallinar, N.~R., Simon, J.~B., Abedsoltan, A., Pandit, P., Belkin, M., and
  Nakkiran, P. (2022).
\newblock Benign, tempered, or catastrophic: Toward a refined taxonomy of
  overfitting.
\newblock In {\em Advances in Neural Information Processing Systems}.

\bibitem[Murray et~al., 2022]{murray2022characterizing}
Murray, M., Jin, H., Bowman, B., and Montufar, G. (2022).
\newblock Characterizing the spectrum of the {NTK} via a power series
  expansion.
\newblock {\em arXiv preprint arXiv:2211.07844}.

\bibitem[Nakkiran et~al., 2020]{nakkiran2020optimal}
Nakkiran, P., Venkat, P., Kakade, S.~M., and Ma, T. (2020).
\newblock Optimal regularization can mitigate double descent.
\newblock In {\em International Conference on Learning Representations}.

\bibitem[Paquette et~al., 2021]{paquette2021sgd}
Paquette, C., Lee, K., Pedregosa, F., and Paquette, E. (2021).
\newblock Sgd in the large: Average-case analysis, asymptotics, and stepsize
  criticality.
\newblock In {\em Conference on Learning Theory}, pages 3548--3626. PMLR.

\bibitem[Paquette et~al., 2022]{paquette2022halting}
Paquette, C., van Merri{\"e}nboer, B., Paquette, E., and Pedregosa, F. (2022).
\newblock Halting time is predictable for large models: A universality property
  and average-case analysis.
\newblock {\em Foundations of Computational Mathematics}, pages 1--77.

\bibitem[Silverstein and Choi, 1995]{silverstein1995analysis}
Silverstein, J.~W. and Choi, S.-I. (1995).
\newblock Analysis of the limiting spectral distribution of large dimensional
  random matrices.
\newblock {\em Journal of Multivariate Analysis}, 54(2):295--309.

\bibitem[Simon et~al., 2023]{simon2021neural}
Simon, J.~B., Dickens, M., Karkada, D., and Deweese, M. (2023).
\newblock The eigenlearning framework: A conservation law perspective on kernel
  ridge regression and wide neural networks.
\newblock {\em Transactions on Machine Learning Research}.

\bibitem[Sonthalia et~al., 2023]{sonthalia2023under}
Sonthalia, R., Li, X., and Gu, B. (2023).
\newblock Under-parameterized double descent for ridge regularized least
  squares denoising of data on a line.
\newblock {\em arXiv preprint arXiv:2305.14689}.

\bibitem[Sonthalia and Nadakuditi, 2023]{sonthalia2023training}
Sonthalia, R. and Nadakuditi, R.~R. (2023).
\newblock Training data size induced double descent for denoising feedforward
  neural networks and the role of training noise.
\newblock {\em Transactions on Machine Learning Research}.

\bibitem[Tao, 2011]{tao2011stieltjes}
Tao, T. (2011).
\newblock Intuitive understanding of the {S}tieltjes transform.
\newblock MathOverflow.
\newblock Version: 2011-10-25.

\bibitem[Tsigler and Bartlett, 2020]{tsigler2020benign}
Tsigler, A. and Bartlett, P.~L. (2020).
\newblock Benign overfitting in ridge regression.
\newblock {\em arXiv preprint arXiv:2009.14286}.

\bibitem[Velikanov and Yarotsky, 2021]{velikanov2021explicit}
Velikanov, M. and Yarotsky, D. (2021).
\newblock Explicit loss asymptotics in the gradient descent training of neural
  networks.
\newblock {\em Advances in Neural Information Processing Systems},
  34:2570--2582.

\bibitem[Velikanov and Yarotsky, 2022]{velikanov2022tight}
Velikanov, M. and Yarotsky, D. (2022).
\newblock Tight convergence rate bounds for optimization under power law
  spectral conditions.
\newblock {\em arXiv preprint arXiv:2202.00992}.

\bibitem[Wang et~al., 2024]{wang2024spectral}
Wang, Z., Engel, A., Sarwate, A.~D., Dumitriu, I., and Chiang, T. (2024).
\newblock Spectral evolution and invariance in linear-width neural networks.
\newblock {\em Advances in Neural Information Processing Systems}, 36.

\bibitem[Wei et~al., 2022]{wei2022more}
Wei, A., Hu, W., and Steinhardt, J. (2022).
\newblock More than a toy: Random matrix models predict how real-world neural
  representations generalize.
\newblock In {\em Proceedings of the 39th International Conference on Machine
  Learning}, pages 23549--23588. PMLR.

\bibitem[Wu and Xu, 2020]{wu2020optimal}
Wu, D. and Xu, J. (2020).
\newblock On the optimal weighted {$\ell_2$} regularization in
  overparameterized linear regression.
\newblock {\em Advances in Neural Information Processing Systems},
  33:10112--10123.

\bibitem[Zhang et~al., 2017]{zhang2017understanding}
Zhang, C., Bengio, S., Hardt, M., Recht, B., and Vinyals, O. (2017).
\newblock Understanding deep learning requires rethinking generalization.
\newblock In {\em International Conference on Learning Representations}.

\bibitem[Zhang et~al., 2021]{zhang2021understanding}
Zhang, C., Bengio, S., Hardt, M., Recht, B., and Vinyals, O. (2021).
\newblock Understanding deep learning (still) requires rethinking
  generalization.
\newblock {\em Communications of the ACM}, 64(3):107--115.

\bibitem[Zhang et~al., 2022]{zhang2022many}
Zhang, H., Wu, Y., and Huang, H. (2022).
\newblock How many data are needed for robust learning?
\newblock {\em arXiv preprint arXiv:2202.11592}.

\end{thebibliography}


\iftoggle{conferenceVersion}{
\section*{Checklist}

 \begin{enumerate}

 \item For all models and algorithms presented, check if you include:
 \begin{enumerate}
   \item A clear description of the mathematical setting, assumptions, algorithm, and/or model. [Yes]
   \item An analysis of the properties and complexity (time, space, sample size) of any algorithm. [Not Applicable]
   \item (Optional) Anonymized source code, with specification of all dependencies, including external libraries. [Yes]
 \end{enumerate}

 \item For any theoretical claim, check if you include:
 \begin{enumerate}
   \item Statements of the full set of assumptions of all theoretical results. [Yes]
   \item Complete proofs of all theoretical results. [Yes]
   \item Clear explanations of any assumptions. [Yes]     
 \end{enumerate}

 \item For all figures and tables that present empirical results, check if you include:
 \begin{enumerate}
   \item The code, data, and instructions needed to reproduce the main experimental results (either in the supplemental material or as a URL). [Yes]
   \item All the training details (e.g., data splits, hyperparameters, how they were chosen). [Yes]
         \item A clear definition of the specific measure or statistics and error bars (e.g., with respect to the random seed after running experiments multiple times). [Yes]
         \item A description of the computing infrastructure used. (e.g., type of GPUs, internal cluster, or cloud provider). [Yes]
 \end{enumerate}

 \item If you are using existing assets (e.g., code, data, models) or curating/releasing new assets, check if you include:
 \begin{enumerate}
   \item Citations of the creator If your work uses existing assets. [Not Applicable]
   \item The license information of the assets, if applicable. [Not Applicable]
   \item New assets either in the supplemental material or as a URL, if applicable. [Not Applicable]
   \item Information about consent from data providers/curators. [Not Applicable]
   \item Discussion of sensible content if applicable, e.g., personally identifiable information or offensive content. [Not Applicable]
 \end{enumerate}

 \item If you used crowdsourcing or conducted research with human subjects, check if you include:
 \begin{enumerate}
   \item The full text of instructions given to participants and screenshots. [Not Applicable]
   \item Descriptions of potential participant risks, with links to Institutional Review Board (IRB) approvals if applicable. [Not Applicable]
   \item The estimated hourly wage paid to participants and the total amount spent on participant compensation. [Not Applicable]
 \end{enumerate}

 \end{enumerate}

}{
}

\newpage
\appendix
\onecolumn

\section*{Appendix}

\section{Proof of \Cref{theorem:trade-off} --- the exact trade-off formula}
Our goal is to calculate the
asymptotic test error 
$\Etest^{\ast}$ under the assumptions of \Cref{theorem:trade-off}.
This is accomplished through the following three steps.

The first step is to calculate the closed-form solution for the integrals defined in
\Cref{definition:IJ-functions} which are key ingredients for the expression of $\Etest^{\ast}$.
This is done in \Cref{section:closed-form-appendix}.
The second step is to relate the integrals from \Cref{definition:IJ-functions}
to the self-consistent equations in 
\Cref{equation:self-consistent-kr}.
This is done in \Cref{section:exact-evd}.
The final step is to relate the self-consistent equations \Cref{equation:self-consistent-kr} to 
the
asymptotic test error 
$\Etest^{\ast}$.
This is done in \Cref{sec:appendix:eigenlearning}.

\subsection{Closed-form expression for the integrals in \Cref{definition:IJ-functions}}\label{section:closed-form-appendix}
We prove the identities
\[
\int_{0}^{\tfrac{1}{\gamma_{\ast}}}
    \frac{dx}{1+ k {x}^{\alpha} }=
\gamma_{\ast}^{-1} F(1, \tfrac{1}{\alpha}; 1 + \tfrac{1}{\alpha}; -k \gamma_{\ast}^{-\alpha}),\,\, \mbox{and}
\]
\[
    \int_{0}^{\tfrac{1}{\gamma_{\ast}}}
    \frac{dx}{(1+ k {x}^{\alpha} )^{2}}=
\gamma_{\ast}^{-1} F(2, \tfrac{1}{\alpha}; 1 + \tfrac{1}{\alpha}; -k \gamma_{\ast}^{-\alpha}).
\]
as shown in the main text following \Cref{definition:IJ-functions}.
\begin{proof}

Let \({}_{2}F_{1}(a,b;c;z)\) be the \emph{Gauss hypergeometric function}.
Note that the function can be implemented in \textsc{SciPy} as \texttt{scipy.special.hyp2f1} and is used to plot
\Cref{fig:experiment-norms}.
Let \(\Gamma\) denote the Gamma function.
The integral representation of the Gaussian hypergeometric function is well-known and is given by\footnote{
  We used
  the formula stated in
\cite{karp2017representations}.
}
\begin{equation}
  {}_{2}F_{1}(\sigma,a;b;-z) =
  \frac{\Gamma(b)}{\Gamma(a) \Gamma(b-a)} \int_{0}^{1}
  \frac{t^{a-1} (1-t)^{b-a-1}}{(1+zt)^{\sigma}} dt.
\label{equation:hypergeometric-function-integral-rep}
\end{equation}
Moreover, \eqref{equation:hypergeometric-function-integral-rep} is finite for \(z \in \mathbb{C} \setminus (-\infty, -1]\) and \(\Re(b-a) >0 \) and \(\Re(a) > 0\).
Thus, by
\Cref{equation:hypergeometric-function-integral-rep}, we have
\[
  {}_{2}F_{1}(1, \tfrac{1}{\alpha}; 1 + \tfrac{1}{\alpha}; -k \gamma_{\ast}^{-\alpha})
  =
\frac{\Gamma(1+\tfrac{1}{\alpha})}{\Gamma(\tfrac{1}{\alpha}) \Gamma(1)}
\int_{0}^{1}
\frac{t^{1/\alpha} t^{-1}}{1+k \gamma_{\ast}^{-\alpha}t} dt
=
  \frac{1}{\alpha}
\int_{0}^{1}
  \frac{t^{1/\alpha} t^{-1}}{1+k \gamma_{\ast}^{-\alpha}t} dt
\]
where the second inequality follows from the well-known identity \(z  = \Gamma(1+z)/\Gamma(z)\) for the Gamma function.
Let \(u = t^{1/\alpha}\). Then we have \(du =
\frac{1}{\alpha}t^{1/\alpha}t^{-1} dt\). Thus, by u-substitution, we have
    \[
  \frac{1}{\alpha}
\int_{0}^{1}
  \frac{t^{1/\alpha} t^{-1}}{1+k \gamma_{\ast}^{-\alpha}t} dt
  =
\int_{0}^{1}
  \frac{1}{1+k \gamma_{\ast}^{-\alpha}u^{\alpha}} du
  =
\int_{0}^{1/\gamma_{\ast}}
  \frac{\gamma_{\ast}}{1+k x^{\alpha}} dx
    \]
    where the second inequality used u-substituted with \(x = u/\gamma_{\ast}\).
    Now, we have by the definition of \(\mathcal{I}\) in \Cref{definition:IJ-functions} that
    \[
\mathcal{I}(k)
=
\int_{0}^{1/\gamma_{\ast}}
  \frac{1}{1+k x^{\alpha}} dx
  =
  \frac{1}{\gamma_{\ast}}
\int_{0}^{1/\gamma_{\ast}}
  \frac{\gamma_{\ast}}{1+k x^{\alpha}} dx
  =
  \frac{1}{\gamma_{\ast}}
  {}_{2}F_{1}(1, \tfrac{1}{\alpha}; 1 + \tfrac{1}{\alpha}; -k \gamma_{\ast}^{-\alpha})
    \]
    as desired.
    By an analogous calculation, we get
    \(\mathcal{J}(k)
=
  \frac{1}{\gamma_{\ast}}
  {}_{2}F_{1}(2, \tfrac{1}{\alpha}; 1 + \tfrac{1}{\alpha}; -k \gamma_{\ast}^{-\alpha})
\).
\end{proof}

\subsection{Proof of
\Cref{proposition:trade-off}}\label{section:exact-evd}

We begin by analyzing the functions defined in \Cref{definition:IJ-functions}
and prove the items 1 and 2 of the ``Moreover'' part of \Cref{proposition:trade-off}:

\begin{proposition}\label{proposition:I-and-J}
  Let \(\mathcal{I}\) and \(\mathcal{J}\) be functions as defined in \Cref{definition:IJ-functions}.
  Under
  \Cref{assumption:exact-EVD} and
  \Cref{assumption:self-consistent-kr}, we have that
\(  r = \mathcal{R}(k)
  :=
  k \cdot (1  - \mathcal{I}(k))
  \)
  and
\(  \tfrac{dr}{dk}
  =
  1 - \mathcal{J}(k)
\).

Furthermore, the following holds:
\begin{enumerate}

  \item \(\mathcal{R}(k) \asymp k\) for \(k \gg 0\),

  \item There exists \(k_{\mathtt{crit}} >0\) such that \(\mathcal{R}(k_{\mathtt{crit}}) = 0\),
        \(\mathcal{R}\) is increasing and positive on
        \((k_{\mathtt{crit}}, +\infty)\).
  \item \(\mathcal{J}(k) < 1\) for \(k \in (k_{\mathtt{crit}},+\infty)\) and \(\mathcal{J}(+\infty)  = 0\).
\end{enumerate}
  \end{proposition}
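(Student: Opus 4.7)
The plan is to convert the self-consistent equation \Cref{equation:self-consistent-kr} into an integral expression involving \(\mathcal{I}\), and then deduce every remaining claim by elementary calculus on the integrals \(\mathcal{I}\) and \(\mathcal{J}\). The main observation is that under \Cref{assumption:exact-EVD}, \(\lambda_i = i^{-\alpha}\), so \(k n^{-\alpha}\lambda_i^{-1} = k(i/n)^\alpha\) and the sum in \Cref{equation:self-consistent-kr} becomes \(\frac{1}{n}\sum_{i=1}^{p}\frac{1}{1+k(i/n)^\alpha}\). On the grid \(\{i/n\}_{i=1}^{p}\) with mesh \(1/n\), this is a Riemann sum for \(\int_{0}^{p/n}\frac{dx}{1+kx^\alpha}\), which converges to \(\mathcal{I}(k)\) since \(p/n\to 1/\gamma_\ast\) and the integrand decays like \(x^{-\alpha}\) with \(\alpha>1\). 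Substituting into \Cref{equation:self-consistent-kr} gives \(1 = r/k + \mathcal{I}(k)\), i.e., \(r = k(1-\mathcal{I}(k)) = \mathcal{R}(k)\).

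Next, the proof proceeds by establishing the key algebraic identity \(\mathcal{I}(k) + k\mathcal{I}'(k) = \mathcal{J}(k)\). Differentiation under the integral sign yields \(\mathcal{I}'(k) = -\int_{0}^{1/\gamma_\ast}\frac{x^\alpha}{(1+kx^\alpha)^2}dx\), and rewriting \(\mathcal{I}(k) = \int\frac{(1+kx^\alpha)-kx^\alpha}{(1+kx^\alpha)^2}dx\) then splitting the numerator gives \(\mathcal{I}(k) = \mathcal{J}(k) - k\mathcal{I}'(k)\), i.e., the claimed identity. Differentiating \(r = k - k\mathcal{I}(k)\) and substituting in produces \(dr/dk = (1-\mathcal{I}(k)) - k\mathcal{I}'(k) = 1 - \mathcal{J}(k)\).

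For the remaining four claims, first note that both \(\mathcal{I}(k)\) and \(\mathcal{J}(k)\) tend to \(0\) as \(k\to\infty\), by dominated convergence when \(\gamma_\ast > 0\) and by the substitution \(u = k^{1/\alpha}x\) when \(\gamma_\ast = 0\) (where \(\alpha>1\) ensures the rescaled integral is finite). This gives \(\mathcal{R}(k) \asymp k\) and \(\mathcal{J}(+\infty) = 0\). In the overparameterized regimes relevant to near-interpolation, \(\mathcal{I}(0) = 1/\gamma_\ast\) is \(\ge 1\) (interpreted as \(+\infty\) when \(\gamma_\ast=0\)), and \(\mathcal{I}\) is continuous and strictly decreasing, so the intermediate value theorem supplies a unique \(k_{\mathtt{crit}} > 0\) with \(\mathcal{I}(k_{\mathtt{crit}}) = 1\), i.e., \(\mathcal{R}(k_{\mathtt{crit}}) = 0\). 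Finally, since \(\mathcal{I}'(k) < 0\) for \(k > 0\), the identity from the previous paragraph forces \(\mathcal{J}(k) < \mathcal{I}(k)\); on \((k_{\mathtt{crit}},\infty)\) monotonicity of \(\mathcal{I}\) yields \(\mathcal{I}(k) < 1\), hence \(\mathcal{J}(k) < 1\), which gives \(\mathcal{R}'(k) = 1 - \mathcal{J}(k) > 0\) and so \(\mathcal{R}\) is strictly increasing and positive on \((k_{\mathtt{crit}},\infty)\).

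The hard part will be justifying the Riemann-sum convergence in the first step, particularly when \(\gamma_\ast = 0\): one must control the tail \(\frac{1}{n}\sum_{i:\, i/n > M}\frac{1}{1 + k(i/n)^\alpha}\) uniformly in \(n\) in order to pass to the limit over an unbounded domain. This is routine given the \(x^{-\alpha}\) decay with \(\alpha > 1\), but it is the one place where any real analytical care is required; the rest of the argument is essentially algebraic bookkeeping with the closed-form integrals from \Cref{section:closed-form-appendix}.
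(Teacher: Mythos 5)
Your proposal follows essentially the same route as the paper: convert the self-consistent sum to a Riemann integral to get $r = k(1 - \mathcal{I}(k))$, differentiate to get $dr/dk = 1 - \mathcal{J}(k)$, use $\mathcal{I},\mathcal{J} \to 0$ for item 1 and for $\mathcal{J}(+\infty)=0$, locate $k_{\mathtt{crit}}$ as the zero of $1-\mathcal{I}(k)$, and use $\mathcal{J} \le \mathcal{I}$ to get positivity of $\mathcal{R}'$ past $k_{\mathtt{crit}}$. A few small differences are worth noting, all in your favor. First, you derive $\tfrac{dr}{dk} = 1-\mathcal{J}$ via the explicit identity $\mathcal{I}(k) + k\mathcal{I}'(k) = \mathcal{J}(k)$ (from $\tfrac{1}{1+kx^\alpha} = \tfrac{1}{(1+kx^\alpha)^2} + \tfrac{kx^\alpha}{(1+kx^\alpha)^2}$); the paper just says ``by Leibniz rule'' and later establishes $\mathcal{J}\le\mathcal{I}$ separately via $1+kx^\alpha \le (1+kx^\alpha)^2$, whereas your identity gives both facts at once. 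Second, for the existence of $k_{\mathtt{crit}} > 0$ you observe that $\mathcal{I}(0) = 1/\gamma_\ast$ and invoke $\gamma_\ast < 1$ (overparameterization) so that $\mathcal{I}$ crosses $1$; the paper instead defines $k_{\mathtt{crit}}$ as the largest zero of $\mathcal{R}$ and only argues $k_{\mathtt{crit}} \ge 0$, so it does not actually establish strict positivity — your argument is the more complete one, at the cost of making the overparameterization hypothesis explicit. Third, you flag the tail-control issue in the Riemann-sum limit when $\gamma_\ast = 0$; the paper glosses over this (``follows from the definition of the Riemann integral''), so you are right that this is a spot where care is needed, and your dominated-convergence / $u = k^{1/\alpha}x$ substitution handles it correctly since $\alpha > 1$.
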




\begin{proof}[Proof of \Cref{proposition:I-and-J}]
We begin by proving the first part: that
\(  r = \mathcal{R}(k)
  :=
  k \cdot (1  - \mathcal{I}(k))
  \)
  and
\(  \tfrac{dr}{dk}
  =
  1 - \mathcal{J}(k)
\). Rewrite the limit in \Cref{equation:self-consistent-kr} as follows:
  \[
    \lim_{n\to\infty} \frac{1}{n} \sum_{i=1}^{p}
    \frac{1}{1+ k n^{-\alpha} \lambda_{i}^{-1}}
    =
    \lim_{n\to\infty} \frac{1}{n} \sum_{i=1}^{n/\gamma}
    \frac{1}{1+ k {(i/n)}^{\alpha} }
    =
    \int_{0}^{1/\gamma_{\ast}}
    \frac{dx}{1+ k {x}^{\alpha} }
  \]
The right-most equality follows from the definition of the (Riemann) integral.
If \(\gamma_{\ast} = 0\), then \(1/\gamma_{\ast} = +\infty\) and the above is interpreted as an improper Riemann integral.
Now, rearranging \Cref{equation:self-consistent-kr}, we get the desired formula of \(  r = \mathcal{R}(k)
  :=
  k \cdot (1  - \mathcal{I}(k))
  \).
The formula for \(\frac{dr}{dk}\) follows by ``differentiating under the integral'' (Leibniz integral rule).

For the first item of the ``Furthermore'' part, it suffices to show that \(\lim_{k\to+\infty} \mathcal{I}(k) =0\). This follows from the fact that \( \lim_{k \to +\infty}\frac{1}{1 + kx^{\alpha}} =0 \) for all \(x > 0\), integrability of the function \((1+x^{\alpha})^{-1}\) over \(\mathbb{R}_{\ge 0}\), and the dominated convergence theorem.
Likewise, \(\lim_{k\to \infty}\mathcal{J}(k) =0\) as well.

For the second item of the ``Furthermore'' part, we note that for all \(x\) sufficiently large, we have
\(\tfrac{dr}{dk} > 0\) since \(\lim_{k\to \infty}\mathcal{J}(k) =0\).
Now, let \(k_{\mathtt{crit}}\) be the largest real number such that \(\mathcal{R}(k_{\mathtt{crit}}) = 0\).
Since \(\mathcal{R}(0) = 0\), we must have \(k_{\mathtt{crit}} \ge 0\).

For all \(k > k_{\mathtt{crit}}\), we claim that \(\mathcal{I}(k) < 1\). To see this, assume the contrary. Then by the fact that \(\lim_{k\to+\infty} \mathcal{I}(k) = 0\) and the intermediate value theorem, there must exists \(k'\) such that
\(k' > k\) such that
\(\mathcal{I}(k') = 1\) which implies that \(\mathcal{R}(k') = 0\). This contradicts the maximality of \(k_{\mathtt{crit}}\).

Finally, since \(1+kx^{\alpha} \le (1+ kx^{\alpha})^{2}\) for all \(k \ge 0\) and \(x \ge 0\), we have that \(\mathcal{I}(k) \ge \mathcal{J}(k)\) for all such \(k\)'s.
Thus, by the previous claim, for all \(k > k_{\mathtt{crit}}\), we have
\(1 > \mathcal{I}(k) \ge \mathcal{J}(k)\). This proves that
\(\frac{dr}{dk} >0\) for all \(k > k_{\mathtt{crit}}\), as desired.
\end{proof}

\subsection{Review of the eigenlearning framework \citep{simon2021neural}
}\label{sec:appendix:eigenlearning}

Before proceeding with finishing the proof of \Cref{proposition:trade-off}, we briefly review the eigenlearning framework.
\citet{simon2021neural} calculates the test error for the estimator
\begin{equation}
  \check{\bbeta}_{\delta} :=
  \bX(\bX^{\top}\bX + \delta \bI_{n})^{-1}y
  =
  \bX(n\check{\bG} + \delta \bI_{n})^{-1}y
\label{equation:simon}
\end{equation}
for kernel ridge regression using the so-called \emph{eigenlearning equations} \citep[Section 4.1]{simon2021neural}.
Below, we recall  some relevant parts of the framework:
\begin{definition}[Eigenlearning eqn.\ specialized to setting in \Cref{section:setup}]\label{definition:eigenlearning-framework}
  Suppose that the ground truth regression function is linear, i.e., \(f(x) = x^{\top} \bbeta^{\star}\) for some \(\bbeta^{\star} \in \mathbb{R}^{p}\).
Let \(\delta\) and \(\kappa\) satisfy
the equation
\begin{equation}
  n = \frac{\delta}{\kappa} +   \sum_{i=1}^{p} \frac{\lambda_{i}}{\lambda_{i} + \kappa}.
\label{equation:simon-self-consistent-eqn}
\end{equation}
Define the following \(n\)-dependent quantities:
  \begin{enumerate}
    \item
\emph{Overfitting coefficient}:
\(  \mathcal{E}_{\mathtt{coef}} := n \frac{d \kappa}{d \delta}
\)

    \item
\emph{Testing error}:
          \(\Etest := \mathcal{E}_{\mathtt{coef}} (\sigma^{2} + C)\)
          where \[C = \textstyle \sum_{i=1}^{p} (1- \mathcal{L}_{i}) (\beta^{\star}_{i})^{2}
          \quad\mbox{and}\quad \mathcal{L}_{i} := \frac{\lambda_{i}}{\lambda_{i} + \kappa}.\]

    \item \emph{Training error}:
          \(  \Etrain
  := \frac{\delta^{2}} {n^{2} \kappa^{2}} \Etest
\).
  \end{enumerate}
\end{definition}
\begin{proof}
  [Proof of \Cref{proposition:trade-off}]
\citet{simon2021neural} uses a different scaling for ridge regression than the one we use. To bridge the different notations, we first resolve this discrepancy.
Comparing \Cref{equation:simon} with the expression
in
\Cref{lemma:RR-solution},
 if we let \(\delta := n \varrho\), then the expressions are equivalent, i.e.,
\(
\check{\beta}_{\delta} = \hat{\beta}_{\varrho}
\). To see this, note that
\begin{align*}
\check{\beta}_{\delta}
=
\check{\beta}_{n\varrho}
&=
  X(X^{\top}X + n \varrho\mathbb{I}_{n})^{-1}y
\\&=
  (XX^{\top} + n \varrho\mathbb{I}_{p})^{-1}Xy
  \quad \because
  \mbox{\Cref{lemma:woodbury}}
                           \\& =
  (n (n^{-1}XX^{\top} +  \varrho\mathbb{I}_{p}))^{-1}Xy
                           \\& =
  ( \hat{\Sigma} +  \varrho\mathbb{I}_{p})^{-1}\tfrac{1}{n}Xy
  = \hat{\beta}_{\varrho} \quad \because{\mbox{Definition of \(\hat{\beta}_{\varrho}\)}}
\end{align*}
Below, let \(r>0\) be arbitrary.
Furthermore, we claim that as \({n\to\infty}\), we have \(r,k\) satisfies
\Cref{equation:self-consistent-kr} if and only if
the tuple \((\delta , \kappa ) := ( nr n^{-\alpha},  kn^{-\alpha})\) satisfies \Cref{equation:simon-self-consistent-eqn}:
\begin{align*}
&n = \frac{\delta}{\kappa} + \sum_{i=1}^{p} \frac{\lambda_{i}}{\lambda_{i} + \kappa}
    \iff
  n = \frac{nrn^{-\alpha}}{kn^{-\alpha}} + \sum_{i=1}^{p} \frac{\lambda_{i}}{\lambda_{i} + kn^{-\alpha}}
    \iff
  1 = \frac{r}{k} +\frac{1}{n} \sum_{i=1}^{p} \frac{1}{1+ kn^{-\alpha}\lambda_{i}^{-1}}.
\end{align*}
Taking limit as \(n\to\infty\), we have proved the claim.

Next, we show that \(\lim_{n\to\infty}C =0\) where \(C\) is as in \Cref{definition:eigenlearning-framework}.
We have
\(\mathcal{L}_{i} := \frac{\lambda_{i}}{\lambda_{i} + \kappa}
=
\frac{1}{1 + k (i/n)^{\alpha}}
\).
Note that \(\lim_{n\to\infty} \mathcal{L}_{i} = 1\) for all fixed \(i\).
On the other hand, since
\(\sup_{n=1,2\dots}\|\beta^{\star}\|_{2} < +\infty\), dominated convergence theorem implies that \(\lim_{n\to\infty} C = 0\)

We claim that the following asymptotic expression for the testing and training error hold:
\begin{equation}
    \label{equation:asymp-tr-te}
  \lim_{n\to\infty} \mathcal{E}_{\mathtt{test}} =
  \sigma^{2}
\cdot
  \tfrac{dk}{dr}
  \quad \mbox{and}
  \quad
  \lim_{n\to\infty}
  \mathcal{E}_{\mathtt{train}}
  =
  \sigma^{2} \cdot \tfrac{r^{2}}{k^{2}}\cdot\tfrac{dk}{dr}
\end{equation}
where \(r\) and \(k\)
satisfy
\Cref{equation:self-consistent-kr}
from from \Cref{assumption:self-consistent-kr}.

To see this, first note that the overfitting coefficient satisfies
\[
  \mathcal{E}_{\mathtt{coef}} := n \tfrac{d \kappa}{d \delta}
  =
  n \tfrac{d \kappa}{d \varrho} \tfrac{d \varrho}{d \delta}
  =
  n \tfrac{d \kappa}{d \varrho} \tfrac{1}{n}
  =
  \tfrac{d \kappa}{d \varrho}
  =
  \tfrac{d k}{dr}.
\]

Thus, we obtain the following asymptotic expression
\[
  \lim_{n\to\infty} \mathcal{E}_{\mathtt{test}} = \mathcal{E}_{\mathtt{coef}} \cdot \sigma^{2} =
  \sigma^{2}
\cdot
  \tfrac{dk}{dr}.
\]

On the other hand, the training error is given by
\[
  \mathcal{E}_{\mathtt{train}}
  = \tfrac{\delta^{2}} {n^{2} \kappa^{2}} \mathcal{E}_{\mathtt{test}}
  = \tfrac{\varrho^{2}} { \kappa^{2}} \mathcal{E}_{\mathtt{test}}
  =
  \mathcal{E}_{\mathtt{test}}
  \cdot
  \tfrac{r^{2}} {k^{2}}.
\]
Therefore,
\(  \lim_{n\to\infty}
  \mathcal{E}_{\mathtt{train}}
  =
  \sigma^{2} \cdot \tfrac{r^{2}}{k^{2}}\cdot\tfrac{dk}{dr}
  \).
  This proves
    \eqref{equation:asymp-tr-te}, as desired.
\end{proof}






%

\section{Proof of \Cref{theorem:polynomial-lower-bound-2} --- rapid norm growth under RMT assumptions
}\label{appendix:section:proof-lower-bound}

The first key technical step the following:
\begin{proposition}\label{proposition:generic-lower-bound-helper}
  \( \mathbb{E}
\| \hat{\bbeta}_{\varrho}\|_2^2
\ge
n^{-1}
\sigma^2
\mathbb{E}[
\mathrm{tr}(
(\hat{\bSigma}  + \varrho \bI_{p})^{-2}
\hat{\bSigma})]
 \).
\end{proposition}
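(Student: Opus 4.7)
The plan is to compute $\mathbb{E}\|\hat{\bbeta}_\varrho\|_2^2$ by conditioning on $\bX$ and integrating out the label noise $\bvarepsilon$, then discard the signal-only contribution (which is manifestly nonnegative) to obtain the stated lower bound.

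First, I would write the closed form for the ridge regressor. Solving the normal equations gives
\[
\hat{\bbeta}_\varrho \;=\; (\hat{\bSigma} + \varrho \bI_{p})^{-1}\,\tfrac{1}{n}\bX \by.
\]
Substituting $\by = \bX^{\top}\bbeta^{\star} + \bvarepsilon$ and recalling $\hat{\bSigma} = n^{-1}\bX\bX^{\top}$, this decomposes cleanly as
\[
\hat{\bbeta}_\varrho \;=\; \underbrace{(\hat{\bSigma} + \varrho \bI_{p})^{-1} \hat{\bSigma}\,\bbeta^{\star}}_{\text{signal}} \;+\; \underbrace{\tfrac{1}{n}(\hat{\bSigma} + \varrho \bI_{p})^{-1} \bX \bvarepsilon}_{\text{noise}}.
\]

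Next, I would take the squared norm and condition on $\bX$. Since $\bvarepsilon \perp \bX$ with $\mathbb{E}[\bvarepsilon] = 0$ and $\mathbb{E}[\bvarepsilon\bvarepsilon^{\top}] = \sigma^{2}\bI_{n}$, the cross term between signal and noise vanishes in expectation, yielding
\[
\mathbb{E}_{\bvarepsilon}\big[\|\hat{\bbeta}_\varrho\|_{2}^{2}\,\big|\,\bX\big]
\;=\; \big\|(\hat{\bSigma} + \varrho \bI_{p})^{-1} \hat{\bSigma}\,\bbeta^{\star}\big\|_{2}^{2}
\;+\; \tfrac{\sigma^{2}}{n^{2}}\,\mathrm{tr}\!\left((\hat{\bSigma} + \varrho \bI_{p})^{-2}\bX\bX^{\top}\right).
\]
Using $\bX\bX^{\top} = n\hat{\bSigma}$ in the trace term and dropping the nonnegative signal term gives the pointwise (in $\bX$) inequality
\[
\mathbb{E}_{\bvarepsilon}\big[\|\hat{\bbeta}_\varrho\|_{2}^{2}\,\big|\,\bX\big]
\;\ge\; \tfrac{\sigma^{2}}{n}\,\mathrm{tr}\!\left((\hat{\bSigma} + \varrho \bI_{p})^{-2}\hat{\bSigma}\right).
\]

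Finally, I would apply the tower property and take expectation over $\bX$ on both sides to obtain the claimed bound. There is no real obstacle here: the only thing to be careful about is the independence $\bvarepsilon \perp \bX$ (which is part of the setting in \Cref{section:notations}) so that conditioning is valid, and the fact that $(\hat{\bSigma} + \varrho\bI_{p})^{-1}$ commutes with $\hat{\bSigma}$ (they are simultaneously diagonalizable), which is what lets me write the trace in the symmetric form $\mathrm{tr}((\hat{\bSigma} + \varrho\bI_{p})^{-2}\hat{\bSigma})$ appearing in the statement.
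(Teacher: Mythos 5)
Your proof is correct and takes essentially the same route as the paper: write the closed-form ridge solution, decompose it into a signal piece $(\hat{\bSigma}+\varrho\bI_p)^{-1}\hat{\bSigma}\bbeta^\star$ and a noise piece $\tfrac{1}{n}(\hat{\bSigma}+\varrho\bI_p)^{-1}\bX\bvarepsilon$, kill the cross term using $\bvarepsilon\perp\bX$ and $\mathbb{E}[\bvarepsilon]=0$, drop the nonnegative signal term, and evaluate the noise term via the trace and $\mathbb{E}[\bvarepsilon\bvarepsilon^\top]=\sigma^2\bI_n$. The paper phrases this through $\bM:=(\hat{\bSigma}+\varrho\bI_p)^{-1}\tfrac{1}{n}\bX$ and independence of $\bM^\top\bM$ from $\bvarepsilon\bvarepsilon^\top$ rather than explicit conditioning on $\bX$ plus the tower rule, but these are the same argument; one small note is that the rearrangement into $\mathrm{tr}((\hat{\bSigma}+\varrho\bI_p)^{-2}\hat{\bSigma})$ is most directly the cyclic property of trace (commutativity of $\hat{\bSigma}$ with its resolvent, which you invoke, also works but is not needed).
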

\begin{proof}
  Below, for brevity we let
\(\ba := \bX^{\top} \bbeta^{\star}\) and
   \(\bM :=
  ( \hat{\bSigma}  + \varrho \bI_{p})^{-1}
\frac{1}{n}
\bX
  \).
  Recall the closed-form solution for \Cref{equation:KRR} is given by the formula
\begin{equation}
\hat{\bbeta}_{\varrho} :=
( \hat{\bSigma}  + \varrho \bI_{p})^{-1}
\tfrac{1}{n}
\bX y.
\label{lemma:RR-solution}
\end{equation}
Thus,
  \[  \hat{\bbeta}_{\varrho} =
( \hat{\bSigma}  + \varrho \bI_{p})^{-1}
\frac{1}{n}
\bX y
=
  ( \hat{\bSigma}  + \varrho \bI_{p})^{-1}
\frac{1}{n}
\bX
(f(\bX) + {\bvarepsilon})
=
\bM(\ba+\bvarepsilon).\]
  Thus,
  \[
\| \hat{\bbeta}_{\varrho}\|_2^2
=
(\ba+\bvarepsilon)^{\top}\bM^{\top}\bM (\ba+\bvarepsilon)
=
\underbrace{\ba^{\top} \bM^{\top} \bM \ba}_{\ge 0}
+
\bvarepsilon^{\top} \bM^{\top} \bM \bvarepsilon
+
2\bvarepsilon^{\top} \bM^{\top} \bM \ba.
  \]
  Note that \(\bvarepsilon \perp \bM^{\top} \bM \ba\) since \(\bvarepsilon \perp \bX\).
Thus, since \(\mathbb{E}[\bvarepsilon] = 0\), we have
  \[
\mathbb{E}[\| \hat{\bbeta}_{\varrho}\|_2^2]
=
\mathbb{E}[(\ba+\bvarepsilon)^{\top}\bM^{\top}\bM (\ba+\bvarepsilon)
]
\ge
\mathbb{E}[
\bvarepsilon^{\top} \bM^{\top} \bM \bvarepsilon
]
=
\mathbb{E}[
\mathrm{tr}
(
\bM^{\top} \bM \bvarepsilon\bvarepsilon^{\top}
)
].
  \]
  Since \(\bM^{\top} \bM \perp \bvarepsilon \bvarepsilon^{\top}\), we have
  \[
\mathbb{E}[
\mathrm{tr}
(
\bM^{\top} \bM \bvarepsilon\bvarepsilon^{\top}
)
]
=
\mathrm{tr}
(
\mathbb{E}[
\bM^{\top} \bM]
\mathbb{E}[\bvarepsilon\bvarepsilon^{\top}
]
)
=
\mathrm{tr}
(
\mathbb{E}[
\bM^{\top} \bM
\sigma^{2} \bI_{n}
]
)
=
\sigma^{2}
\mathbb{E}[
\mathrm{tr}
(
\bM^{\top} \bM)].
  \]
  On the other hand,
  \(     \bM^{\top} \bM
    =
    \frac{1}{n}
( \hat{\bSigma}  + \varrho \bI_{p})^{-1}
\hat{\bSigma}
( \hat{\bSigma}  + \varrho \bI_{p})^{-1}
\).
Using the cyclic property of trace, we get the desired inequality.
\end{proof}

\begin{proof}[Proof sketch of \cref{proposition:generic-lower-bound-helper}]
  We first simplify \(\|\hat{\bbeta}_{\varrho}\|_{2}^{2}\) using the well-known formula for ridge regression:
Next, let
   \(\bM :=
  ( \hat{\bSigma}  + \varrho \bI_{p})^{-1}
\tfrac{1}{n}
\bX
  \).
Using the independence of \(\bX\)  and \(\bvarepsilon\), we get
  \(\mathbb{E}[\| \hat{\bbeta}_{\varrho}\|_2^2]
\ge
\mathbb{E}[
\mathrm{tr}
(
\bM^{\top} \bM \bvarepsilon\bvarepsilon^{\top}
)
]
\).
  Since \(\bM^{\top} \bM \) and \(\bvarepsilon \bvarepsilon^{\top}\) are also independent, we have
  \[
\mathbb{E}[
\mathrm{tr}
(
\bM^{\top} \bM \bvarepsilon\bvarepsilon^{\top}
)
]
=
\sigma^{2}
\mathbb{E}[
\mathrm{tr}
(
\bM^{\top} \bM)].
  \]
  By
  \(     \bM^{\top} \bM
    =
    \frac{1}{n}
( \hat{\bSigma}  + \varrho \bI_{p})^{-1}
\hat{\bSigma}
( \hat{\bSigma}  + \varrho \bI_{p})^{-1}
\)
and the cyclic property of trace, we get the desired inequality.
\end{proof}

For the sake of completeness, we prove
\Cref{lemma:RR-solution} though its well-known
\begin{proof}[Proof of \Cref{lemma:RR-solution}]
  Start with the objective function
\( \mathcal{F}(\bbeta):= \frac{1}{n} \| \bX^\top \bbeta - \by\|_2^2 + \varrho \| \bbeta \|_2^2 \).
Take derivative with respect to $\bbeta$, we have
\begin{align*}
  &
\frac{1}{2}
\nabla_{\bbeta}
\left(
 \frac{1}{n} \|\bX^\top \bbeta - \by\|_2^2 + \varrho \|\bbeta\|_2^2
 \right)
 =
\frac{1}{2}
\nabla_{\bbeta}
\left(
\bbeta^\top (\hat{\bSigma}   + \varrho \bI_{p})\bbeta
-
\frac{2}{n}
\bbeta^\top \bX \by
\right)
  \\
  &
=
(  \hat{\bSigma} + \varrho \bI_{p})\bbeta
-
\frac{1}{n}
\bX \by.
\end{align*}
Since \(\nabla_{\bbeta} \mathcal{F}(\hat{\bbeta}_{\varrho}) = 0\), we are done.
\end{proof}

\begin{lemma}[Special case of Woodbury formula]\label{lemma:woodbury}
  Let \begin{align}
\bM \in \mathbb{R}^{p\times n}
\end{align} be an arbitrary matrix and
  \(\varrho \in (0,\infty)\).
  Then
  \[
    (\bM\bM^{\top} + \varrho \bI_{p})^{-1} \bM
    =
    \bM (\bM^{\top} \bM + \varrho \bI_{n})^{-1} \in \mathbb{R}^{n \times p}.
  \]
\end{lemma}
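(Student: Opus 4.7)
The plan is to prove this by the standard "push-through" identity trick: establish the polynomial identity $\bM(\bM^{\top}\bM + \varrho \bI_{n}) = (\bM\bM^{\top} + \varrho \bI_{p})\bM$ directly, then invert on both sides. First I would note that since $\varrho > 0$ and both $\bM\bM^{\top}$ and $\bM^{\top}\bM$ are positive semidefinite, the matrices $\bM\bM^{\top} + \varrho \bI_{p}$ and $\bM^{\top}\bM + \varrho \bI_{n}$ are both strictly positive definite, hence invertible. This takes care of the well-definedness of the two sides without any rank assumption on $\bM$.

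Next, I would verify the key intermediate identity by direct expansion:
\begin{equation*}
(\bM\bM^{\top} + \varrho \bI_{p})\bM = \bM\bM^{\top}\bM + \varrho \bM = \bM(\bM^{\top}\bM + \varrho \bI_{n}).
\end{equation*}
Both equalities are immediate from associativity and the fact that scalar multiples of the identity commute with any conformable matrix.

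Finally, I would left-multiply this identity by $(\bM\bM^{\top} + \varrho \bI_{p})^{-1}$ and right-multiply by $(\bM^{\top}\bM + \varrho \bI_{n})^{-1}$ to obtain
\begin{equation*}
\bM(\bM^{\top}\bM + \varrho \bI_{n})^{-1} = (\bM\bM^{\top} + \varrho \bI_{p})^{-1}\bM,
\end{equation*}
which is exactly the claim. The dimensional check $(p\times n)$ on both sides is automatic. There is no real obstacle here --- the only thing to be careful about is recording the invertibility of both augmented Gram matrices, which is guaranteed by $\varrho > 0$; everything else is a one-line manipulation and requires no appeal to SVD, rank, or the general Woodbury identity.
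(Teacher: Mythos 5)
Your proof is correct, and it takes a genuinely different and arguably cleaner route than the paper's. The paper first reduces to $\varrho = 1$ and then invokes the full Woodbury matrix identity
\[
(\bM\bM^{\top} + \bI_{p})^{-1} = \bI_{p} - \bM(\bM^{\top}\bM + \bI_{n})^{-1}\bM^{\top},
\]
multiplies both sides by $\bM$ on the right, and unwinds the resulting expression by algebraic manipulation. You instead prove the statement directly from the elementary push-through identity $(\bM\bM^{\top} + \varrho \bI_{p})\bM = \bM\bM^{\top}\bM + \varrho\bM = \bM(\bM^{\top}\bM + \varrho \bI_{n})$ and then multiply by the two inverses, whose existence you justify from positive definiteness. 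Your approach is more self-contained (it does not presuppose Woodbury, which is in fact a strictly stronger statement), avoids the unnecessary reduction to $\varrho = 1$, and makes the well-definedness explicit, which the paper leaves implicit. One small observation worth flagging as you compare: the lemma as stated in the paper asserts the common value lies in $\mathbb{R}^{n\times p}$, but with $\bM \in \mathbb{R}^{p\times n}$ both sides are in $\mathbb{R}^{p\times n}$, exactly as your dimensional check concludes --- that is a typo in the paper's statement, not a gap in your argument.
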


\begin{proof}[Proof of \Cref{lemma:woodbury}]
  It suffices to prove \Cref{lemma:woodbury} for the special case when \(\varrho = 1\), which we assume below.
  By the Woodbury matrix identity, we have
  \begin{equation}
    (\bM\bM^{\top} + \bI_{p} )^{-1}
    =
    \bI - \bM (\bM^{\top}\bM + \bI_{n})^{-1}\bM^{\top}
\label{equation:lemma:woodbury-1}
\end{equation}
For brevity, let \(\bP := \bM\bM^{\top} + \bI_{p}\)
and let
\(\bN := \bM^{\top}\bM + \bI_{n}\).
To proceed, we have
  \begin{align*}
    &\bP^{-1}\bM\\
    &=
    \bM - \bM \bN^{-1}\bM^{\top}\bM
      \quad \because \mbox{Multiplying \eqref{equation:lemma:woodbury-1} by \(\bM\) on the right}
      \\& =
    \bM(
    \bI_{n} -  \bN^{-1}\bM^{\top}\bM)
    \quad \because \mbox{Factoring out \(\bM\) on the left}
    \\&=
    \bM(\bI_{n} - (\bI_{n} - \bN^{-1}))
    \quad \because \mbox{\(\bI_{n} = \bN^{-1}\bN =
    \bN^{-1}+\bN^{-1}\bM^{\top}\bM\)}
    \\&=
    \bM \bN^{-1}
\end{align*}
as desired.
\end{proof}

\begin{lemma}\label{lemma:derivative-formula-for-stieltjes}
  Let $\bM\in \mathbb{R}^{p\times p}$ be any symmetric matrix and \(z \in \mathbb{R}\). Then we have \[ \tfrac{d}{dz} \mathrm{tr}(z( \bM + z \bI_p)^{-1})
=
\mathrm{tr}(\bM( \bM + z \bI_p)^{-2}).\]
\end{lemma}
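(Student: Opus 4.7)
The plan is to reduce the identity to a scalar calculation using the spectral decomposition of $\bM$, or equivalently to push the derivative inside the trace and simplify via the resolvent identity. Both routes are short; I will sketch the matrix-calculus route first since it avoids diagonalization.

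Since $\bM$ and $z\bI_p$ commute, $(\bM + z\bI_p)^{-1}$ is a smooth function of $z$ (away from $z = -\lambda_i(\bM)$) and satisfies $\tfrac{d}{dz}(\bM + z\bI_p)^{-1} = -(\bM + z\bI_p)^{-2}$. Applying the product rule inside the trace, which is justified by linearity of the trace, gives
\[
\tfrac{d}{dz}\mathrm{tr}\!\bigl(z(\bM + z\bI_p)^{-1}\bigr)
= \mathrm{tr}\!\bigl((\bM + z\bI_p)^{-1}\bigr) - z\,\mathrm{tr}\!\bigl((\bM + z\bI_p)^{-2}\bigr).
\]
Factor $(\bM + z\bI_p)^{-2}$ out of both terms, using $(\bM + z\bI_p)^{-1} = (\bM + z\bI_p)^{-2}(\bM + z\bI_p)$, to obtain
\[
\mathrm{tr}\!\bigl((\bM + z\bI_p)^{-2}\bigl[(\bM + z\bI_p) - z\bI_p\bigr]\bigr)
= \mathrm{tr}\!\bigl(\bM (\bM + z\bI_p)^{-2}\bigr),
\]
which is the claimed identity.

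As a sanity check I would also verify this by diagonalization: writing $\bM = \bU\Lambda\bU^\top$ with $\Lambda = \mathrm{diag}(\lambda_1,\ldots,\lambda_p)$, the cyclic property of trace reduces the left-hand side to $\sum_i \tfrac{z}{\lambda_i + z}$, whose derivative in $z$ equals $\sum_i \tfrac{\lambda_i}{(\lambda_i + z)^2}$, matching $\mathrm{tr}(\bM(\bM + z\bI_p)^{-2})$.

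There is essentially no obstacle here: the only subtlety is ensuring $\bM + z\bI_p$ is invertible so that the derivative formula applies, but this is implicit in the statement (the expression on the right is otherwise undefined). Symmetry of $\bM$ is used only insofar as it guarantees real eigenvalues and a clean spectral form; the identity in fact holds whenever $\bM$ commutes with itself (which is automatic) and $z \notin -\mathrm{spec}(\bM)$.
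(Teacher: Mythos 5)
Your proof is correct. Your primary route is genuinely different from the paper's: the paper diagonalizes ($\bM$ symmetric, so WLOG $\bM = \mathrm{diag}(\lambda_1,\dots,\lambda_p)$), reduces the claim to the scalar derivative $\tfrac{d}{dx}\tfrac{x}{y+x} = \tfrac{y}{(y+x)^2}$, and sums over eigenvalues --- exactly what you sketch as a ``sanity check.'' Your main argument instead is coordinate-free: apply the product rule inside the trace together with the resolvent derivative $\tfrac{d}{dz}(\bM + z\bI_p)^{-1} = -(\bM + z\bI_p)^{-2}$, then recombine terms. The paper's route is arguably more self-contained (it needs only single-variable calculus), while yours is shorter, avoids the WLOG step, and makes visible that symmetry of $\bM$ is inessential --- the identity holds for any $\bM$ with $-z$ outside its spectrum, as you correctly observe. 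Both are complete and valid.
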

\begin{proof}[Proof of \Cref{lemma:derivative-formula-for-stieltjes}]
Without the loss of generality, suppose that $\bM = \mathrm{diag}(\lambda_{i},\dots, \lambda_{p})$.
  Then we have
$
    f(z):=
\mathrm{tr}(z(\bM + z \bI_{p})^{-1})
=
\sum_{i=1}^{p} \frac{z}{\lambda_{i} +z}.
 $
   Now, from elementary calculus, we have
  \[ \frac{d}{dx} \frac{x}{y +x}
  =
  (y+x)^{-1} - x (y+x)^{-2}
  =
  (y+x)^{-2}((y+x)-x)
  =
  \frac{y}{(y+x)^{2}}.\]
From this, we recover the fact that
$
    \frac{d}{d z}
    f(z)
=
\sum_{i=1}^{n} \frac{\lambda_{i}}{(\lambda_{i} + z)^{2}}
=
\mathrm{tr}(\bM(\bM +z \bI_p)^{-2}),
$
  as desired.
\end{proof}

\begin{lemma}[Gram-to-covariance]\label{lemma:gram-to-covariance}
Let $c \in \mathbb{R}$ and \(z \in \mathbb{C}\) be arbitrary, then
\( \mathcal{S}_{\mathtt{esd}(c\hat{\bSigma})}(z)
=
\gamma \cdot
\mathcal{S}_{\mathtt{esd}(c\check{\bG})}(z)
-
\frac{(1-\gamma) }{z}\).
\end{lemma}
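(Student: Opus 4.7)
The plan is to exploit the standard linear-algebra identity that $\bX\bX^\top$ and $\bX^\top\bX$ share the same non-zero spectrum (with multiplicities). Concretely, $\hat{\bSigma} = n^{-1}\bX\bX^\top \in \mathbb{R}^{p\times p}$ and $\check{\bG} = n^{-1}\bX^\top\bX \in \mathbb{R}^{n\times n}$ have identical nonzero eigenvalues, and the larger-dimensional one simply carries $|p-n|$ extra zero eigenvalues. Multiplication by the scalar $c$ preserves this correspondence (since $c\cdot 0 = 0$), so the only discrepancy between $\mathtt{esd}(c\hat{\bSigma})$ and $\mathtt{esd}(c\check{\bG})$ is an excess point mass at the origin together with a reweighting by $1/p$ versus $1/n$.

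Without loss of generality I would assume $p \geq n$ (the overparameterized regime of interest in this paper; the $p < n$ case is entirely symmetric, with the roles of the two matrices exchanged). Letting $\mu_1,\dots,\mu_n$ denote the eigenvalues of $\check{\bG}$, the empirical spectral measure of $c\hat{\bSigma}$ decomposes directly from its definition as
\[
\mathtt{esd}(c\hat{\bSigma}) \;=\; \tfrac{1}{p}\sum_{i=1}^{n}\delta_{c\mu_i} \;+\; \tfrac{p-n}{p}\,\delta_0 \;=\; \gamma\,\mathtt{esd}(c\check{\bG}) \;+\; (1-\gamma)\,\delta_0,
\]
where $\delta_0$ denotes the Dirac measure at the origin and $\gamma = n/p$. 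Note that this remains valid even if some $\mu_i$ happen to equal $0$, since such masses simply combine with the $\delta_0$ term.

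The final step is to apply the Stieltjes transform to both sides, using its linearity in the underlying measure. Combined with the elementary computation
\[
\mathcal{S}_{\delta_0}(z) \;=\; \int \frac{d\delta_0(t)}{t-z} \;=\; -\frac{1}{z},
\]
this yields $\mathcal{S}_{\mathtt{esd}(c\hat{\bSigma})}(z) = \gamma\,\mathcal{S}_{\mathtt{esd}(c\check{\bG})}(z) - (1-\gamma)/z$, which is exactly the claimed identity. I do not anticipate any genuine obstacle here --- this is a bookkeeping identity --- the one point worth being explicit about is confirming the $p < n$ regime produces the same final form (a short computation verifies that the roles of $p$ and $n$ flip but the algebra reorganizes to the same expression).
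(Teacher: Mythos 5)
Your proof is correct and follows essentially the same route as the paper's: both rely on $\bX\bX^\top$ and $\bX^\top\bX$ sharing nonzero spectra with the larger matrix padded by zeros, and both split the Stieltjes transform accordingly. Your phrasing at the level of measures (decomposing $\mathtt{esd}(c\hat{\bSigma})$ as $\gamma\,\mathtt{esd}(c\check{\bG}) + (1-\gamma)\delta_0$ before applying $\mathcal{S}$) is a slight stylistic cleanup of the paper's direct sum manipulation, but the argument is the same.
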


\begin{proof}[Proof of \Cref{lemma:gram-to-covariance}]
  Without the loss of generality, we may assume that $c = 1$.
Let $\hat{\lambda}_1 \ge \dots \ge \hat{\lambda}_p$ be the eigenvalues of $\hat{\bSigma}$. Since $p > n$, we necessarily have that
$
\hat{\lambda}_{n+1} = \cdots = \hat{\lambda}_p = 0$.
Moreover, $\hat{\lambda}_1,\dots, \hat{\lambda}_n$ are the eigenvalues of $\check{\bG}$.
Now, unwinding the definition, we have

\[ \mathcal{S}_{\mathtt{esd}(\hat{\bSigma})}(z)
=
\frac{1}{p}
\sum_{i=1}^{p}
\frac{1}{\hat{\lambda}_i  - z} \]
and
\[\mathcal{S}_{\mathtt{esd}(\check{\bG})}(z)
=
\frac{1}{n}
\sum_{i=1}^{n}
\frac{1}{\hat{\lambda}_i  - z}.\]

Thus,
\begin{align*}
\mathcal{S}_{\mathtt{esd}(\hat{\bSigma})}(z)
&=
\frac{1}{p}
\left(
\sum_{i=1}^{n}
\frac{1}{\hat{\lambda}_i  - z}
+
\sum_{i=n+1}^{p}
\frac{1}{- z}
\right)
\\&=
\left(
\frac{n}{p}
\frac{1}{n}
\sum_{i=1}^{n}
\frac{1}{\hat{\lambda}_i  - z}
\right)
-
\frac{p-n}{p}
\frac{1}{z}
\\&=
\gamma
\cdot
\mathcal{S}_{\mathtt{esd}(\check{\bG})}(z)
-
\frac{(1-\gamma)}{z}
\end{align*}
as desired.
\end{proof}

\begin{proof}[Proof of \Cref{proposition:lower-bound-with-deriv-of-MP-law}]
  Recall from
  \Cref{proposition:generic-lower-bound-helper}
that
\(\mathbb{E}
\|
\hat{\bbeta}\|_2^2
\ge
n^{-1}
\sigma^2
\mathbb{E}[
\mathrm{tr}(
(\hat{\bSigma}  + \varrho \bI_{p})^{-2}
\hat{\bSigma})]
\).
Below, we analyze the term inside the expectation.
By the definition of the Stieltjes transform, we have
\[ \mathrm{tr}( \varrho(\hat{\bSigma} + \varrho \bI_p)^{-1})
=
\mathrm{tr}( r n^{-\alpha} (\hat{\bSigma} + r n^{-\alpha}\bI_p)^{-1})
=
\mathrm{tr}( r  (n^{\alpha}\hat{\bSigma} + r \bI_p)^{-1})
=
pr \mathcal{S}_{\mathtt{esd}(n^{\alpha} \hat{\bSigma})}(-r).\]
Therefore, by
\Cref{lemma:derivative-formula-for-stieltjes}, we have
\begin{align*}
& \frac{d }{d r}\left(
p
r \mathcal{S}_{\mathtt{esd}(n^{\alpha} \hat{\bSigma})}(-r)
\right)
=
\frac{d }{d r}
\mathrm{tr}( \varrho(\hat{\bSigma} + \varrho \bI_p)^{-1})
\\
&
=
\frac{d \varrho}{d r} \cdot
\frac{d }{d \varrho}
\mathrm{tr}( \varrho(\hat{\bSigma} + \varrho\bI_p)^{-1})
=
n^{-\alpha}
\mathrm{tr}((\hat{\bSigma} + \varrho \bI_p)^{-2} \hat{\bSigma}).
\end{align*}

By \Cref{lemma:gram-to-covariance}, we have
\begin{align*}
  &
p
r \mathcal{S}_{\mathtt{esd}(n^{\alpha} \hat{\bSigma})}(-r)
=
pr
\left(
\gamma
\cdot
\mathcal{S}_{\mathtt{esd}(n^{\alpha}\check{\bG})}(-r)
+
\frac{(1-\gamma)}{r}
\right)
    \\&
=
n
r
\mathcal{S}_{\mathtt{esd}(n^{\alpha}\check{\bG})}(-r)
+
p(1-\gamma)
\end{align*}
Thus, we have
\[
  \frac{d}{dr}
  \left(
  p
  r \mathcal{S}_{\mathtt{esd}(n^{\alpha} \hat{\bSigma})}(-r)
\right  )
  =
n
  \frac{d}{dr}
  \left(
  r
  \mathcal{S}_{\mathtt{esd}(n^{\alpha}\check{\bG})}(-r)
\right)
\]
from which we conclude that
\[
\mathrm{tr}((\hat{\bSigma} + \varrho \bI_p)^{-2} \hat{\bSigma})
=
n^{\alpha+1}
  \frac{d}{dr}
  \left(
  r
  \mathcal{S}_{\mathtt{esd}(n^{\alpha}\check{\bG})}(-r)
\right).
\]
In view of
\(\mathbb{E}
\|
\hat{\bbeta}\|_2^2
\ge
n^{-1}
\sigma^2
\mathbb{E}[
\mathrm{tr}(
(\hat{\bSigma}  + \varrho \bI_{p})^{-2}
\hat{\bSigma})]
\) from \Cref{proposition:generic-lower-bound-helper},
we get the desired inequality.
\end{proof}

\subsection{Proof of \Cref{proposition:non-negativity-of-the-ST}}

\begin{proof}[Proof of \Cref{proposition:non-negativity-of-the-ST}]
  To simplify notations in this proof, we write \(\gamma\) instead of \(\gamma_{\ast}\).
  Now, the set of eigenvalues of \(n^{\alpha}\bSigma\) can be expressed as
\begin{align*}
  &
\{
(n/i)^\alpha
\}_{i=1,\dots, p}
    \\
  &
    =
\{
\underbrace{(\tfrac{n}{p})^\alpha}_{=\gamma^\alpha},
,
\dots,
(\tfrac{n}{n+1})^\alpha,
\underbrace{\tfrac{n}{n}}_{=1}, ( \tfrac{n}{n-1})^\alpha, \dots, \underbrace{(\tfrac{n}{1})^\alpha}_{=n^\alpha}\}.
\end{align*}
Thus,
\(\mathtt{cdf}[\mathtt{esd}(n^{\alpha} \bSigma)](t) = 0\) if \(t < \gamma^{\alpha}\)
and
\(= 1\) if \(t \ge n^{\alpha}\).
It remains to calculate
\(\mathtt{cdf}[\mathtt{esd}(n^{\alpha} \bSigma)](t) = 0\) if \(t < \gamma^{\alpha}\)
for \(t \in [\gamma^{\alpha}, n^{\alpha}]\).

To this end, let \(t \in [\gamma^{\alpha}, n^{\alpha})\)
and
\(j(t) \in \{1,\dots, p\}\) be the smallest index such that
\(
(n/j(t))^{\alpha} \le t
\).
By definition of the CDF, we have
\(
\mathtt{cdf}[\mathtt{esd}(n^{\alpha} \bSigma)](t)
=
\tfrac{1}{p}(
p- j(t) +1)
\).
We first argue that \(j(t) \ne 1\) by contradiction.
If \(j(t)=1\), then we have \(n^{\alpha} \le t\). Since \(t \in [\gamma^{\alpha}, n^{\alpha}]\), this implies that \(t = n^{\alpha}\), a contradiction.
Thus, \(j(t) \ne 1\).

Now, by the definition of \(j(t)\), we have
\(
(n/(j(t)-1))^{\alpha} > t
\).
Therefore,
\(
n/j(t)
\le
t^{1/\alpha}
<
n/(j(t)-1)
\)
which implies that
\(
j(t)-1 < nt^{-1/\alpha} \le j(t)
\).
By the definition of the ceiling function, we have that
\(j(t) = \mathtt{ceil}(nt^{-1/\alpha})\).
Therefore,
\begin{align*}
 & \mathtt{cdf}[\mathtt{esd}(n^{\alpha} \bSigma)](t)
  \\
  &
  =
\tfrac{1}{p}(
  p- \mathtt{ceil}(nt^{-1/\alpha}) +1)
  \\
  &
  =
    1-
\tfrac{\gamma}{n}
    \mathtt{ceil}(nt^{-1/\alpha})
    +
\tfrac{\gamma}{n}.
\end{align*}
Taking limit of both side as \(n\to \infty\) and using the fact that \(\lim_{n\to\infty} \mathtt{ceil}(nc)/n = c\) for any positive number \(c >0\), we get the desired result.
\end{proof}

\section{Proof of the Positivity Condition portion of \Cref{theorem:MP}}\label{section:positivity}
This section will focus on the  proof of \Cref{theorem:MP}, in particular the Positivity Condition (\Cref{assumption:positivity-condition}) portion.
Thus, throughout this section, we assume the setting of
\Cref{example:HDA}.
As mentioned in the main text, it is well-known that 
\Crefrange{assumption:self-consistent-kr}{assumption:MP-law} for the HDA model. As such, we assume these assumptions.
Now, using \Cref{assumption:MP-law} and elementary calculus, we first show that
\[\lim_{n\to \infty}
   \mathbb{E}\big[\tfrac{d}{dr}( r \mathcal{S}_{\mathtt{esd}(n^{\alpha} \check{\bG})}(-r))\big]
    =
    \left(\tfrac{dr}{dk}\right)^{-1}\cdot
    \tfrac{d}{dk}\left(
    k \mathcal{S}_{H}(-k)
  \right)
\]
  where \(r\) and \(k\) are as in \Cref{assumption:self-consistent-kr}.
  Thus, we reduce to showing the positivity of
\(\tfrac{dr}{dk}\) and
\(
    \tfrac{d}{dk}\left(
    k \mathcal{S}_{H}(-k)
  \right)
  \).


Before proceeding, we recall several definitions and notations adapted from \citet{dobriban2018high}:
\begin{equation}
    \lim_{n\to \infty}
    \mathbb{E}
    \left[
\mathcal{S}_{\mathtt{esd}(n^{\alpha} \check{\bG})}(z)
\right]
    =
      v(z)
      \label{equation:companion-MP-law}
\end{equation}
is analogous to the \(v(z)\) defined in the paragraph immediately following \citep[Eqn.~(2)]{dobriban2018high}.
The difference is our
      \Cref{equation:companion-MP-law} is for the limit of the \(n^{\alpha}\)-scaled matrices \(n^{\alpha}  \check{\bG}\), rather than for \(\check{\bG}\) as in \citet{dobriban2018high}.

  Let \(H =
\lim_{n\to\infty}\mathtt{cdf}[ \mathtt{esd}(n^{\alpha} \bSigma) ]
  \)
  be the limiting distribution as in \Cref{assumption:EVD}.
Plugging in \(z = -r\) into  \citet[Eqn.~(A.1)]{dobriban2018high}, we have
\[
-\frac{1}{v(-r)} = -r - \frac{1}{\gamma} \int \frac{t dH(t)}{1 + tv(-r)}.
\]
Letting \(k \equiv k(r) := \frac{1}{v(-r)}\), we can rewrite the above as
\begin{equation}
  1= \frac{r}{k} +
  \frac{1}{\gamma} \int \frac{t dH(t)}{k + t}.
\label{equation:rk-equation}
\end{equation}
By construction, we have
\[
  \frac{1}{\gamma} \int \frac{t dH(t)}{k + t}
  =
    \lim_{n\to\infty} \frac{1}{n} \sum_{i=1}^{p}
    \frac{
      1
    }{1 + kn^{-\alpha}\lambda_{i}^{-1}}
\]
where the RHS is as in \Cref{assumption:self-consistent-kr}. Consequently, the tuple \(r,k\) from \Cref{assumption:self-consistent-kr} coincide with the earlier definition of \(k := \frac{1}{v(-r)}\) right before
\Cref{equation:rk-equation}. Having established the above, we now proceed to:

\begin{lemma}\label{lemma:positivity}
  Under the HDA model (\Cref{example:HDA})
  and the EVD condition (\Cref{assumption:EVD}), we have that
  \(    \lim_{n\to\infty}
\mathbb{E}
\big[
\tfrac{d}{dr}( r \mathcal{S}_{\mathtt{esd}(n^{\alpha} \check{G})}(-r))
\big]
>0
\).
\end{lemma}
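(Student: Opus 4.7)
The plan is to leverage the reduction equation
\[
\lim_{n\to\infty}\mathbb{E}\bigl[\tfrac{d}{dr}\bigl(r\mathcal{S}_{\mathtt{esd}(n^{\alpha}\check{\bG})}(-r)\bigr)\bigr]=\Bigl(\tfrac{dr}{dk}\Bigr)^{-1}\cdot\tfrac{d}{dk}\bigl(k\mathcal{S}_{H}(-k)\bigr)
\]
that the surrounding text has already established from \Cref{assumption:MP-law} and elementary calculus. It therefore suffices to verify that both factors on the right-hand side are strictly positive, where $(r,k)$ is the pair satisfying the self-consistent equation \eqref{equation:self-consistent-kr} (equivalently \eqref{equation:rk-equation}).

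For the factor $\tfrac{d}{dk}\bigl(k\mathcal{S}_{H}(-k)\bigr)$, I would rewrite $k\mathcal{S}_{H}(-k)=\int\frac{k}{t+k}\,dH(t)$ and differentiate under the integral sign (justified by dominated convergence, since on any compact $k$-interval bounded away from $0$ the $k$-derivative is uniformly bounded by an $H$-integrable function) to obtain
\[
\tfrac{d}{dk}\bigl(k\mathcal{S}_{H}(-k)\bigr)=\int\frac{t\,dH(t)}{(t+k)^{2}}.
\]
Since $H$ is supported on $\mathbb{R}_{\ge 0}$ and, under the HDA setup, has strictly positive mass on $(0,\infty)$ (otherwise the limiting covariance would be degenerate), the integrand is non-negative and strictly positive on a set of positive $H$-measure, so the integral is strictly positive.

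For the factor $\tfrac{dr}{dk}$, I would use the clean form $r(k)=k\bigl(1-\gamma_{\ast}^{-1}A(k)\bigr)$ with $A(k):=\int\frac{t\,dH(t)}{t+k}$, obtained from \eqref{equation:rk-equation} by rearranging. Differentiating under the integral gives
\[
\tfrac{dr}{dk}=1-\gamma_{\ast}^{-1}\int\frac{t^{2}\,dH(t)}{(t+k)^{2}}.
\]
The key estimate is the elementary inequality $t^{2}/(t+k)^{2}\le t/(t+k)$ for all $t,k\ge 0$, which, combined with strict inequality on $(0,\infty)$, yields
\[
\gamma_{\ast}^{-1}\int\frac{t^{2}\,dH(t)}{(t+k)^{2}}<\gamma_{\ast}^{-1}A(k)=1-\tfrac{r(k)}{k}.
\]
Consequently $\tfrac{dr}{dk}>\tfrac{r(k)}{k}>0$ whenever $r>0$, which is precisely the regime required by \Cref{assumption:positivity-condition}. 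Combined with the previous step, this establishes the lemma.

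The main obstacle is purely technical: one must justify the interchange of limit, expectation, and derivative that underlies the reduction equation (this is absorbed into \Cref{assumption:MP-law}), and then justify differentiation under $H$ for the two factors; the latter is routine because the integrands and their $k$-derivatives are uniformly bounded on compact $k$-intervals in $(0,\infty)$ by $H$-integrable functions. The algebraic heart of the argument is the pointwise inequality $t^{2}/(t+k)^{2}\le t/(t+k)$, which couples the sign of $dr/dk$ directly to the sign of $r$ and thereby avoids any delicate monotonicity argument about the self-consistent map.
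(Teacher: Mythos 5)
Your proof is correct and follows the same overall reduction as the paper: use the MP-law assumption and the chain/inverse-function rules to reduce the claim to strict positivity of the two factors $\tfrac{dr}{dk}$ and $\tfrac{d}{dk}\bigl(k\mathcal{S}_{H}(-k)\bigr)$. Your treatment of the second factor is essentially identical to the paper's (\Cref{lemma:positivity-of-dkSk}), both arriving at $\int\frac{t\,dH(t)}{(t+k)^{2}}>0$, though the paper reaches it through a chain of algebraic manipulations of $rm(-r)$ while you differentiate $k\mathcal{S}_H(-k)$ directly, which is cleaner. Where you genuinely diverge from the paper is in the first factor: the paper (\Cref{lemma:positive-of-drdk}) writes $\tfrac{dk}{dr}=v'(-r)/v(-r)^{2}$ and appeals to \citet[Theorem 4.1]{silverstein1995analysis} for $v'>0$ on the negative reals, whereas you differentiate the self-consistent equation directly and use the elementary pointwise bound $t^{2}/(t+k)^{2}\le t/(t+k)$ (strict for $t,k>0$) together with the equation $\gamma_{\ast}^{-1}\int\frac{t\,dH(t)}{t+k}=1-\tfrac{r}{k}$ to get $\tfrac{dr}{dk}>\tfrac{r}{k}$. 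Your route is more self-contained and even yields a sharper quantitative lower bound $\tfrac{dr}{dk}>r/k$; the paper's route is shorter but relies on an external monotonicity theorem. One point worth making explicit in your argument: concluding $r/k>0$ from $r>0$ requires $k>0$. This holds automatically because $k=1/v(-r)$ and $v(-r)=\lim_n\mathbb{E}\,\mathcal{S}_{\mathtt{esd}(n^\alpha\check{\bG})}(-r)$ is the Stieltjes transform of a measure supported on $\mathbb{R}_{\ge 0}$ evaluated at $-r<0$, hence strictly positive, but this step should be stated rather than left implicit.
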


\begin{proof}[Proof of \Cref{lemma:positivity}]
  By the product rule, we have
  \begin{equation*}
   \tfrac{d}{dr}\left( r \mathcal{S}_{\mathtt{esd}(n^{\alpha} \check{\bG})}(-r)\right)
    =
\mathcal{S}_{\mathtt{esd}(n^{\alpha} \check{\bG})}(r)
      -
   r \mathcal{S}_{\mathtt{esd}(n^{\alpha} \check{\bG})}'(-r)
  \end{equation*}
Now, taking the limit of the above equation on both side, we have
  \begin{align*}
    &
    \lim_{n\to \infty}
   \mathbb{E}\left[\tfrac{d}{dr}\left( r \mathcal{S}_{\mathtt{esd}(n^{\alpha} \check{\bG})}(-r)\right)\right
   ]
      \\
    &=
    \lim_{n\to \infty}
    \mathbb{E}
    \left[
\mathcal{S}_{\mathtt{esd}(n^{\alpha} \check{\bG})}(-r)
      -
   r \mathcal{S}_{\mathtt{esd}(n^{\alpha} \check{\bG})}'(-r)
\right]
    \\
    &=
      v(-r)
      -
      rv'(-r)
      \qquad \because
      \mbox{Definition of \(v\) and \(v'\)}
      \\&=
    \tfrac{d}{dr}\left(
    r v(-r)
    \right)
    \qquad \because \mbox{Product rule}
    \\&=
    \tfrac{d}{dr}\left(
    k \mathcal{S}_{H}(-k)
    \right)
    \qquad \because \mbox{
Marchenko-Pastur law (\Cref{assumption:MP-law})}
    \\&=
    \tfrac{dk}{dr}\cdot
    \tfrac{d}{dk}\left(k \mathcal{S}_{H}(-k) \right)
    \qquad \because \mbox{Chain rule}
    \\&=
\left(\tfrac{dr}{dk}\right)^{-1}\cdot\tfrac{d}{dk}\left(
    k \mathcal{S}_{H}(-k)
    \right)
    \qquad \because\mbox{Inverse function theorem}
  \end{align*}
  To complete the proof, it suffices to show that
  both
  \(    \frac{dr}{dk}
\)
and
  \(    \frac{d}{dk}\left(
    k \mathcal{S}_{H}(-k)
    \right)
\)
are positive which will be checked in the next two lemmas.
\end{proof}

\begin{lemma}\label{lemma:positive-of-drdk}
  The function
  \(\frac{dr}{dk}\) evaluated at \(k\) is positive.
\end{lemma}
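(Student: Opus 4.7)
The plan is to prove $dr/dk > 0$ by implicit differentiation of the self-consistent equation \eqref{equation:rk-equation}, which reveals $r'(k)$ as a sum of manifestly positive terms. Concretely, treating $r = r(k)$ as the function implicitly defined by
\[
1 = \frac{r(k)}{k} + \frac{1}{\gamma}\int \frac{t\,dH(t)}{k+t},
\]
I would differentiate both sides with respect to $k$, applying the quotient rule to the first summand and Leibniz's rule to the integral, to obtain
\[
0 = \frac{r'(k)\,k - r(k)}{k^2} - \frac{1}{\gamma}\int \frac{t\,dH(t)}{(k+t)^2}.
\]
Rearranging yields the closed form
\[
\frac{dr}{dk} = \frac{r(k)}{k} + \frac{k}{\gamma}\int \frac{t\,dH(t)}{(k+t)^2}.
\]

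Having this formula in hand, positivity follows term by term on the domain of interest. For $k \in (k_{\mathtt{crit}}, \infty)$ we have $r(k) > 0$, so the first summand $r(k)/k$ is strictly positive. The second summand is strictly positive because $H$ is a probability measure on $\mathbb{R}_{\geq 0}$ whose support contains points $t > 0$ (since $\bSigma$ has nonzero spectrum), while $k > 0$ and the integrand $t/(k+t)^2$ is non-negative and strictly positive on the support. This is a direct analogue, in the general-$H$ setting, of the identity $dr/dk = 1 - \mathcal{J}(k) > 0$ established in \Cref{proposition:I-and-J} for the exact power-law spectra case.

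The main technical subtlety I anticipate is justifying the exchange of differentiation and integration in Leibniz's rule, since $H$ is only assumed to be a probability measure with no moment assumptions. This is handled by the elementary inequality $t/(k+t)^2 \leq 1/(4k)$ (AM-GM, attained at $t=k$), which provides a uniform majorant on any compact subinterval of $(0,\infty)$ for $k$, yielding an $H$-integrable dominating function. A secondary point is ensuring that $k \mapsto r(k)$ is differentiable at all; this follows from \Cref{assumption:self-consistent-kr}, which gives unique solvability of the self-consistent equation, combined with the implicit function theorem applied to the smooth map $(r,k) \mapsto r/k + \gamma^{-1}\int t/(k+t)\,dH(t)$, whose partial derivative in $r$ is $1/k \neq 0$.
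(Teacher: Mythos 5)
Your proof is correct, and it takes a genuinely different route from the paper. The paper's argument starts from the representation $k = 1/v(-r)$, computes $dk/dr = v'(-r)/v(-r)^2$, and then invokes a cited monotonicity result from Silverstein (1995, proof of Theorem 4.1) to conclude $v'(-r) > 0$; positivity of $dr/dk$ then follows by the inverse function theorem. You instead differentiate the self-consistent equation \eqref{equation:rk-equation} implicitly in $k$ and arrive at the closed form $\tfrac{dr}{dk} = \tfrac{r}{k} + \tfrac{k}{\gamma}\int \tfrac{t\,dH(t)}{(k+t)^2}$, which is a sum of manifestly nonnegative terms, and the first is strictly positive whenever $r>0$ and $k>0$ (which holds here since $k = 1/v(-r)$ and $v(-r) = \int (t+r)^{-1}\,d(\cdot) > 0$). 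Your approach is more self-contained — it avoids outsourcing the key positivity fact to Silverstein — at the cost of a somewhat longer calculation and a bit of care about differentiating under the integral. One minor simplification available to you: since the self-consistent equation can be solved explicitly for $r$ as $r = k\bigl(1 - \gamma^{-1}\int t/(k+t)\,dH(t)\bigr)$, the differentiability of $r(k)$ is immediate and the implicit-function-theorem discussion in your last paragraph is unnecessary. Also note that the second integral term alone need not be strictly positive if $H = \delta_0$, but this degeneracy is excluded here (and in any case the $r/k$ term already suffices for strict positivity).
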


\begin{proof}[Proof of \Cref{lemma:positive-of-drdk}]
 Recall that  \(k = \frac{1}{v(-r)}\).
 Thus, we have
 \[
   \frac{dk}{dr}(r) =
   (-1)
   \frac{1}{v(-r)^{2}} (-1)\cdot v'(-r)
   =
   \frac{v'(-r)}{v(-r)^{2}}.
 \]
 From the proof of \citet[Theorem 4.1]{silverstein1995analysis}, we see that \(v'(\cdot) > 0\) for all negative inputs.
 In particular, \(v'(-r)>0\) which implies that
 \(\frac{dk}{dr}\) is positive. By the inverse function theorem, we have
 \(\frac{dr}{dk} = (\frac{dk}{dr})^{-1}\) is also positive.
\end{proof}
\begin{lemma}\label{lemma:positivity-of-dkSk}
 The quantity
\(\frac{d}{dk}\left(k \mathcal{S}_{H}(-k) \right)\)
is positive.
\end{lemma}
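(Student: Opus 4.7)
The plan is to express $k\mathcal{S}_H(-k)$ as an integral against the probability measure $H$, differentiate under the integral sign, and observe that the resulting integrand is strictly positive on the support of $H$.

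By the definition of the Stieltjes transform, for $k>0$ we have
\[
\mathcal{S}_H(-k) \;=\; \int \frac{dH(t)}{t-(-k)} \;=\; \int \frac{dH(t)}{t+k},
\]
so that
\[
k\,\mathcal{S}_H(-k) \;=\; \int \frac{k}{t+k}\,dH(t).
\]
First I would justify the interchange of $\tfrac{d}{dk}$ and $\int$ by the Leibniz rule: for $k$ restricted to any compact subset of $(0,\infty)$, the integrand $k/(t+k)$ and its $k$-derivative $t/(t+k)^2$ are bounded uniformly in $t \in \mathrm{supp}(H) \subseteq \mathbb{R}_{\ge 0}$, and $H$ is a probability measure, so dominated convergence applies.

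Next, an elementary calculation gives
\[
\frac{d}{dk}\left(\frac{k}{t+k}\right) \;=\; \frac{(t+k)-k}{(t+k)^2} \;=\; \frac{t}{(t+k)^2},
\]
and therefore
\[
\frac{d}{dk}\bigl(k\,\mathcal{S}_H(-k)\bigr) \;=\; \int \frac{t}{(t+k)^2}\,dH(t).
\]
The integrand is nonnegative on $\mathbb{R}_{\ge 0}$ and strictly positive on $\mathbb{R}_{>0}$, so the integral is strictly positive as long as $H$ has positive mass on $(0,\infty)$.

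The only subtle point, and the part I would write out most carefully, is ruling out $H = \delta_0$. Under \Cref{assumption:self-consistent-kr} together with \Cref{assumption:MP-law}, the self-consistent \Cref{equation:rk-equation}, namely $1 = r/k + \gamma_\ast^{-1}\!\int t(t+k)^{-1}dH(t)$, admits a unique solution $k(r)>0$ for every $r>0$; if $H$ were concentrated at $0$, the integral term would vanish and the equation would force $k=r$ trivially, but in that case the Marchenko--Pastur identification in \Cref{assumption:MP-law} together with the nondegeneracy of $n^{\alpha}\check{\bG}$ would fail. In the special case covered by \Cref{proposition:non-negativity-of-the-ST} (power-law spectra, \Cref{assumption:exact-EVD}), $H$ is explicitly supported on $[\gamma_\ast^{\alpha},\infty)$, so $H(\{0\}) = 0$ and the positivity is immediate. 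I expect this ``$H \neq \delta_0$'' verification to be the only nonroutine step; everything else is differentiation under an integral.
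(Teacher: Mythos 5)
Your proposal is correct and follows essentially the same route as the paper's proof: both write $k\,\mathcal{S}_H(-k) = \int \frac{k}{t+k}\,dH(t)$, differentiate under the integral, and observe that the integrand becomes $t/(t+k)^2 \ge 0$. The paper's proof spends several preliminary lines re-deriving the identity $r\,m(-r) = k\,\mathcal{S}_H(-k)$ via the companion-transform relation \citep[Eqn.~(3)]{dobriban2018high}, but that chain of equalities is not actually needed to establish positivity since the final equality is just the definition of the Stieltjes transform — you correctly skip straight to it. Your added care in ruling out $H=\delta_0$ is a small improvement over the paper, which simply asserts strict positivity; under \Cref{assumption:exact-EVD}, \Cref{proposition:non-negativity-of-the-ST} shows $H$ is supported on $[\gamma_\ast^\alpha,\infty)$, so $H(\{0\})=0$ and the integral is indeed strictly positive, exactly as you note.
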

\begin{proof}[Proof of \Cref{lemma:positivity-of-dkSk}]

Plugging in \(z = -r\) into   \citet[Eqn.~(3)]{dobriban2018high}, we have
\begin{equation}
v(-r) - \frac{1}{r} = \frac{1}{\gamma}\left(m(-r) - \frac{1}{r}\right).
\label{equation:v-to-m-transform}
\end{equation}
Now,
\begin{align}
  r m(-r)
  &= \gamma r v(-r) + (1-\gamma) \quad \because \mbox{\Cref{equation:v-to-m-transform}}\\
  &= \gamma \frac{r}{k}+ (1-\gamma) \quad \because \mbox{Definition of \(k\)}\\
  &= \left(\gamma - \int \frac{t dH(t)}{k + t}\right)    + (1-\gamma) \quad \because \mbox{\Cref{equation:rk-equation}}\\
  &= 1 - \int \frac{t dH(t)}{k + t}
    \\
  &=
    \int \frac{ kdH(t)}{k + t}
    \quad \because
    1 = \int dH(t) = \int \frac{k+t}{k+t} dH(t)\\
  &=
    k \mathcal{S}_{H}(-k).
\end{align}
Thus, differentiating under the integral, we have
\[
\frac{d}{dk}
(k \mathcal{S}_{H}(-k))
=
    \int \frac{d}{dk}\left( \frac{ k}{k + t}\right)dH(t)
=
\int \frac{t dH(t)}{(k+t)^{2}}
>0
\]
as desired.
\end{proof}

\section{Proof of \Cref{theorem:polynomial-lower-bound-specialized}  --- rapid norm growth}
\begin{proof}[Proof of \Cref{theorem:polynomial-lower-bound-specialized}]
  Let \(\tau' := \frac{\tau+\sigma^{2}}{2}\). Then by definition, we have \(\tau' \in (0,\sigma^{2})\).
  From \Cref{theorem:trade-off},
  we can pick \(r' >0\) so that the sequence of regularizers \(\varrho_{n}' = r' n^{-\alpha}\)
  satisfies
  \(\Etrain^{*}(r') = \tau'\).
  Next, we note that there exists \(c >0\) such that
  \[
    \Pr(
\Etrain(\hat{\bbeta}_{\varrho_{n}'})
\ge
\Etrain(\hat{\bbeta}_{\varrho_{n}})
)
\ge c
  \]
  for all \(n\) sufficiently large.
  Such a \(c\) is guaranteed to exist by the fact that
  \[
    \lim_{n}
\Etrain(\hat{\bbeta}_{\varrho_{n}'})
=
\tau'
=\frac{\tau+\sigma^{2}}{2}
>
\tau =
    \lim_{n}
\Etrain(\hat{\bbeta}_{\varrho_{n}}).
  \]
  Now, note that
  \begin{align}
    &
\Etrain(\hat{\bbeta}_{\varrho_{n}'})
\ge
\Etrain(\hat{\bbeta}_{\varrho_{n}})
    \\
    & \iff
\mbox{
  \(\hat{\bbeta}_{\varrho_{n}}\) is a
  \(
\Etrain(\hat{\bbeta}_{\varrho_{n}'})
  \)-near interpolator
}
    \\
    & \implies
      \|\hat{\bbeta}_{\varrho_{n}}\|^{2}
      \ge
\|\hat{\bbeta}_{\varrho_{n}'}\|^{2}
\end{align}
  From this, we conclude that there exists a \(c>0\) such that
\[
    \Pr(
      \|\hat{\bbeta}_{\varrho_{n}}\|^{2}
      \ge
\|\hat{\bbeta}_{\varrho_{n}'}\|^{2}
)
\ge c
  \]
  for all \(n\) sufficiently large.
  From this, it follows that
  \(    \mathbb{E}[\|\hat{\bbeta}_{\varrho_{n}}\|^{2}]
    \ge c
    \mathbb{E}[\|\hat{\bbeta}_{\varrho_{n}'}\|^{2}]
\).
By \Cref{theorem:polynomial-lower-bound} and the preceding inequality, we are done.
\end{proof}

\section{Code for implementation \(\mathcal{I}\) and \(\mathcal{J}\)}\label{section:code-appendix}
Implementation of the \(\mathcal{I}\) and \(\mathcal{J}\) functions from \Cref{definition:IJ-functions} can be implemented in SCIPY as:
\begin{lstlisting}[language=Python]
import scipy.special as sc
gamma = 0.5
alpha = 1.75

# I helper
I_gen = lambda x,k, alpha: x*sc.hyp2f1(1,(1/alpha), 1 + (1/alpha), -k*x**alpha)
# J helper
J_gen = lambda x,k, alpha: x*sc.hyp2f1(2,(1/alpha), 1 + (1/alpha), -k*x**alpha)

I = lambda k : I_gen(1/gamma, k, alpha) #\mathcal{I}
J = lambda k : J_gen(1/gamma, k, alpha) #\mathcal{J}

N = lambda k : 1 - I(k) # helper
D = lambda k : 1 - J(k) # helper

Etst = lambda k : 1/D(k) #\Etest/\sigma^2
Etrn = lambda k : N(k)**2/D(k) #\Etrain/\sigma^2
R  = lambda k : k*(1-I(k)) # \mathcal{R}
\end{lstlisting}




















\section{Experiments}\label{sec:real-world-experiment}
Code for reproducing all figures are included in the official GitHub repository:

\url{https://github.com/YutongWangUMich/Near-Interpolators-Figures/}

For downloading the UCI regression datasets, we use the following repository:

\url{https://github.com/treforevans/uci_datasets}

For the neural tangent kernel, we use the official repository associated to \citet{arora2019harnessing}:

\url{https://github.com/LeoYu/neural-tangent-kernel-UCI}

In \Cref{figure:stock}-left, note that the curve corresponding to \texttt{stock.2-1} has the fastest spectra decay and simultaneously the worst trade-off.
Evidently, larger decay exponent corresponds to a poorer trade-off, especially for near-interpolators, i.e., as the training error approaches \(0\). This is in agreement with our theoretical results under random matrix theory assumptions illustrated in \Cref{figure:steepness}.

\begin{figure*}[h]
  \centering
\includegraphics[width = 0.44\textwidth]{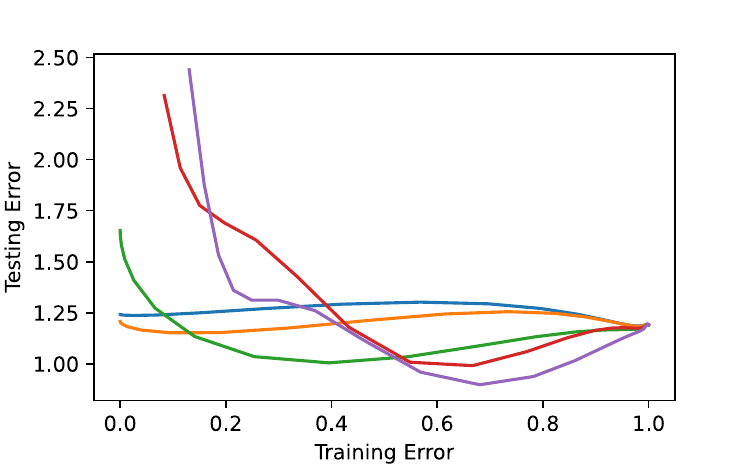}
~
~
~
\includegraphics[width = 0.42\textwidth]{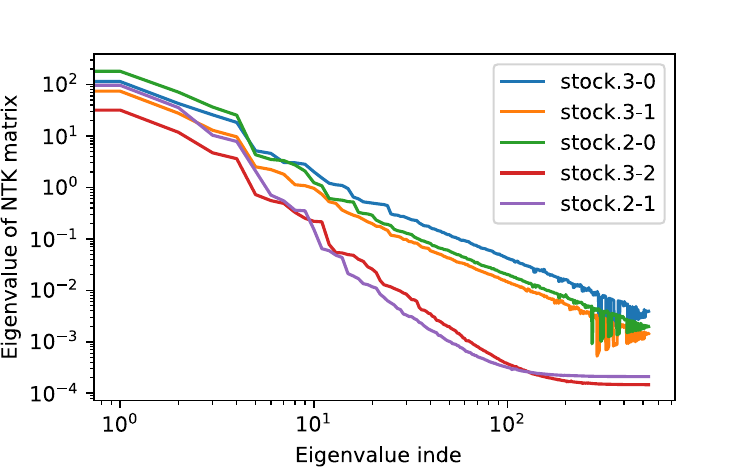}
\caption{\emph{Left}. Training/testing error trade-off on the ``\texttt{stock}'' dataset from the UCI regression dataset collection using kernel ridge regression with the neural tangent kernel.
Each curve is labeled by ``\texttt{DatasetName.d-f}''
where ``\texttt{d}'' and ``\texttt{f}'' represents the number
of layers and the number of \emph{fixed} layers 
in the NTK corresponding to ReLU networks.
\emph{Right}. The eigenvalue index vs eigenvalue plot of the NTK matrix exhibits power-law spectra. A tiny value is added to the eigenvalues for better visualization on the log-scale.
}\label{figure:stock}
\end{figure*}

\end{document}